\documentclass[11pt]{article}

\usepackage[margin=1in]{geometry}
\usepackage{amsmath}

\usepackage[utf8]{inputenc} % allow utf-8 input
\usepackage[T1]{fontenc}    % use 8-bit T1 fonts
\usepackage[hidelinks]{hyperref}       % hyperlinks
\usepackage{url}            % simple URL typesetting
\usepackage{booktabs}       % professional-quality tables
\usepackage{amsfonts}       % blackboard math symbols
\usepackage{nicefrac}       % compact symbols for 1/2, etc.
\usepackage{microtype}      % microtypography
\usepackage{xcolor}         % colors
\usepackage{notation}
\usetikzlibrary{calc}
\usepackage{titletoc}
\makeatletter
\newcommand{\andbreak}{\end{tabular}\hfil\linebreak[4]\hfil\begin{tabular}[t]{c}\ignorespaces}
\makeatother

\begin{document}
\title{Bayes-Consistent Losses for Adaptive Inference in Classification}
\date{}
\author{Shirley Xiaoqi Liu\thanks{Equal contribution. 
Contact emails:  \href{mailto:shirley.liu@stats.ox.ac.uk}{\texttt{shirley.liu@stats.ox.ac.uk}}, \href{mailto:florence.robert-regol@mail.mcgill.ca}{\texttt{florence.robert-regol@mail.mcgill.ca}},  \\ and \href{mailto:antonios.valkanas@mail.mcgill.ca}{\texttt{antonios.valkanas@mail.mcgill.ca}}.  
This work does not relate to A.\ Valkanas’ role at Amazon.} $^\dagger$
\and Florence Regol$^{*\ddagger}$
\and Antonios Valkanas$^{*\ddagger}$
%\\McGill University 
 \andbreak  Patrick Rebeschini\thanks{University of Oxford} 
\and Mark Coates\thanks{McGill University}
}

\maketitle

\begin{abstract}
Adaptive inference systems balance accuracy and inference cost by routing inputs to models with varying capabilities.   Although jointly training  models  and routers is empirically more sample-efficient than  training them separately,  principled  losses for joint training remain underexplored.
%, and  most existing approaches rely on heuristics. 
We address this gap in a classification setting  by  proposing cost-sensitive hinge   losses for jointly training   
classifiers with   single-stage or cascade  routers. Single-stage routers directly select a model per input, whereas cascade routers sequentially decide at each stage whether to predict with the current model or defer to the next stage, which uses a more  accurate but more costly model. For both settings, we prove  linear calibration bounds   establishing Bayes consistency: minimizing the hinge risk recovers the Bayes-optimal risk under the target cost-sensitive zero-one loss. 
Moreover, since routers can require more labeled data to train than  classifiers, we propose a  novel  post-training prediction-powered inference (PPI) procedure that  uses unlabeled data to refine a trained single-stage router. Experiments on synthetic and real datasets demonstrate that   the proposed hinge losses are effective for joint training and  that PPI  can correct routing errors when labeled data is scarce.
\end{abstract}

\section{Introduction}

Adaptive inference systems allocate  computation on a per-sample basis to balance accuracy and inference cost. For each sample, they decide which   one of several predictive models with different cost-accuracy profiles should be used for prediction. This framework covers  a broad range of methods, including  learning to defer~\citep{Mozannar2020}, model cascades~\citep{chen2024frugalgpt, zhang2024treacle, valkanas2025cpo} and dynamic neural networks (DNNs)~\citep{dnn_survey}, including early-exit systems~\citep{xin2021berxit}. We study adaptive inference in a classification setting that involves two components: an array of classifiers with varying cost-accuracy profiles, and a router module that decides, for each sample, which classifier makes the prediction.

A central challenge  is how to train such systems effectively. Many
approaches rely on \emph{separate training}, where the
classifiers are trained independently and routing is determined by a rule chosen separately, 
%with routing decisions based on heuristic rules 
such as   confidence-score thresholding~\citep{schuster2022confident, Jitkrittum2023}. 
This approach is often sample-inefficient: each  classifier is trained
on the full  sample  space, even though  the trained router  
may   send it    samples from only a small subset of the  space.
This mismatch may lead to suboptimal results~\citep{Jitkrittum2023} and hinder the training process~\citep{kubaty2025how}. In contrast, \emph{joint training} aligns   classifier and  router objectives, encouraging each classifier to specialize in  samples routed to it. Empirically, joint training substantially improves prediction accuracy at a fixed sample size, including with losses designed to align training with test-time routing
%even with heuristic losses
~\citep{han2022Weighted, boostedNet, regol2024jointlylearned, kubaty2025how}. These results motivate the design of 
%principled 
  joint-training losses with consistency guarantees, which are currently scarce.

\begin{figure*}[t]
\centering
\begin{minipage}[t]{0.42\linewidth}
\centering
\begin{tikzpicture}[
    >=Stealth,
    node distance=0.5cm,
    inner sep=1.5pt,
    minimum height=0.38cm,
    minimum width=0.9cm,
    font=\small,
    baseline=(F1.north)
]

\node (F1) [draw, rectangle, rounded corners=1pt] {$f_1(X, Z_1)$};
\node (F2) [draw, rectangle, rounded corners=1pt, right=0.5cm of F1] {$f_2(X,Z_2)$};
\node (dots) [right=0.1cm of F2] {$\dots$};
\node (FT) [draw, rectangle, rounded corners=1pt, right=0.1cm of dots] {$f_T(X,Z_T)$};

\node (R) [draw, rectangle, rounded corners=1pt, densely dashed,
           align=center, line width=0.02cm, color=BrickRed]
      at ($(F1)!0.5!(FT) + (0,-0.9)$)
      {Flat router $r(X)$};

\draw[->, bend left=12]  (R.north west) to (F1.south);
\draw[->]                (R.north)      to (F2.south);
\draw[->, bend right=12] (R.north east) to (FT.south);

\end{tikzpicture}
\end{minipage}
\hfill
\begin{minipage}[t]{0.48
\linewidth}
\centering
\begin{tikzpicture}[
    >=Stealth,
    node distance=0.5cm,
    inner sep=1.5pt,
    font=\small,
    minimum height=0.38cm,
    minimum width=0.9cm,
    baseline=(cF1.north)
]

\node (cF1) [draw, rectangle, rounded corners=1pt] {$f_1(X, Z_1)$};
\node (D1)  [draw, rectangle, rounded corners=1pt, below=of cF1] {$d_1(X,Z_1)$};

\node (cF2) [draw, rectangle, rounded corners=1pt, right=0.9cm of cF1] {$f_2(X,Z_2)$};
\node (D2)  [draw, rectangle, rounded corners=1pt, below=of cF2] {$d_2(X,Z_{1:2})$};

\node (cdots) [right=0.65cm of cF2] {$\dots$};
\node (ddots) [below=of cdots] {$\dots$};

\node (cFT) [draw, rectangle, rounded corners=1pt, right=0.1cm of cdots] {$f_T(X,Z_T)$};
\node (DT)  [draw, rectangle, rounded corners=1pt, below=of cFT] {$d_T(X,Z_{1:T})$};

\draw[->] (D1.north) -- node[left, xshift=0.1cm, yshift=0.1cm] {\scriptsize exit} (cF1.south);
\draw[->] (D2.north) -- node[left, xshift=0.1cm, yshift=0.1cm] {\scriptsize exit} (cF2.south);
\draw[->] (DT.north) -- node[left, xshift=0.1cm, yshift=0.1cm] {\scriptsize exit} (cFT.south);

\draw[->, bend left=12] (D1.east) to node[above] {\scriptsize defer} (D2.west);
\draw[->, bend left=12] (D2.east) to node[above, xshift=0.1cm] {\scriptsize defer} (3.9,-0.95);

\draw[densely dashed, line width=0.02cm, color=BrickRed]
      ($(D1.north west)+(-0.18,0.16)$) rectangle
      ($(DT.south east)+(0.18,-0.38)$);

\node[color=BrickRed, align=center]
      at ($(D1)!0.5!(DT) + (0,-0.4)$)
      {\small Cascade router $r(X,Z_{1:T})$};
\end{tikzpicture}
\end{minipage}
\vspace{-0.08cm}
\caption{Illustration of flat (left) and cascade (right) routing models. A flat router selects a single classifier $f_t$ based on $X$, while a cascade router sequentially invokes classifiers $f_1,\dots, f_T$ until the first exit decision is made based on $(X,Z_{1:t})$, or until the last classifier $f_T$ is reached. 
}

\label{fig:architecture}
\end{figure*}

We address this gap in adaptive inference for classification under 
two common routing models, as illustrated in
Fig.~\ref{fig:architecture}, considering $T$ classifiers. Each sample has a base feature $X$, available from the start, and additional features $Z_1, \dots, Z_T$, where $Z_t$ is revealed once the $t$-th classifier is reached. In the single-stage (flat) model,   the router directly selects a
classifier $t$  based on   $X$ alone, and the classifier then predicts from $(X, Z_t)$. This model covers certain configurations
of elastic networks~\citep{devvrit2023matformer}, learning to defer to
multiple experts~\citep{Verma2023}, and width-adaptive
DNNs~\citep{dnn_survey}. 
In the multi-stage (cascade)
  model,  classifiers are invoked sequentially:  at each stage $t$, the router uses the accumulated features $(X, Z_{1:t})$ to either exit with the $t$-th  classifier's prediction or defer to the next stage, which uses  a more
powerful classifier. We assume that each classifier $t$ remains lightweight by predicting from $(X, Z_t)$ alone. 
This model covers
cascades~\citep{chen2024frugalgpt, zhang2024treacle, valkanas2025cpo}, early-exit 
systems~\citep{xin2021berxit},  and, more generally, depth-adaptive DNNs~\citep{dnn_survey}.

In our LLM experiments, for example, $Z_1, \dots, Z_T$ correspond to embeddings of the answers produced by pretrained LLMs of increasing size, which serve as inputs to the classifiers
(see Appendix~\ref{app:discussion_Z} for further justification of this modeling choice). 
Accessing $Z_t$ incurs a known cost  $\kappa_t$, e.g., the cost of running the $t$-th LLM. Such systems are typically designed so that  $\kappa_t$ varies with $t$ and is non-decreasing in $t$.  Due to the heterogeneous costs, 
jointly training the classifiers and router   requires a \emph{cost-sensitive} loss that balances the classification errors, measured by the zero-one  loss, against    the costs. We propose two such losses,~one~for~each~routing~model.

\subsection{Main Contributions}

\begin{itemize}[leftmargin=*]
    \item   
    We
propose  hinge-based    surrogates of the cost-sensitive zero-one loss for jointly training classifiers
and routers in both flat and cascade   models. In both cases, we prove   linear   calibration bounds (Theorems~\ref{thm:calibration} and \ref{thm:calibration_casc}) that control the   excess  target risk by the excess hinge risk,  thereby  establishing  Bayes consistency of the surrogate losses. This linear rate is sharp~\citep{Frongillo2021}. These results place joint training on a principled theoretical foundation. We validate  our theory  and demonstrate its practical utility through  experiments on synthetic and real datasets.

\item A key technical contribution is the calibration bound for the  cascade hinge loss. It is challenging to establish  because the exit stage  is a \emph{stopping time} induced by a sequence of   adaptive deferral decisions. We address this via a  dynamic programming approach~\citep{bertsekas2005dynamic,
  shiryaev2008optimal}, applying  backward induction to \emph{recursively} establish stage-wise calibration through a novel risk decomposition that separates   excess risks into  contributions from   suboptimal deferral rules  and suboptimal classifiers. A matching lower bound (Proposition~\ref{prop:LB}) shows that the resulting calibration constant is tight.

\item Since routers can require more labeled data to train than classifiers, we also propose a novel post-training prediction-powered inference (PPI) procedure that leverages \emph{unlabeled} data to refine a trained flat router.  At each input,   PPI   recovers the oracle routing decision  with high probability, whenever the width of its confidence interval on risk improvement is narrower than the gap between the oracle and the second-best routing decisions (Theorem~\ref{thm:PPI}). This constitutes the first integration of PPI into adaptive inference.

\end{itemize}

\subsection{Related Work}

Learning to defer allows a model to either predict with a learned classifier or   defer to one or more experts with heterogeneous  capabilities and costs \citep{Madras2018, Keswani2021, Wilder2021}. Here, \emph{only} the  classifier and the deferral rule  are learned; the experts are fixed.  
Building on   foundational works on Bayes consistency \citep{Bartlett2006, Tewari2007, steinwart2007,zhang2004statistical}, a line of  works develops Bayes-consistent surrogate losses with a single expert~\citep{Mozannar2020, Verma2022a} or with multiple experts~\citep{Verma2023}. 
For the single-expert setting, later works establish realizable $\mathcal{H}$-consistency, a  guarantee relative to a restricted hypothesis class~\citep{Mozannar2023, mao2024a}, where~\citet{mao2024a} further identify surrogates that are simultaneously realizable $\mathcal{H}$-consistent, Bayes-consistent, and admit $\mathcal{H}$-consistency bounds~\citep{Awasthi2022}. Closer to our multi-model setting, \citet{mao2024b} and \citet{mao2023} prove $\mathcal{H}$-consistency bounds in the   multiple-expert setting. \citet{mao2024b} consider the standard setting in which the classifier and the deferral rule are learned jointly, while~\citet{mao2023} consider a two-stage setting where the classifier is first trained then fixed, and the deferral rule is learned in a second stage. Finally,~\citet{mao2025mastering} establish realizable $\mathcal{H}$-consistency, $\mathcal{H}$-consistency bounds, and Bayes consistency for learning to defer to multiple experts, in both the standard and two-stage settings.

In contrast to learning to defer, which learns a single classifier,~\citet{regol2025acquisition} and~\citet{pugnana2025} recently studied a setting where \emph{all} classifiers involved are  learned:   they study joint training of \emph{two} classifiers  with a deferral rule (or router), 
and their analysis is restricted to this setting. Since adaptive inference systems often involve more than two models~\citep{chen2024frugalgpt, zhang2024treacle, valkanas2025cpo}, we address this limitation by developing principled losses for jointly training \emph{multiple} classifiers with the router under both  flat and cascade models. As our work is primarily theoretical, we defer discussion of the rich literature on practical architectures and empirical performance in adaptive inference to  Appendix~\ref{app:review}.

\subsection{Notation} 
We use $x_i=[x]_i$ to denote the $i$-th entry of vector $x$. We write $[n]\coloneqq \{1,2, \dots, n\}$, and for $i,j\in\mathbb{N}$ with $i\le j$,  we use the shorthand $a_{i:j}\coloneqq \{a_i, a_{i+1},\dots,  a_j\}$ for sequences of variables or functions. 
We denote the $n$-dimensional probability simplex by $\Delta^{n-1}$. 
We use uppercase letters for random variables, lowercase for their realizations (e.g., $X$ and $x$), and calligraphic letters to denote spaces of realizations (e.g., $x\in \mc{X}$). 

\section{Multi-Model Classification} \label{sec:setup}
 We study a multi-model classification problem, where the goal is to learn a tuple $\pi=(f_{1:T},r)$, consisting of  $T\geq 2$ classifiers $f_{1:T}$ and a router $r$.  Each classifier $f_t$ takes the   form:
\begin{align}
f_t : \mathcal X\times\mathcal Z_t \to [K], \quad \text{for}\quad t\in[T]\,,\label{eq:f_ell_def}
\end{align}
where  $\mc{X}\subset \R^p$ denotes the space of the  base features (e.g., a multiple-choice question  prompt),   $[K]$ is the label space for $K$  classes (e.g., the   answer choices), and   $ \mathcal Z_t\subset \R^{p_t}$ denotes the space of the  additional feature  (e.g., LLM embeddings) available at $f_t$.  
Accessing $Z_t$ incurs a  known nonnegative
cost $\kappa_t$, with $\kappa_t\!\ge\! \kappa_{t-1}$, reflecting that later classifiers are more accurate but more costly.\footnote{Our results can be extended to  cost functions depending on $X$,  provided the cost function is known, e.g., per-token cost in an LLM where  prompt length   varies.}  
The routing rule takes the form
\begin{align}
    r: \mc{H}\to [T] , \label{eq:def_r}
\end{align}
which maps the   history information  $\mc{H} \subseteq \mc{X}\times \prod_{t=1}^T \mc{Z}_t $   available to the router to a classifier index.\footnote{The cost of executing $r$, if non-negligible, can   be added  to the costs.} The space $\mc{H}$ varies with the router type, as we specify in Section~\ref{sec:two_routers_def}.  We denote  by $\Pi$ the class of all   multi-model classifiers $\pi=(f_{1:T}, r)$ satisfying \eqref{eq:f_ell_def} and \eqref{eq:def_r}.

For a given  $h\in\mc{H}$, the prediction $\hat{y}$ of  $\pi=(f_{1:T}, r)$ is   the output of the classifier selected by the router:
\begin{align}
    \hat{y} = f_{r(h)} (x, z_{r(h)})\,. 
\end{align}
\paragraph{Training data.} To train $\pi\in \Pi$, we use  $n$  training samples generated i.i.d.\ according to
\begin{align}\label{eq:def_P}
X^{(i)},  Z_{1:T}^{(i)}, Y^{(i)}\stackrel{\text{i.i.d.}}{\sim} P\quad \text{for} \quad i\in[n],
\end{align}
where $X^{(i)}\!\in\!\mc{X}$ and $Z_{t}^{(i)}\!\in\! \mc{Z}_t$ for $t\!\in\![T]$ are     features,   $Y^{(i)}\!\in\![K]$  is the  label, and $P$ is the joint distribution over $(X^{(i)}, Z_{1:T}^{(i)}, Y^{(i)})$.
We drop the superscript $^{(i)}$ for a generic sample.  In some cascade  applications, it is natural to assume the conditional independence \begin{align}\label{eq:conditional_indep}
        Y\perp Z_{1:t-1}\mid (X, Z_t), 
        %\;  Z_{t+1:T}\perp Z_{1:t-1}\mid (X, Z_t),
    \end{align}
    where    $(X,Z_t)$   subsumes all 
    information in  $Z_{1:t-1}$ relevant to predicting $Y$. We now formalize two routing models and their corresponding  multi-model classifiers.

\subsection{Single-Stage and Cascade Routing Models }\label{sec:two_routers_def}
The  single-stage router directly selects one of the $T$ classifiers using \emph{only} the base feature, formalized as:
\begin{definition}[Single-stage (flat) router]\label{def:flat_H}
With history information  $\mc{H}=\mc{X}$, a flat router takes  the form
\begin{align}
    r: \mc{X}\to [T].\label{eq:flat_H}
\end{align}
\end{definition}
 The multi-stage cascade router  sequentially decides, at each stage $t$, whether to exit with classifier $f_t$ or defer to the  more powerful classifier $f_{t+1}$ after collecting additional features, as formalized below: 
 
\begin{definition}[Cascade router]\label{def:cascade_H}
With history information $\mc{H}_t=\mc{X}\times \prod_{\tau=1}^t \mc{Z}_\tau$ available at stage  $t\in[T]$,   
  a cascade router 
consists of stage-wise deferral rules
\begin{align}\label{eq:d_ell_def}
&d_t: \mc{H}_t \to  \{0,1\}\quad \text{for}\quad t\in [T-1],
\end{align}
where  $d_t(h_t)=0$ indicates exit at stage  $t$ and  $d_t(h_t)=1$  indicates deferral to the next stage. We fix $d_T\equiv 0$, so the cascade always terminates at the final stage $T$. Writing $\mc{H}=\mc{H}_T$ for the full history, the \emph{stopping time} induced by the deferral rules is the cascade routing rule
\begin{align}\label{eq:def_r_casc}
    r(h)\in \min\{t\in [T]: d_t(h_t)=0\}.
\end{align}
\end{definition}
Restricting the class $\Pi$ of multi-model classifiers  to flat or cascade routers (Definitions~\ref{def:flat_H} or \ref{def:cascade_H}) yields the hypothesis classes $\Pi_\Flat\!\subset\! \Pi$  and  $\Pi_\casc\!\subset\! \Pi$, respectively, which are the focus of this work.

\subsection{The Target Cost-Sensitive Zero-One Loss}\label{sec:loss_func}
To capture the tradeoff between prediction error and   cost, we follow prior work on cost-sensitive classification and learning to defer~\citep{cao2022generalizing, mao2024b} and augment the standard zero-one loss with the costs. This yields  the \emph{$\mathit{0}$\!-$\mathit{1}$\!-$\mathit{c}$ loss}, denoted by $\ell: \mc{X}\times \prod_{t=1}^T\mc{Z}_t\times [K]\times \Pi\times \R^T_{\ge0}\to \R_{\ge0}$. For a multi-model classifier $\pi=(f_{1:T}, r)\in \Pi$   and a  cost vector $c\in \R^T_{\ge0}$ associated with $f_{1:T}$, we define
\begin{align}
\ell(x,z_{1:T},y;    \pi,c)
=
\Ind\!\big\{f_{r(h)}(x,z_{r(h)}) \neq y\big\} + c_{r(h)} \,.
\label{eq:multi_01c}
\end{align}
That is, an incorrect prediction incurs a unit loss, while exiting  at the $t$-th classifier additionally incurs a cost~$c_t$. In  \eqref{eq:multi_01c}, the  semicolon separates the random data arguments from the fixed multi-model classifier $\pi$ and cost vector $c$. The corresponding  $0$-$1$-$c$ population risk is then
\begin{align}\label{eq:def_R(f;c)}
\mc{R}(\pi)\equiv
\mc{R}(\pi; c)
&:=
\ex\big[\ell(X,Z_{1:T},Y;\pi, c)\big].
\end{align}
The form of $c$ differs between   the two routing models.  For the single-stage router,   $c=[\kappa_1, \kappa_2, \dots, \kappa_T]^\top$, where recall $\kappa_t$ denotes the cost of executing only the $t$-th classifier. In a cascade, since the classifiers are invoked sequentially, costs accumulate along the execution path, yielding $c=\big[\kappa_1, \kappa_1+\kappa_2, \dots, \sum_{t=1}^T\kappa_t\big]^\top$.

\section{Main Results}
\label{sec:main_results}
This section develops surrogate losses for the target $0$-$1$-$c$ loss  \eqref{eq:multi_01c}, one for the flat model and one for the cascade model. Throughout, we use a tilde $\widetilde{\cdot}$ to denote soft (score-based or probabilistic) predictors, reserving untilded notation for their hard, discrete counterparts. 
We introduce two \emph{soft} multi-model-classifier classes $\widetilde{\Pi}_\Flat$ and $\wt{\Pi}_\casc$ in parallel to $\Pi_\Flat$ and $\Pi_\casc$. 
Each element $\tpi$ of these classes  is composed of a sequence of score functions $\tilde{f}_{1:T}$
%, which output real-valued scores, 
and  a soft router $\tilde{r}$ that outputs a probability distribution over the $T$ classifiers. Formally, we define the class $\wt{\Pi}_\Flat$   as: 
\begin{align}
    &\wt{\Pi}_\Flat=\Big\lbrace \tilde{\pi}=(\tilde{f}_{1:T}, \tilde{r}): \tilde{f}_t:\mc{X}\times \mc{Z}_t \to \R^{K}\,\text{with}\,\nonumber\\
    &\sum_{k\in[K]}[\tilde{f}_t(x, z_t)]_k=0\; \forall  (x, z_t),\;\text{and}\;\tilde{r}: \mc{X}  \to \Delta^{T-1}\Big\rbrace. \label{eq:flat_space}
\end{align}
Each $\tilde{\pi}=(\tf_{1:T}, \tr)\in \wt{\Pi}_\Flat$ induces a hard multi-model classifier $\pi=(f_{1:T}, r)=\varphi_\Flat(\tilde{\pi})\in \Pi_\Flat$ via a   deterministic argmax map $\varphi_\Flat: \wt{\Pi}_\Flat\to \Pi_\Flat$, defined by the componentwise relations: 
\begin{align}\label{eq:hard_ft_from_soft_ft}
f_t(x,z_t) &\in  \argmax_{k\in[K]} [\tilde{f}_t(x, z_t)]_k,
\quad \forall t\in[T],
\\
r(x) &\in \argmax_{t\in[T]} \tilde{r}_t(x),\; \;\text{with}\quad  \tilde{r}_t(x)= [\tilde{r}(x)]_t\,.\label{eq:hard_r_from_soft_r}
\end{align}
For the cascade, we define the class  $\wt{\Pi}_\casc$ and the map $ \varphi_{\casc}: \wt{\Pi}_\casc\to  \Pi_\casc$ analogously. The soft router~$\tr$ consists of soft deferral rules   $\td_{t}: \mc{H}_t\to [0,1]$ for $t\in[T-1]$ with $\td_T\equiv0$, and the map $ \varphi_{\casc}$   induces the hard, thresholding router $r$ which replaces \eqref{eq:hard_r_from_soft_r} and mirrors the structure of \eqref{eq:def_r_casc}, taking the form 
\begin{align}\label{eq:hard_r_casc}
     r(h)\in  \min\{t\in [T]: \tilde{d}_t(h_t)<0.5\}. 
\end{align}
Our goal is to  design, for each $\model\in\{\Flat, \casc\} $, a  surrogate loss defined over the soft  class $ \wt\Pi_\model$ such that driving the surrogate risk of $\tpi \in \wt\Pi_\model$ to its minimum drives the target $0$-$1$-$c$ risk~\eqref{eq:def_R(f;c)} of the induced hard multi-model classifier to its minimum over the hard class $\Pi_{\model}$. In particular, any minimizer $\tpi_\model^*$ of the surrogate risk induces a minimizer $ \pi_\model^* \!=\!\varphi_\model(\tpi_\model^*)$ of the target risk. 
This desired property is Bayes consistency. 
In the sequel, we drop  subscript from $\varphi_\model$ when the model~is~clear~from~context.

\paragraph{Cost-sensitive hinge loss.}  Let $[x]_+\coloneqq \max(x,0)$. We propose a cost-sensitive hinge loss  $\ell^\hinge: \R^K\times [K]\times \R_{\ge0}\to \R_{\ge0}$, which  acts on the score vector $v $ output by a given  soft classifier (e.g., $\tf_t$), the true label $y$, and the cost $c $ of the classifier as follows:
\begin{align}\label{eq:stage_wise_surrogate}
     \ell^\hinge(v, y; c)=
\sum_{k\in[K]\setminus y}
\Big[v_k + \tfrac{1}{K-1}\Big]_+
+ \tfrac{K}{K-1}c\,.
\end{align}
This loss is a cost-sensitive extension of the multiclass hinge loss from~\citep{tarigan2008} and is also used for building the surrogate loss in~\citep{regol2025acquisition} for jointly training  two classifiers. We use it in Sections~\ref{subsec:flat_main_results} and \ref{subsec:cascade_main_results} to construct   surrogates of the target loss \eqref{eq:multi_01c} for joint training  in $\wt\Pi_\Flat$ and $\wt\Pi_\casc$.

\subsection{Bayes-Consistent Loss for the Flat Model}\label{subsec:flat_main_results}
 For training $\tpi =(\tf_{1:T}, \tr)\in\wt{\Pi}_\Flat$ in the flat setting, we construct the surrogate loss $\ell^\hinge_\Flat: \mc{X}\times \prod_{t=1}^T\mc{Z}_t\times [K]\times \wt{\Pi}_\Flat\times \R^T_{\ge0}\to \R_{\ge0}$ as a  weighted sum  of  the hinge loss  $\ell^{\hinge}$   across all classifiers: 
\begin{align}\label{eq:surrogate_hinge_flat}
&\ell^\hinge_\Flat\big(x,z_{1:T},y; \tpi, c\big)
=\sum_{t=1}^T
\tilde{r}_t(x) \ell^\hinge\big(\tilde{f}_t(x, z_t), y; c_t\big),
\end{align}
with the  associated risk $\mc{R}^\hinge_\Flat(\tpi)\equiv \mc{R}^{\hinge}_\Flat\big(\tpi; c\big)
:=
\ex\big[
\ell^{\hinge}_\Flat\big(X, Z_{1:T}, Y; \tpi, c\big)
\big].$ The calibration bound below shows  that the     excess $0$-$1$-$c$ risk is linearly bounded by the excess hinge risk in the flat setting.

\begin{restatable}[Linear calibration of $\ell^\hinge_\Flat$
w.r.t.~$\ell$]
{theorem}{excessriskflat}
\label{thm:calibration}
Define $
\mc{R}^*_\Flat
\!:=\!
\min_{\pi\in \Pi_{\Flat}} \mc{R}(\pi)$ and $ \mc{R}^{\hinge*}_\Flat
\!:=\!
\min_{\tilde{\pi}\in \wt{\Pi}_{\Flat}} \mc{R}^\hinge_\Flat(\tilde{\pi})$.
Then, for any $\tilde{\pi}\in \wt{\Pi}_\Flat$\,,
\begin{align}
\mc{R}\big(\varphi(\tilde{\pi})\big)\! -\! \mc{R}^*_\Flat
\le
 T(K\!-\!1)
\Big(
\mc{R}^{\hinge}_\Flat(\tilde{\pi}) \!-\! \mc{R}^{\hinge*}_\Flat
\Big).
\label{eq:flat_calibration}
\end{align}
\end{restatable}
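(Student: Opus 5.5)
The plan is to reduce the statement to a pointwise (conditional) calibration inequality at each fixed base feature $x$, and then integrate over $X$. Both $\mc{R}(\varphi(\tpi))$ and $\mc{R}^\hinge_\Flat(\tpi)$ decompose as $\ex_X[\cdot]$ of conditional risks. Moreover, both hard and soft classes are unrestricted: $f_t$ can be chosen freely for each $(x,z_t)$, and $r,\tr$ freely for each $x$. Hence both minimal risks are attained by pointwise minimization, and it suffices to prove, for each $x$,
\begin{align*}
C(x;\varphi(\tpi)) - C^*(x) \le T(K-1)\big(C^\hinge(x;\tpi) - C^{\hinge*}(x)\big).
\end{align*}
Here $C$ and $C^\hinge$ are the conditional risks given $X=x$.

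\textbf{Step 1: the inner layer, conditioning on $(x,z_t)$.} Let $\eta_t(x,z_t)\in\Delta^{K-1}$ be the conditional label distribution given $(X,Z_t)=(x,z_t)$. The conditional hinge risk of $\tf_t$ is
\begin{align*}
\sum_k (1-\eta_k)[v_k+\tfrac{1}{K-1}]_+ + \tfrac{K}{K-1}c_t.
\end{align*}
I will invoke the known Tarigan--van de Geer / Lee--Lin--Wahba computation for this sum-to-zero hinge. Its minimum equals $\tfrac{K}{K-1}(1-\max_k\eta_k)$, attained at $v_{k^*}=1$ and $v_k=-\tfrac{1}{K-1}$ otherwise. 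It also satisfies the classifier-level calibration bound
\begin{align*}
\big(1-\eta_{\mathrm{argmax}\, v}\big) - \big(1-\max_k \eta_k\big) \le (K-1)\cdot(\text{excess conditional hinge}),
\end{align*}
possibly after rescaling, which I would re-derive directly. The $\tfrac{K}{K-1}$ scaling of the cost is chosen precisely so that the minimal hinge risk of classifier $t$ equals $\tfrac{K}{K-1}\,(\text{minimal } 0\text{-}1\text{-}c \text{ risk of } t)$. Averaging over $Z_t\mid X=x$ then gives, for each $t$, quantities $a_t(x)=\ex[\Ind\{f_t\ne Y\}\mid x]+c_t$ and $a^*_t(x)$ (the best achievable), and likewise $b_t(x)$ and $b^*_t(x)=\tfrac{K}{K-1}a^*_t(x)$, with $a_t-a_t^*\le (K-1)(b_t-b_t^*)$.

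\textbf{Step 2: the router layer.} Conditionally on $x$, the target risk is $a_{r(x)}$ with $r(x)\in\argmax_t\tr_t(x)$, and $C^*=\min_t a^*_t$. The surrogate is $\sum_t \tr_t b_t$ with $C^{\hinge*}=\min_t b^*_t=\tfrac{K}{K-1}\min_t a^*_t$. Write $t_0=r(x)$ and $t^\star=\argmin_t a^*_t$. Then
\begin{align*}
a_{t_0}-a^*_{t^\star} = (a_{t_0}-a^*_{t_0}) + (a^*_{t_0}-a^*_{t^\star}).
\end{align*}
Because $\tr_{t_0}\ge 1/T$ is the largest weight, the surrogate excess is
\begin{align*}
\sum_t \tr_t b_t - b^*_{t^\star} \ge \tr_{t_0}\big[(b_{t_0}-b^*_{t_0}) + (b^*_{t_0}-b^*_{t^\star})\big] \ge \tfrac1T\big[\cdots\big],
\end{align*}
using that every term $b_t-b^*_{t^\star}\ge 0$. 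Both bracketed differences dominate the corresponding $a$-differences up to the factor $(K-1)$; for the second, the factor is $\tfrac{K-1}{K}\le K-1$. Combining yields the constant $T(K-1)$, and integrating over $X$ finishes the proof.

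The main obstacle is Step 1. I need the exact form of the multiclass hinge calibration with the sum-to-zero constraint, in particular that the excess $0$-$1$ conditional risk is at most $(K-1)$ times the excess hinge conditional risk, uniformly over $\eta$. I would prove it by lower-bounding the hinge risk subject to the constraint that the argmax coordinate is a fixed $j$, via a linear-programming argument over $v$ restricted to $v_k\ge -\tfrac{1}{K-1}$. I would also check carefully that the argmax tie-breaking and the $\argmax$ in $\varphi$ cause no loss.
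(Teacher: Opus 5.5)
Your proposal is correct and matches the paper's proof essentially step for step. The shared ingredients are the pointwise reduction in $x$, the split of the excess into an internal term $a_{t_0}-a^*_{t_0}$ and a selection term $a^*_{t_0}-a^*_{t^\star}$ (the latter carrying the factor $\tfrac{K-1}{K}\le K-1$), and the bound $\tilde{r}_{t_0}(x)\ge 1/T$. The last shared ingredient is the stage-wise bound, which holds with constant exactly $K-1$ (no rescaling needed); the paper proves it in Lemma~\ref{lem:K_mistake} from $L(v)\ge U(1-\max_y p_y)+[v_j+\tfrac{1}{K-1}]_+(\max_y p_y-p_j)$, together with $U\ge\tfrac{K}{K-1}$ and $[v_j+\tfrac{1}{K-1}]_+\ge\tfrac{1}{K-1}$.

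Your LP route for that step also works. Rather than restricting to $v_k\ge-\tfrac{1}{K-1}$, which needs a separate justification under $\sum_k v_k=0$, it is cleaner to pass to $w_k=[v_k+\tfrac{1}{K-1}]_+$. These satisfy $w\ge 0$, $\sum_k w_k\ge\tfrac{K}{K-1}$ and $w_j=\max_k w_k$, so the minimum of $\sum_k(1-p_k)w_k$ is attained at $w=\tfrac{K}{(K-1)|S|}\mathbf{1}_S$ for some $S\ni j$. The claim then reduces to $\tfrac{K}{|S|}\sum_{k\in S}(\max_y p_y-p_k)\ge \max_y p_y-p_j$, which is immediate.
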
 
This  implies   
$\mc{R}^{\hinge}_\Flat(\tilde{\pi})\!\to\! \mc{R}^{\hinge*}_\Flat
\!\Rightarrow\!
\mc{R}\big(\varphi(\tilde{\pi})\big)\!\to \!\mc{R}^*_\Flat$, establishing Bayes consistency of the surrogate loss $\ell_\Flat^\hinge$. We detail the proof in Appendix~\ref{app:flat_all}. 

\subsection{Bayes-Consistent Loss for  the Cascade }\label{subsec:cascade_main_results}
For training $\tpi =(\tf_{1:T}, \tr)\in \wt{\Pi}_\casc$, we  build the cascade surrogate hinge loss $\ell^\hinge_\casc: \mc{X}\times \prod_{t=1}^T\mc{Z}_t\times [K]\times \wt{\Pi}_\casc\times \R^T_{\ge0}\to \R_{\ge0}$ as follows:
\begin{align}\label{eq:surrogate_hinge_casc}
&\ell^\hinge_\casc(x, z_{1:T},y;\tpi, c) =\sum_{t=1}^T\tr_t(x,z_{1:t})\ell^\hinge\big(\tf_t(x,z_{t}),y; c_t\big) ,
\end{align}
where 
\begin{align}\label{eq:r_tilde_t_def}
    \tr_t(x,z_{1:t})&= 
\Bigg(\prod_{\tau=1}^{t-1} \td_\tau(x,z_{1:\tau})\Bigg)
\big(1 - \tilde{d}_t(x,z_{1:t})\big),
\end{align}  
 and $\tilde{d}_T\equiv 0$.
The outer sum in \eqref{eq:surrogate_hinge_casc} runs over exit stages $t\in[T]$.  The $\tr_t(x,z_{1:t})$ term represents the probability of exiting at stage $t$, 
and the  $\ell^\hinge$ term is the  hinge loss incurred upon exiting at that stage. We define the  risk  associated with the   loss function $\ell_\casc^\hinge$ as  $\mc{R}^\hinge_\casc(\tpi)\equiv \mc{R}_\casc^\hinge(\tpi; c):=\E\big[\ell_\casc^\hinge(X,Z_{1:T}, Y;\tpi, c)\big]$.

The cascade setting is technically more challenging to analyze than the flat setting because  adaptive deferral decisions induce the stopping time  in \eqref{eq:hard_r_casc}, which  indexes the  classifier ultimately invoked. As a result, the optimal deferral decision at each stage must account for both the immediate loss and the  expected future loss. To prove that $\ell_\casc^\hinge$ is calibrated in an analogous sense to Theorem~\ref{thm:calibration} of the flat setting, we cast the cascade as a finite-horizon dynamic program and use backward induction with stage-wise risk decomposition to \emph{recursively} propagate single-stage calibration through the cascade, ultimately yielding  
Theorem~\ref{thm:calibration_casc} below. 
\begin{restatable}[Linear calibration of $\ell^\hinge_\casc$~
w.r.t.~$\ell$]{theorem}{excessriskcasc}
\label{thm:calibration_casc} Assume  $Y\!\perp\! Z_{1:t-1}\,|\, (X, Z_t)$ as in \eqref{eq:conditional_indep}.
 Define $
\mc{R}^*_\casc
\!:=\!
\min_{ \pi \in \Pi_{\casc}} \mc{R}(\pi)$ and  $\mc{R}^{\hinge*}_\casc
\!:=\!
\min_{\tpi \in \wt{\Pi}_{\casc}} \mc{R}^\hinge_\casc(\tpi)$.
Then, for any $\tpi \in \wt{\Pi}_\casc$\,,
\begin{align}
\mc{R}\big(\varphi(\tpi)\big) \!- \!\mc{R}^*_\casc
\le
 2^{T-1}(K\!-\!1) 
\Big(
\mc{R}^{\hinge}_\casc(\tpi)\! -\! \mc{R}^{\hinge*}_\casc
\Big).
\label{eq:casc_calibration}
\end{align}

\end{restatable}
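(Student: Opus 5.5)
We prove Theorem~\ref{thm:calibration_casc} by casting the cascade as a finite-horizon dynamic program and running backward induction over the stages. The key tool is a stage-wise calibration bound for the single-classifier hinge loss~\eqref{eq:stage_wise_surrogate}.

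\textbf{Pointwise single-classifier bound.} Fix a history $h_t$ and let $p_t(x,z_t)=P(Y=\cdot\mid X=x,Z_t=z_t)$. By~\eqref{eq:conditional_indep}, this is also the law of $Y$ given $h_t$. The conditional hinge risk $\E[\ell^\hinge(v,Y;c)\mid h_t]$ is piecewise linear in $v$ over the sum-zero constraint set. Following~\citep{tarigan2008}, its minimum equals $\tfrac{K}{K-1}(1-\max_k p_k+c)$, attained at the vertex with $v_{k^*}=1$ and $v_k=-\tfrac1{K-1}$ for $k\neq k^*$. Moreover, for $f=\argmax_k v_k$,
\[
\tfrac{K}{K-1}\big(\max_k p_k-p_f\big)\le (\text{conditional hinge excess}).
\]
We will derive this as in the flat case of Theorem~\ref{thm:calibration}, which we take as established. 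In particular, scaling by $\tfrac{K}{K-1}$ turns the optimal conditional hinge value into $\tfrac{K}{K-1}$ times the optimal $0$-$1$-$c$ value, and the classifier error is controlled with constant $K-1$.

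\textbf{Backward induction and risk decomposition.} Define the optimal cost-to-go of the hard problem, $V_T(h_T)=1-\max_k p_T+c_T$ and
\[
V_t(h_t)=\min\big\{1-\max_k p_t+c_t,\ \E[V_{t+1}(H_{t+1})\mid h_t]\big\}.
\]
Define the surrogate analogue $V^\hinge_t$, which by the pointwise result equals $\tfrac{K}{K-1}V_t$ once stages $t+1{:}T$ are optimal. The soft continuation risk is a convex combination $(1-\td_t)\,(\text{exit hinge})+\td_t\,(\text{continue hinge})$, so its minimum is the smaller of the two terms. For a given $\tpi$, let $W_t$ and $W^\hinge_t$ be its expected hard and surrogate losses from stage $t$ onward. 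We split $W_t-V_t$ into three contributions: (i) the stage-$t$ classifier excess when exiting, (ii) the deferral-rule excess from choosing the wrong branch between the true exit and continuation values, and (iii) the downstream excess $\E[W_{t+1}-V_{t+1}\mid h_t]$ weighted by the deferral indicator.

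Term (i) is bounded by $(K-1)$ times the stage-$t$ hinge excess. For term (ii), suppose $\td_t\ge 0.5$ but exit is optimal. Then the surrogate excess is at least $\td_t\,\cdot$(continuation surrogate minus exit surrogate). That difference is at least $\tfrac{K}{K-1}$ times the true gap, minus the downstream surrogate excess, and the factor $\td_t\ge\tfrac12$ costs a factor $2$. The symmetric case $\td_t<0.5$ is analogous.

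\textbf{Main obstacle.} The hard part is term (ii). The observed branch values involve the \emph{current} suboptimal downstream policy, not $V_{t+1}$, so the gap between hard values can be masked by downstream suboptimality. We plan to handle this by comparing against the downstream excess. This gives a recursion of the form
\[
\text{Exc}_t\le 2\,\text{Exc}^\hinge_t\cdot(\text{const}),
\]
where the downstream excess enters with weight $\td_t$ in the surrogate but up to weight $1$ in the hard loss, which loses a factor $2$ per stage. Stage $T$ contributes factor $K-1$, and each of the $T-1$ deferral stages doubles the constant. Taking expectations over $H_1$ and using that $\mc{R}^{\hinge*}_\casc=\tfrac{K}{K-1}\mc{R}^*_\casc$-type identities hold at the optimum yields~\eqref{eq:casc_calibration} with constant $2^{T-1}(K-1)$.
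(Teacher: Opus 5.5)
Your architecture matches the paper's: value functions, backward induction from $C_T=K-1$ with $C_t=2C_{t+1}$, and $V^\hinge_t=\tfrac{K}{K-1}V_t$ at the optimum. Your split (i)--(iii) is exactly the paper's $\textsf{A}+\textsf{B}+\textsf{C}$ in \eqref{eq:half_Delta_ht}, except that it carries the hard weight $d_t$ where the paper has $\td_t$. The paper instead first compares $W_t$ to an oracle $\textsf{m}$ built from the \emph{current} policy's branch values, pays the factor $2$ there via Lemma~\ref{lem:small_ineq}, and then drops $\textsf{m}-V_t\ge 0$. Your direct split would also work and is slightly shorter, but the proposal has two problems.

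First, the displayed single-stage bound is false. The correct statement (Lemma~\ref{lem:K_mistake}) is $\max_k p_k-p_f\le (K-1)\cdot(\text{hinge excess})$. With $\tfrac{K}{K-1}$ on the left, take $p=(\tfrac{1}{K-1},\dots,\tfrac{1}{K-1},0)$ and $v=(-s,\dots,-s,(K-1)s)$. The left side is then $\tfrac{K}{(K-1)^2}$, while the hinge excess is only $s+\tfrac{1}{(K-1)^2}$. The display would also make your base constant $\tfrac{K-1}{K}$ rather than $K-1$.

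Second, and more important, the step you call the main obstacle is left as a plan, and the sketched case analysis for (ii) does not close it. The continuation surrogate value minus the optimal exit surrogate equals $\tfrac{K}{K-1}$ times the true gap \emph{plus}, not minus, the downstream surrogate excess. Moreover, lower-bounding the whole surrogate excess separately for each of (i)--(iii) spends it several times, which pushes the per-stage factor above $2$ and so misses $2^{T-1}(K-1)$.

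The missing idea is an exact term-by-term match. Set $a=\mc{R}_t(h_t;f_t^*)$ and $b=\E[V_{t+1}(H_{t+1})\mid H_t=h_t]$. Then the surrogate excess splits exactly as
\begin{align*}
W^\hinge_t-V^\hinge_t&=(1-\td_t)\big[\mc{R}^\hinge_t(h_t;\tf_t)-\mc{R}^\hinge_t(h_t;\tf^*_t)\big]+\td_t\,\E\big[W^\hinge_{t+1}-V^\hinge_{t+1}\mid H_t=h_t\big]\\
&\quad+\tfrac{K}{K-1}\big[(1-\td_t)a+\td_t b-\min(a,b)\big],
\end{align*}
which is precisely your (i)--(iii) with $d_t$ replaced by $\td_t$.

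Because $d_t=\Ind\{\td_t\ge\tfrac12\}$, you have $d_t\le 2\td_t$ and $1-d_t\le 2(1-\td_t)$. You also have $(1-d)a+db-\min(a,b)\le 2[(1-\td)a+\td b-\min(a,b)]$ for all real $a,b$ (Lemma~\ref{lem:small_ineq}). So the three hard terms are at most $2(K-1)$, $2C_{t+1}$ and $2\tfrac{K-1}{K}$ times their soft counterparts, and all three constants are dominated by $2C_{t+1}$. Since (ii) is built from the optimal branch values and $d_t$ depends only on $\td_t$, no masking issue arises.

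Finally, $d\,x\le 2\td\,x$ needs $x\ge 0$, so you must carry $W_{t+1}\ge V_{t+1}$ and $W^\hinge_{t+1}\ge V^\hinge_{t+1}$ in the induction hypothesis. The same nonnegativity, applied to arbitrary policies, is what gives $\mc{R}^*_\casc=\E[V_1(H_1)]$ and $\mc{R}^{\hinge*}_\casc=\E[V^\hinge_1(H_1)]$, which your final step needs.
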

 This result establishes the Bayes consistency of $\ell_\casc^\hinge$. We detail its proof    in Appendix~\ref{app:cascade_all}. Importantly, the calibration constant $ 2^{T-1}(K\!-\!1)$ cannot be improved: 
 Proposition~\ref{prop:LB} gives a matching lower bound by constructing bad   instances for which the smallest valid calibration constant approaches $ 2^{T-1}(K-1)$. In these instances,  every  soft deferral decision is a tie between exit and defer, i.e., $\td_t\equiv 0.5$ for all $t\in[T-1]$. When the soft deferral decisions  are more decisive, the constant can be substantially smaller:  Proposition~\ref{prop:tighter_UB}  shows that for $m\!\in\![0,0.5]$, if $|\td_t(H_t)\!-\!0.5|\!\ge\! m$ almost surely for all $t\in [T-1]$, the constant reduces to $(\frac{2}{1+2m})^{T-1}(K\!-\!1)$. 
 
Sections~\ref{sec:synthetic} and~\ref{sec:LLM_experiments} present experiments that validate  Theorems~\ref{thm:calibration}--\ref{thm:calibration_casc},
and demonstrate their practical utility.

\section{Improving Routing with Unlabeled Data and PPI}\label{sec:section_PPI}

While Section~\ref{sec:main_results}
establishes the Bayes consistency of  the proposed hinge losses as the sample size $n\!\to \!\infty$, practical training operates in  the finite-sample regime. 
Empirical evidence from prior work~\citep{shazeer2017, kubaty2025how}, as well as our own experiments (see Fig.~\ref{fig:route_vs_class} in Appendix~\ref{app:training_trajectories}), shows that the classifiers $f_{1:T}$ converge faster than the router  $r$. Consequently, $r$ can be suboptimal if labeled data is limited, while $f_{1:T}$ may be more reliable. This motivates  post-training methods that improve the routing decisions  efficiently without retraining, by leveraging unlabeled data. 

In this regime, the  prediction-powered inference (PPI) framework~\citep{Angelopoulos2023PPI} provides a natural tool. PPI uses predictions from trained models on abundant \emph{unlabeled} data to construct confidence intervals (CI) for quantities otherwise hard to estimate reliably from \emph{labeled} data alone. In our setting, the trained classifiers can provide these predictions, which we use to refine our trained router.

Accordingly, we consider a setting with   access to a large unlabeled dataset $\{X^{(i)}, Z^{(i)}_{1:T}\}_{i=1}^N\stackrel{\text{i.i.d.}}{\sim} P$ in addition to a held-out labeled dataset $\{X^{(i)}, Z^{(i)}_{1:T}, Y^{(i)}\}_{i=1}^l$ where $N\gg l$, both separate from the data used to train the multi-model classifier. We show how  PPI can be used post-training to improve a flat router, and note that the proposed method may be extended to the cascade setting via the recent e-values framework~\citep{ramdas2025hypothesis,grunwald2020safe,vovk2021values, csillag2025predictionpowered, Kilian2025PPI_E}. 
For notational simplicity,  we present our construction and high-probability guarantees pointwise for a fixed input feature $x\in \mc{X}$. In practice, however, test inputs  (e.g., prompts or images) rarely coincide with training samples. We therefore implement the method using \emph{binning}, which groups similar inputs to ensure coverage of previously unseen samples. The pointwise guarantees in the sequel should be interpreted at the bin level.

\paragraph{Oracle router $r^\dagger_\Flat$\,.} 
       % \E[\ell(f^\dagger(x,Z_{1:T}),Y; c)] ,
Let $\pi^\dagger=(f_{1:T}^\dagger, r_\Flat)$ denote the multi-model classifier after training. Given $f^\dagger_{1:T}$, the Bayes-optimal routing decision for feature $x$ is the one that  minimizes the conditional $0$-$1$-$c$ risk  given $X=x$, taking the form
\begin{align}
r_\Flat^\dagger(x)
&\in \argmin_{t\in[T]  } \mc{R}_t(x; f^\dagger_t) \,,\label{eq:r_dagger} 
\end{align}
where, for any classifier $f_t: \mc{X}\times \mc{Z}_t\to [K]$,      \begin{align}
\mc{R}_t(x; f_t)&:=
\E\big[\Ind\{f_t(x, Z_t)\ne Y\} \mid X=x\big]+c_t\,. \label{eq:cond_R_def}
\end{align}
Since $\mc{R}_t(x; f_t^\dagger)$ is an oracle population quantity (an expectation over the true data distribution), 
the  routing decision $r_\Flat^\dagger(x)$   is not  computable, but provides a natural benchmark. Our goal is \emph{not} to estimate the oracle directly, but rather to \emph{certify} whether the trained routing decision can be safely improved using unlabeled data to approach the oracle. 
\paragraph{PPI-corrected router $r^\PP_\Flat$\,.} To this end, we compare each alternative routing decision to the trained baseline by comparing the difference in conditional risks. Let  $\delta_t(x)\coloneqq \mc{R}_{r_\Flat(x)}(x; f_{r_\Flat(x)}^\dagger)-\mc{R}_t(x; f_t^\dagger)$  for  $t\in [T]\setminus \{r_\Flat(x)\}$.  We leverage PPI to
  construct the 95\% CI around $\delta_t(x)$  in the form:
  \begin{align}\label{eq:PPI_CI_result}
\Prb\left(\big|\delta_t(x)- \delta_t^\PP(x)\big|\ge w_{t,N_b,l_b}(x)\right)\le \tfrac{0.05}{T-1}\,,
  \end{align}
  where $\delta_t^\PP(x)$ and $w_{t,N_b,l_b}(x)$  have   explicit expressions which we detail in Appendix~\ref{app:PPI_CI_construction}, $N_b$ and $l_b$ denote the numbers of unlabeled and labeled points in the bin, and the probability is over the randomness of both the labeled and unlabeled data.  
  Based on these CIs,  we identify a  set of routing destinations $\mc{S}(x)$ that are certified, with  probability $1-\frac{0.05}{T-1}$, to be no worse than the baseline decision $r_\Flat(x)$. If $\mc{S}(x)$ is nonempty, we propose the PPI-corrected routing decision $r_\Flat^\PP(x)$  by selecting the alternative with the largest certified improvement; otherwise, we retain the   baseline. Formally, we define
\begin{align}
    &\mc{S}(x)\coloneqq\big\{ t\in [T]\setminus \{r_\Flat(x)\}: \delta_t^\PP(x)-w_{t,N_b,l_b}(x)\ge 0\big\}, \nonumber\\
    &r_\Flat^\PP(x)\coloneqq\begin{cases}
        \displaystyle   \argmax_{t\in \mc{S}(x)}   \delta^\PP_t(x)-w_{t,N_b,l_b}(x) &\text{if}\; \mc{S}(x)\ne \emptyset\,,\\
        r_\Flat (x) &\text{otherwise}\,.
    \end{cases}  \label{eq:r_PPI}
\end{align}
Then  $r_\Flat^\PP$ is guaranteed to be never worse than the baseline $r_\Flat$ in the  sense of \eqref{eq:no_worse_than} below in Theorem~\ref{thm:PPI}. Under an additional mild separation assumption, where  the oracle decision is unique while the gap between the oracle and the next runner-up is strictly positive, i.e., 
 \begin{align}
     \gamma(x)\coloneqq \delta_{r_\Flat^\dagger(x)}(x)-\max_{t'\in [T]\setminus \{r^\dagger_\Flat(x)\}}\delta_{t'}(x)>0\,,\label{eq:gap_PPI}
 \end{align} 
 we can obtain the stronger guarantee in \eqref{eq:PP_equals_oracle}.  
The CI half-width $w_{t,N_b,l_b}(x)$ decreases   with the accuracy  of the classifier  used to predict the labels of the $N$ unlabeled samples, motivating our use of the most   powerful classifier $f^\dagger_T$ among $f^\dagger_{1:T}$ in our experiments in Section~\ref{sec:PPI_experiments}. If the imputing classifier $f^\dagger_T$ is accurate,  $w_{t,N_b,l_b}(x)$  decreases with  $N$ roughly as $1/\sqrt{N}$, so for  large enough $N$,  the CI for $\delta_{r^\dagger_\Flat(x)}(x)$ becomes narrow enough to separate it from the  runner-up, guaranteeing the  recovery of the oracle decision: $r_\Flat^\PP(x)= r^\dagger_\Flat(x)$. 

\begin{theorem}[Certification for PPI-corrected router]\label{thm:PPI}
For a given $x\in \mc{X}$ and a fixed trained multi-model classifier $\pi^\dagger=(f^\dagger_{1:T}, r_\Flat)$, let $r^\dagger_\Flat(x)$  and $r^\PP_\Flat(x)$ be as defined in \eqref{eq:r_dagger}  and \eqref{eq:r_PPI}, then 
\begin{align}
\Prb\Big(\mc{R}_{r_\Flat^\PP(x)}\big(x; f^\dagger_{r_\Flat^\PP(x)}\big)
  &> \mc{R}_{r_\Flat(x)}\big(x; f^\dagger_{r_\Flat(x)}\big) \, \cap\, \mc{S}(x)\ne \emptyset\Big)\le 0.05. \label{eq:no_worse_than}
\end{align}
Moreover, with $\gamma(x)$ as defined in \eqref{eq:gap_PPI}, then 
\begin{align}
\Prb\Big(&r^\PP_\Flat(x)\ne r^\dagger_\Flat(x)\,\cap\, \mc{S}(x)\ne \emptyset  \, \cap\, w_{r^\dagger_\Flat(x), N_b, l_b}(x)< \gamma(x)/2\Big)\le 0.05. \label{eq:PP_equals_oracle}
\end{align}
\end{theorem}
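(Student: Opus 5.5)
The plan is to work on a single \emph{good event} and show that both bad events in the statement are contained in its complement. Fix $x$ and write $r_0:=r_\Flat(x)$. For each $t\in[T]\setminus\{r_0\}$, let $E_t:=\{|\delta_t(x)-\delta_t^\PP(x)|<w_{t,N_b,l_b}(x)\}$ and set $E:=\bigcap_{t\ne r_0}E_t$. By \eqref{eq:PPI_CI_result} and a union bound over the $T-1$ alternatives,
\begin{align*}
\Prb(E^c)\le (T-1)\cdot \tfrac{0.05}{T-1}=0.05 .
\end{align*}
This is the only probabilistic step. Everything else is deterministic reasoning on $E$, and it does not depend on the particular form of the PPI estimator beyond \eqref{eq:PPI_CI_result}.

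\textbf{Proof of \eqref{eq:no_worse_than}.} Suppose $E$ holds and $\mc{S}(x)\ne\emptyset$. Then $r_\Flat^\PP(x)=t^\star\in\mc{S}(x)$, so
\begin{align*}
\delta_{t^\star}(x) > \delta^\PP_{t^\star}(x)-w_{t^\star,N_b,l_b}(x)\ge 0 .
\end{align*}
By the definition of $\delta_{t^\star}$, this says $\mc{R}_{t^\star}(x;f^\dagger_{t^\star})<\mc{R}_{r_0}(x;f^\dagger_{r_0})$. Hence the event in \eqref{eq:no_worse_than} lies inside $E^c$, and its probability is at most $0.05$.

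\textbf{Proof of \eqref{eq:PP_equals_oracle}.} Assume $E$, $\mc{S}(x)\ne\emptyset$, and $w_{t^\dagger}<\gamma(x)/2$, where $t^\dagger:=r^\dagger_\Flat(x)$ and I abbreviate $w_t:=w_{t,N_b,l_b}(x)$. The argument splits into two cases.

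\emph{Case $t^\dagger=r_0$.} Every $t\ne r_0$ has $\delta_t(x)\le \delta_{t^\dagger}(x)-\gamma(x)<0$, because $\delta_{r_0}=0$ and $t^\dagger$ is the unique minimizer of risk. On $E$ this gives $\delta_t^\PP-w_t<\delta_t<0$, so $\mc{S}(x)=\emptyset$, which contradicts the assumption. Also, since $t^\dagger\ne r_0$ is impossible here, the case $\delta_{t^\dagger}$ being undefined (namely $t^\dagger=r_0$) is consistent with $\mc{S}(x)=\emptyset$. So the triple intersection is empty in this case, and I would interpret the statement with $\delta_{r_0}:=0$ and $w_{r_0}:=0$.

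\emph{Case $t^\dagger\ne r_0$.} I compare the lower confidence bound of $t^\dagger$ with that of any competitor $t\in\mc{S}(x)\setminus\{t^\dagger\}$. On $E$,
\begin{align*}
\delta^\PP_{t^\dagger}-w_{t^\dagger} > \delta_{t^\dagger}-2w_{t^\dagger} > \delta_{t^\dagger}-\gamma \ge \delta_t ,
\end{align*}
and also $\delta_t > \delta^\PP_t - w_t$. Therefore $t^\dagger$ has a strictly larger lower confidence bound than every other $t$. It also belongs to $\mc{S}(x)$, because $\delta^\PP_{t^\dagger}-w_{t^\dagger}>\delta_{t^\dagger}-\gamma\ge \delta_{r_0}=0$; the last inequality uses the definition of $\gamma$ with runner-up $t'=r_0$. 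The argmax in \eqref{eq:r_PPI} then selects $t^\dagger$, so $r^\PP_\Flat(x)=t^\dagger$. Thus the event in \eqref{eq:PP_equals_oracle} is contained in $E^c$, and its probability is at most $0.05$.

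\textbf{Main obstacle.} The step that needs the most care is the bookkeeping around $r_0$ itself. The quantity $\delta_{r_0}$ is not formally defined, yet it enters $\gamma$ through the max over $t'$. One also has to check that $\gamma$ properly compares $t^\dagger$ with $r_0$, using the convention $\delta_{r_0}=0$. Apart from this, the only other point to confirm is that strict versus non-strict inequalities in \eqref{eq:PPI_CI_result} are handled consistently with the definition of $E$.
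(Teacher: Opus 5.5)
Your proof is correct and takes the same route as the paper. The paper's entire proof is the remark that the result follows from combining \eqref{eq:PPI_CI_result} with a union bound over the $T-1$ alternatives; you carry out that union bound to obtain the good event $E$ and spell out the deterministic inclusions of both bad events in $E^c$, including the convention $\delta_{r_\Flat(x)}(x)=0$ (and the case $r^\dagger_\Flat(x)=r_\Flat(x)$), which the paper's definition of $\gamma(x)$ in \eqref{eq:gap_PPI} uses implicitly.
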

The probabilities in \eqref{eq:no_worse_than} and \eqref{eq:PP_equals_oracle} are over the randomness of both the labeled and unlabeled data. 
The proof is a  combination of \eqref{eq:PPI_CI_result} with a union bound. For sufficiently accurate  classifiers $f^\dagger_{1:T}$ and sufficiently large $N$, the CIs become sufficiently tight that $\mc{S}(x)\ne \emptyset$ with high probability, in which case \eqref{eq:no_worse_than} and \eqref{eq:PP_equals_oracle} effectively reduce to stronger  marginal guarantees. We experimentally illustrate the PPI corrections in Section~\ref{sec:PPI_experiments}. 

\section{Experiments}
\label{sec:experiments}
\subsection{Validating the Bayes Consistency of Surrogate Losses on Synthetic Data}
\label{sec:synthetic}

To validate Theorems~\ref{thm:calibration} and \ref{thm:calibration_casc} in a controlled setting, we study a synthetic classification task with known ground-truth conditional distributions. We  compute the Bayes-optimal solutions (as characterized in Lemmas~\ref{lem:bayes_opt_flat} and \ref{lem:bayes_opt_casc_V}) under both the flat and cascade routing models 
and verify that minimizing our proposed surrogate losses recovers these solutions.

Under both models, we consider $T{=}3$ classifiers for a $K{=}3$ class   problem. The data consists of a base feature $X\,{\in}\, \mathbb{R}$ and two additional features $Z_2, Z_3 \,{\in}\,\mathbb{R}$ all sampled independently from $\mathrm{Uniform}[-1,1]$, and a label $Y\,{\in}\, [K]$ generated  according to: 
\begin{figure*}[t]
    \centering \includegraphics[width=0.495\linewidth]{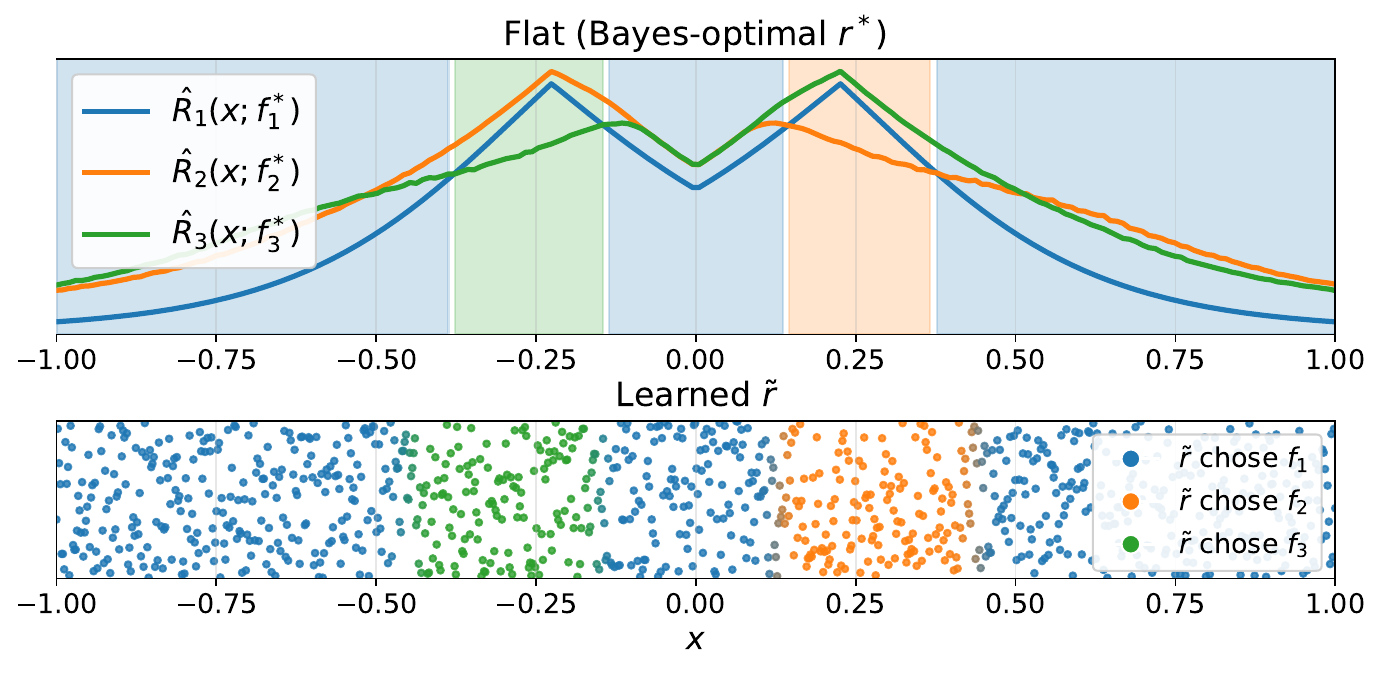}
      \includegraphics[width=0.495\linewidth]{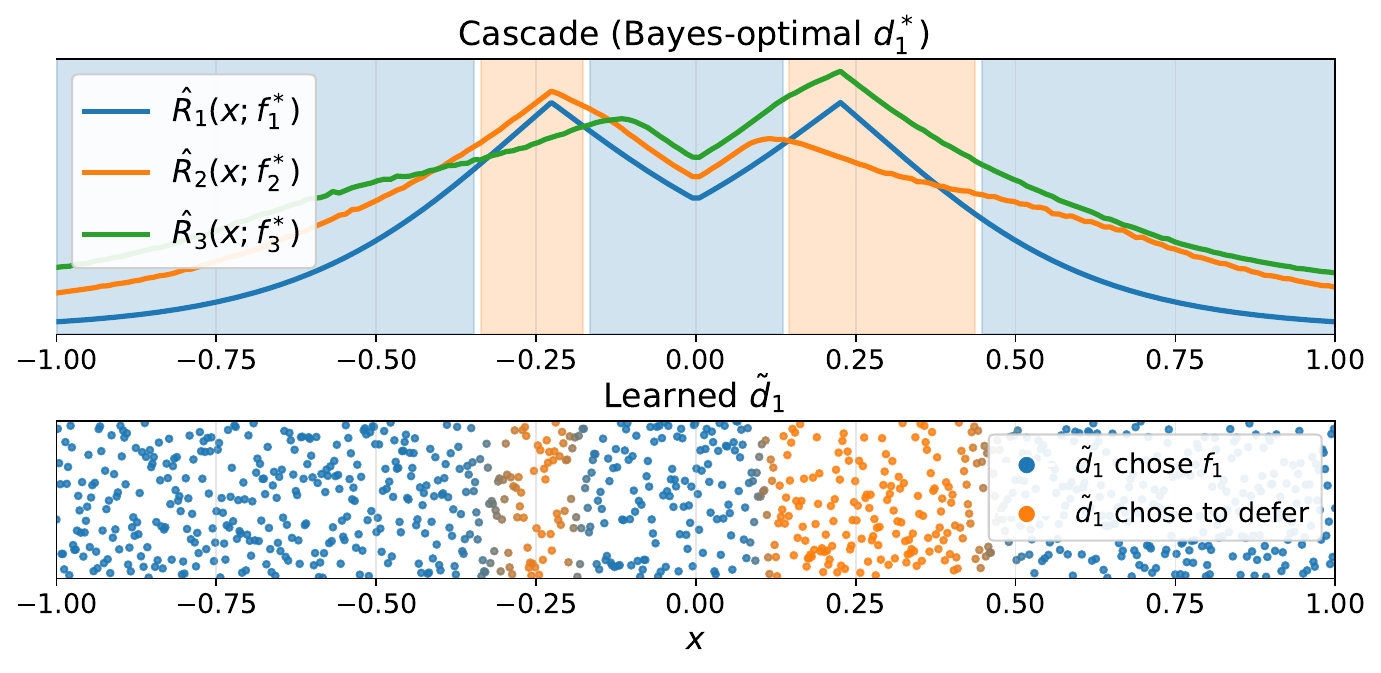}
    \caption{(Top) Monte-Carlo estimate of the Bayes-optimal conditional  risk and Bayes-optimal routing decisions for the flat router $r^*$ (left) and for the first stage of the cascade router $d^*_1$ (right). 
    %with $\kappa=0.05$. 
    (Bottom) Learned routing decisions on the test set, where points are colored according to the routing probabilities learned by the model.
    }
    \label{fig:task} 
\end{figure*}
\begin{align}
&Y \mid (X=x,Z_t=z_t) \sim \mathrm{Categorical}\big(\boldsymbol{\theta}(x,z_t)\big)\,,
\label{eq:synthetic_generating_process}
\end{align}
where the full specification of $\boldsymbol{\theta}$  is provided in Appendix~\ref{app:synthetic_details}. We set the cost of $f_1$ to $\kappa_1=0$ and assign the same cost $\kappa_2=\kappa_3=0.05$ to  $f_2$ and $f_3$, yielding cost vector  $c=[0, 0.05, 0.05]^\top$ in the flat setting and $c=[0, 0.05, 0.1]^\top$ in the cascade setting. 

The Bayes-optimal routing decisions in both settings are governed by the conditional risk $\mc{R}_t(x;f_t^*)$ in  \eqref{eq:cond_R_def}:  see Eq.~\eqref{eq:r_star_multi_app} for the flat and \eqref{eq:d_star_casc_V} for the cascade setting. We choose $\boldsymbol{\theta}$ so that this conditional risk is minimized at different exit stage $t$ over different regions of $\mc{X}$;  the solid lines in the top panels  of Fig.~\ref{fig:task} show its Monte-Carlo estimate denoted by $\hat{\mc{R}}_t(x;f_t^*)$.
This induces nontrivial Bayes-optimal routing boundaries, shown by the shaded regions in the top panels. In the flat setting (top left panel), the Bayes-optimal router $r^*$ selects the classifier with the lowest conditional risk, following~\eqref{eq:r_star_multi_app}. In the cascade setting (top right panel), 
the stage-1 Bayes-optimal deferral rule $d^*_1$ 
does \emph{not} align with the lower envelope of the risk curves, since it must compare  the stage-1 conditional risk $\hat{\mc{R}}_1(x; f_1^*)$ against the expected cost of optimally continuing the cascade, following~\eqref{eq:d_star_casc_V}. Consequently, the left orange-shaded region in the cascade  figure is narrower than the green-shaded  region  in the flat figure, indicating   fewer samples are routed to $f_3$ in the cascade, due to the accumulated cost of deferral. 

Having illustrated the Bayes-optimal solutions, we now train  flat and  cascade multi-model classifiers, and compare the  learned functions with their Bayes-optimal counterparts. Each classifier $f_1(x), f_2(x, z_2), f_3(x, z_3)$ is implemented as a three-layer multilayer perceptron. For simplicity, the synthetic setup takes $\mc{Z}_1 = \emptyset$, so $f_1$ and $d_1$ depend on $x$ alone. Routers are  implemented as three-layer networks: a multiclass routing network $r: \mc{X} \to \{1,2,3\}$ in the flat setting, and two sequential binary routing networks $d_1: \mc{X} \to \{0,1\}$ and $d_2: \mc{X}\times\mc{Z}_2 \to \{0,1\}$ in the cascade setting. Models are trained end-to-end with Adam  using our proposed surrogate losses, i.e.,  \eqref{eq:surrogate_hinge_flat}  for flat routers and  \eqref{eq:surrogate_hinge_casc}   for cascades. Appendix~\ref{app:synthetic_details} provides additional architectural and optimization details. 

The bottom panels of Fig.~\ref{fig:task} visualize the learned routing boundaries for both the flat and cascade settings (via the first deferral function $\tilde{d}_1$), where each test point is colored according to the routing probabilities learned by $\tr$ or $\td_1$. In both cases, the learned regions closely match the Bayes-optimal regions in the top panels, confirming our Bayes consistency results from Theorems~\ref{thm:calibration} and \ref{thm:calibration_casc}.

Complementing the  routing visualizations in Fig.~\ref{fig:task}, Fig.~\ref{fig:emp_risk} in Appendix~\ref{app:empirical_risk_vs_bayes} directly compares the empirical $0$-$1$-$c$ risk over minibatches against the empirical Bayes-optimal $0$-$1$-$c$ risk for both the flat and cascade settings, under two cost configurations. In all cases,  minimizing  the proposed hinge losses attains the Bayes-optimal risk on the test set, further validating our theory.

\subsection{Demonstrating Post-Training PPI-Based Routing Improvement on Synthetic Data}
\label{sec:PPI_experiments}

\begin{figure}[t]
    \centering
\includegraphics[width=0.6\linewidth]{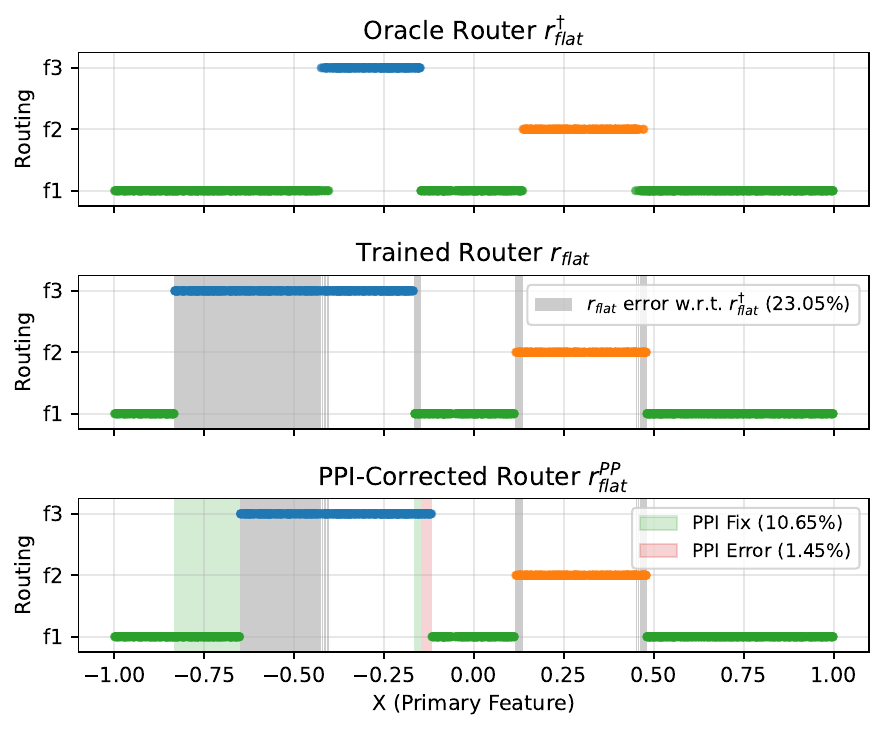}
    \caption{Comparison of the oracle router $r^{\dagger}_\Flat$, the router $r_{\Flat}$ trained on labeled data, and the  PPI-corrected router $r^{\PP}_{\Flat}$ leveraging unlabeled data. The fixes and errors introduced  by PPI are computed with respect to $r^{\dagger}_\Flat$.}
    \label{fig:ppi}
\end{figure}

Fig.~\ref{fig:route_vs_class} in Appendix~\ref{app:training_trajectories}
compares the   training dynamics of  the classifiers and   router by plotting their training accuracies over minibatches.  The classifiers converge quickly and plateau early, while the router requires much more labeled data to converge. When labeled data is scarce, we now demonstrate how the post-training PPI procedure of Section~\ref{sec:section_PPI}  leverages    unlabeled data to improve routing decisions at test time.

We use  the same synthetic data generating process  from Section~\ref{sec:synthetic} with $2000$ labeled samples ($n=1000$ training samples and $l=1000$ PPI labeled samples) and $N = 20,000$ unlabeled samples, and   partition the input space $\mc{X}$ using K-means clustering (binning) with 40 bins to account for unseen data, as described in Section~\ref{sec:section_PPI}.  We consider    three classifiers and a flat router,  set up identically to  those  in Section~\ref{sec:synthetic}. The classifiers and router are trained on the training labeled samples only. Post-training, routing decisions at test time are  corrected via the PPI   procedure~of~Section~\ref{sec:section_PPI}.

Fig.~\ref{fig:ppi} compares three routers: the oracle router $r^{\dagger}_\Flat$, the trained router $r_\Flat$, and the PPI-corrected router $r_\Flat^{\PP}$. While the trained router  deviates noticeably from the oracle  due to   limited   labeled data,  PPI correction  substantially  improves it, approaching  $r^{\dagger}_\Flat$ and    
%by leveraging unlabeled data, 
fixing $10.65\%$ routing errors at the cost of $1.45\%$ mistakes. 
Note that  Theorem~\ref{thm:PPI}  guarantees improvement towards  the oracle router $r^{\dagger}_\Flat$, not the Bayes-optimal router $r_\Flat^*$. However, even if the trained router is poor, the trained classifiers can be closer to their Bayes-optimal counterparts, in which case the oracle router would be near-optimal ($r^{\dagger}_\Flat \approx r_\Flat^*$). We illustrate this in Appendix~\ref{sec:app_ppi}, where we additionally include the Bayes-optimal router and vary the amount of labeled data to show how this affects the behavior of the oracle, the trained router and the PPI correction. Finally, following the theory, the experiment above keeps the training data and the labeled PPI data separate. However, setting labeled data aside from training can be wasteful, so in Appendix~\ref{sec:app_ppi} we also show that reusing the training data as the labeled PPI data can still offer improvements, even if it violates the independence assumption of Theorem~\ref{thm:PPI}.

\subsection{Large Language Models Experiment}
\label{sec:LLM_experiments}

To showcase the practical effectiveness of our surrogate losses, we conduct experiments in the $T=3$ setting with LLMs from two model families LLaMA 3 1B, 70B, and 405B~\citep{dubey2024llama3herdmodels} 
and Qwen 2.5 1.5B, 32B, 72B~\citep{yang2025qwen25technicalreport}. In this section we show flat router results for LLaMA models. For cascade router results, and Qwen model experiments with flat router and cascade router setups see Appendix~\ref{app:additional_experiments}.

{\bf Datasets.} We use two standard reasoning benchmarks from the literature: CommonsenseQA~\citep{talmor2018commonsenseqa} (commonsense logical reasoning) and GSM8K ~\citep{cobbe2021gsm8k} (grade-school math reasoning). See Appendix~\ref{app:dataset} for more details on the datasets.
For all experiments we use a 80-10-10 train-validation-test split.

{\bf Setup.} We form each input feature $x$ by embedding the natural language prompt into a vector. Additional features $z_t$ are obtained by embedding the LLM outputs in a similar fashion and learn light classifier modules $f_t$ on top of them. We provide experimental and implementation details in Appendix~\ref{app:model_training_llm}. Appendix~\ref{app:additional_experiments} reports cascade-router results with the LLaMA models and flat-router results with the Qwen models.

{\bf Results.} Fig.~\ref{fig:cost_comp} presents a flat router experiment with the three LLaMA models. We present three panels, one per cost configuration (in increasing order of costs from left to right with $c\,{=}\,[0, 0.01, 0.02]$, $[0, 0.05, 0.07]$, $[0, 0.1, 0.2]$).
For each configuration, we display statistics through box-and-whisker plots computed via bootstrap sampling over 10 trials and the median (horizontal line). 
\begin{figure}[t]
    \centering
    \includegraphics[width=0.6\linewidth]{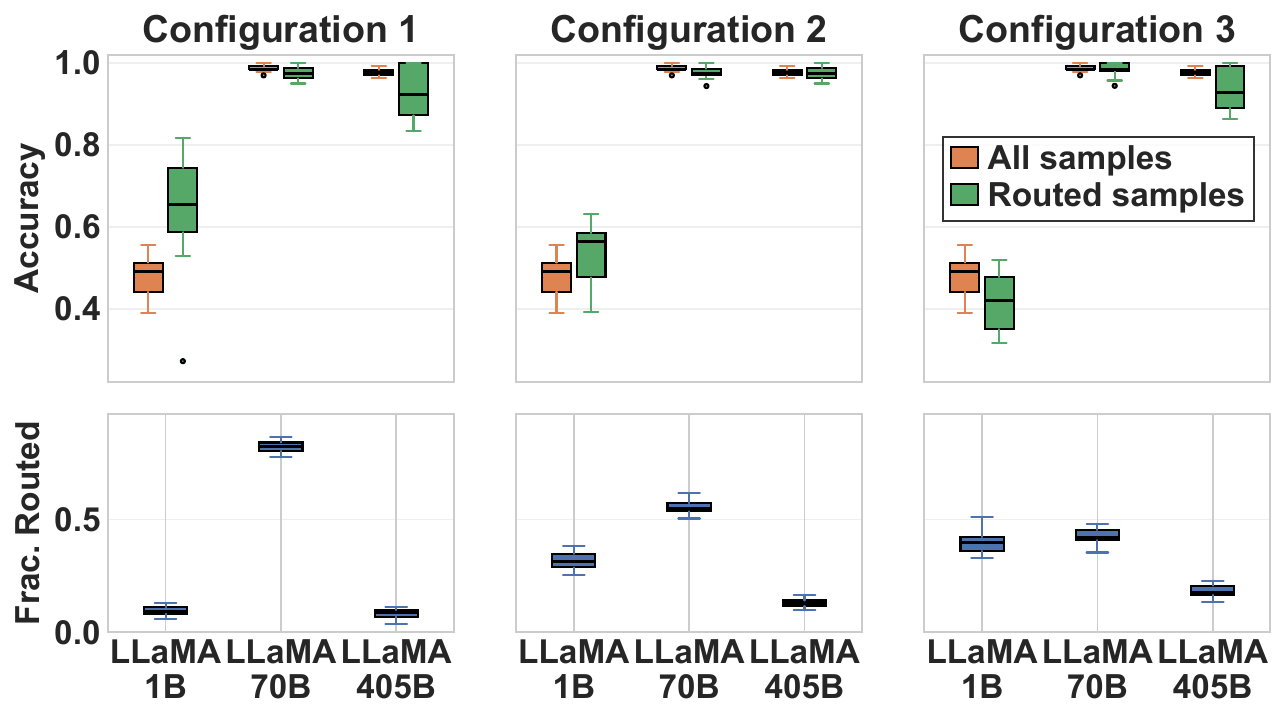}
    \caption{Classifier accuracy and routing fractions on  GSM8K across three cost configurations   (costs increasing from left panel to right panel).
    }
    \label{fig:cost_comp}
\end{figure}

The orange box represents the performance of each model on the entire dataset (independent of the routing decision), the green box represents the performance of each model on the samples that are routed to it, and the blue box (bottom row) represents the fraction of samples that are routed to each model. 
In configuration 1 (left panel), the gap between the orange and green boxes shows that the router successfully identifies easier samples on which LLaMA 1B is more accurate than average.
As costs increase across the three configurations, a larger fraction of questions is routed to the cheapest model, causing this model's accuracy on the routed subset (green) to decrease as the model receives a larger fraction of harder questions. Meanwhile, LLaMA 70B and 405B achieve accuracies in the nineties on routed samples. Thus, joint classifier-router training under our proposed loss enables cost savings by finding samples where the 1B is accurate. 

In Appendices~\ref{app:add_llm_cascade}--\ref{app:ablation}, we provide additional experiments to demonstrate the utility of our proposed training losses in three aspects:   (i) they induce consistent cost-accuracy tradeoff behavior, similar to that in Fig.~\ref{fig:cost_comp}, across   flat and cascade settings, as well as other datasets and model families (\ref{app:add_llm_cascade}); (ii) they attain the lowest empirical $0$-$1$-$c$ risk against popular separate-training and joint-training baselines (\ref{app:casc_baselines} and~\ref{app:adap_baselines}); and (iii) they scale stably with the number of classifiers~$T$~(\ref{app:ablation}).

\section{Conclusion}

We present a principled framework for 
jointly training  classifiers and routers in cost-sensitive multi-model classification. We  propose hinge-based surrogate losses and  establish their Bayes consistency with respect to the   cost-sensitive zero-one loss in both single-stage  and cascade routing models. To address  label scarcity, we  design a   post-training PPI procedure that leverages unlabeled data to refine a trained single-stage router. Experiments on synthetic data and LLM benchmarks validate our theory and   the utility~of~the hinge losses.

Our framework has some limitations. In practice, full joint training can be computationally demanding, and the hinge loss can be harder to optimize than a common log-based version. Moreover, our analysis assumes that the distribution of the additional features $Z_t$ is fixed during training, which excludes settings where they are produced by models that are fine-tuned jointly with the classifiers and router.  Appendix~\ref{app:limitation} discusses these, along with further limitations in more detail.

 \subsubsection*{Acknowledgements and Disclosure of Funding}
 S.\ X.\ Liu and P.\  Rebeschini were supported by UK Research and Innovation (UKRI) under the UK government’s Horizon Europe funding guarantee [grant number EP/Y028333/1]. 
%
%This work was partially funded by 
%UK Research and Innovation (UKRI) under the UK government’s Horizon Europe funding guarantee [grant number EP/Y028333/1], 
F.\ Regol, A.\ Valkanas and M.\ Coates were supported by 
the Natural Sciences and Engineering Research Council of Canada (NSERC)
[reference number 260250], and the Fonds de recherche du Québec (FRQNT) [reference number 324302]. Cette recherche a été partiellement financée par le Conseil de recherches en sciences naturelles et en génie du Canada (CRSNG) [numéro de référence 260250] et le Fonds de recherche du Québec (FRQNT) [numéro de référence 324302]. 

\bibliographystyle{plainnat}
\bibliography{references}

\clearpage 
\appendix
\section*{Appendix Contents}
\startcontents[appendix]
\printcontents[appendix]{}{1}{}
\clearpage

\section{Extended Literature Review}\label{app:review}

We study  multi-model adaptive inference  for classification   by casting it  as a cost-sensitive  classification problem with the classifiers carrying varying costs. We    propose  losses for  jointly training the   classifiers and their associated  routing model. In this section, we survey related work on multi-model classification, both from an empirical and a theoretical perspective.

Section~\ref{ssec:empirical}  reviews empirical approaches, which prioritize  inference cost savings through the design of architectures  and   loss functions that tend to lack statistical guarantees. 
These works can be   further categorized  into general approaches applicable to any multi-classifier architecture,   and domain-specific ones. Since our work is mainly motivated by  LLM-related applications, we review LLM domain-specific approaches.
Section~\ref{ssec:theoretical}  covers the theoretical works, which we organize into three main streams: reject-option classification, learning to defer, and joint classifier-router training.
Figure~\ref{fig:tax} summarizes both empirical and theoretical works discussed here.

\begin{figure}[h!]
    \centering
    \tikzset{
        basic/.style  = {rectangle, draw, rounded corners=2pt, align=center, text opacity=1,
                         minimum height=2em,
                         inner sep=4pt, align=center, fill opacity=.5,},
        root/.style   = {basic,  
                         fill=LimeGreen!30,  text width=2.5cm},
        tnode/.style  = {basic,  fill=BrickRed!20, text width=4.8cm, minimum width=0pt},
        xnode/.style  = {basic,  fill=NavyBlue!20, text width=1.8cm, minimum width=0pt},
        edge from parent/.style={draw=black, edge from parent fork right}
    }
    \scalebox{0.88}{
    \begin{forest}
      for tree={
        grow=east,reversed,
        growth parent anchor=west,
        parent anchor=east,
        child anchor=west,
        edge path={
          \noexpand\path[\forestoption{edge},->,>=latex]
            (!u.parent anchor) -- +(10pt,0pt) |- (.child anchor)
            \forestoption{edge label};
        }
      },
      [Multi-model classification, root, l sep=10mm
        [Empirical works, xnode, l sep=10mm
            [Separate classifier training and   router design\\\citep{ schuster2022confident, Jitkrittum2023}, tnode]
            [Joint classifier-router training\\\citep{han2022Weighted, boostedNet, valkanas2024modl, regol2024jointlylearned, kubaty2025how, sun2024mixture, wang2025hmoe}, tnode]
          [LLM orchestration, tnode, l sep=10mm
           [One-shot routers\\\citep{chen2024routerdc, ong2025routellm}, tnode, text width=4cm]
           [Cascades\\\citep{chen2024frugalgpt, zhang2024treacle, valkanas2025cpo}, tnode, text width=4cm]
            [Draft-verify (Speculative decoding)\\\citep{spec_decoding_survey, leviathan2023_spec_decoding}, tnode, text width=4cm]
          ]
        ]
        [Theoretical works, xnode, l sep=10mm
          [Reject-option classification\\\citep{Chow1957, chow1970, Herbei2006, Wiener2011, ElYaniv2010foundations, cao2022generalizing, cortes2016}, tnode]
          [Learning to defer\\\citep{Madras2018, Keswani2021, Wilder2021, Mozannar2020, Verma2023,Verma2022a, Awasthi2022, Mozannar2023, mao2024a, mao2024b, mao2023, mao2025mastering}, tnode]
          [Joint classifier-router training\\\citep{regol2025acquisition, pugnana2025} \textbf{\& our work}, tnode]
        ]
      ]
    \end{forest}}
    \caption{Key related works on multi-model classification. 
    }
    \label{fig:tax}
\end{figure}

\subsection{Empirical Works}\label{ssec:empirical}

\paragraph{Domain-Agnostic Approaches.} Adaptive computation and dynamic networks  reduce average test-time inference cost  by conditionally allocating computation, e.g., by exiting early in a deep network and  escalating to more expensive components only when needed \citep{dnn_survey}. One common approach here is to train classifiers   \emph{separately} from the router and make test-time routing decisions  using confidence scores  compared against a calibrated threshold  \citep{schuster2022confident, Jitkrittum2023}. 

This is attractive in practice due to its simplicity, as it allows classifiers to be trained independently from the router ~using~standard~supervised~learning~losses.

In contrast to \emph{separate} training, a growing empirical literature addresses the train-test mismatch caused by this approach, namely that a classifier trained on all samples is applied at test time only to the samples routed to it~\citep{boostedNet, han2022Weighted, regol2024jointlylearned}. In early-exit networks, where an input leaves at the first exit whose confidence passes a threshold, \citet{boostedNet} train all exits with one summed loss in which each exit is boosted on top of the earlier ones, so that deeper exits concentrate on the samples that earlier exits find hard. \citet{han2022Weighted} keep this exit rule and meta-learn per-sample loss weights for each exit, which encourage each exit to train on the samples it would receive at test time. \citet{kubaty2025how} show that the schedule for training the backbone and the exit heads also changes the cost-accuracy trade-off, and they propose training the backbone before the whole multi-exit network. Other works also learn the routing rule, as \citet{regol2024jointlylearned} do by training exit gates together with the classifiers. Mixture-of-experts language models with experts of different sizes train a learned gate together with the experts~\citep{sun2024mixture, wang2025hmoe}, and \citet{wang2025hmoe} add a loss term that penalizes the use of larger experts. 

Two of these procedures alternate between deciding which samples each classifier should handle and retraining the classifiers. ~\citet{regol2024jointlylearned} alternate between fitting the gates to the lowest-cost exit of each sample and retraining the classifiers with each loss weighted by the predicted probability of exiting at that classifier, while ~\citet{han2022Weighted} alternate between assigning samples to exits by simulating confidence-based exiting and updating the classifiers with the resulting weights. ~\citet{han2022Weighted} also report that training each exit only on the samples allocated to it, without weights, causes a drastic drop in accuracy. 
However, these works do not provide theoretical guarantees that their joint training procedures yield models  achieving the  desired cost-accuracy trade-off.

\paragraph{LLM Domain-Specific Approaches.}

Recent interest in efficient LLM inference  has accelerated work on  test-time compute allocation, both within a single model and via multi-model orchestration. Within a single model, works on inference efficiency  include  early-exit networks and depth-adaptive mechanisms \citep{depth_adaptive_transformer_elbayad, consistent_ee_zeng, deja_vu_liu, chen2023eellm}. Works on  multi-model orchestration include  draft-verify strategies (speculative decoding) \citep{spec_decoding_survey, leviathan2023_spec_decoding}, one-shot routers that select a model per input \citep{chen2024routerdc, ong2025routellm}, and cascades that escalate through models of increasing cost \citep{chen2024frugalgpt, zhang2024treacle, valkanas2025cpo}. 
The closest  works to the  setting we consider are the latter two categories: LLM routers and cascades.  

An LLM router takes as input a query embedding produced by   a  small, low-cost language model, and uses it to select which LLM to invoke~\citep{chen2024routerdc, ong2025routellm}. 
Cascade approaches~\citep{chen2024frugalgpt, zhang2024treacle, valkanas2025cpo}
prompt LLMs sequentially from cheapest to most expensive,   deciding at each step whether to return the response of the current LLM or  escalate to a more expensive one. Both the routers and cascades are generally trained on top of fixed, pretrained LLMs. Unlike domain-agnostic approaches, LLM routers and   cascades are tailored  to LLM-specific signals such as self-consistency scores~\citep{mundler2023self} or human feedback~\citep{ong2025routellm}, which  serve as input to these components.

\subsection{Theoretical Works}\label{ssec:theoretical}

While empirical approaches  achieve inference cost savings in practice, they often lack rigorous  analysis. Alongside the empirical work, a theoretical literature has developed surrogate losses for various cost-sensitive classification problems, providing statistical guarantees such as excess-risk bounds and different notions of surrogate consistency. This literature can be separated into three main threads, each corresponding to a different problem formulation. 
The first two, reject-option classification and learning to defer, are closely related to the multi-model classification problem and are the most intensively studied. The third thread, more recently developed, addresses the multi-model classification problem itself, so far only with $T=2$ classifiers. Since our work sits in the third thread and the main text develops it in full, we focus on  the first two here.

\paragraph{Reject-Option Classification.} 
A reject-option classifier, i.e., a classifier with a reject option, may abstain  from predicting at a  cost lower than that of a misclassification  \citep{Chow1957, chow1970}. The Bayes-optimal reject decision  was characterized by~\citet{Chow1957, chow1970}. Subsequent work developed the consistency theory for it, establishing  excess-risk convergence rates for plug-in and empirical risk minimization (ERM) estimators in the binary classification case \citep{Herbei2006} and estimation error bounds that accommodate  arbitrary classification-calibrated losses \citep{cao2022generalizing}.  The same abstention mechanism is also studied under the name selective classification \citep{ElYaniv2010foundations, Wiener2011} and  rejection learning~\citep{cortes2016, Hendrickx2024}. 

\paragraph{Learning to Defer (L2D).}
Learning to defer generalizes abstention: instead of declining prediction outright, the classifier may defer to one or more \emph{experts} of varying  capabilities  and costs \citep{Madras2018, Keswani2021, Wilder2021}. This literature is close in spirit to multi-model classification problem, since a deferral decision routes the input to an alternative predictor.
\citet{Mozannar2020} proposed a consistent surrogate for multiclass L2D with a single fixed expert, obtained by a reduction to cost-sensitive learning, which yields a softmax cross-entropy loss over the label space augmented with a deferral option. \citet{Verma2022a} proposed an alternative one-vs-all surrogate for the same single-expert setting and proved that it is a consistent surrogate for the deferral loss.\citet{Verma2023} extended both the softmax and the one-vs-all surrogates to multiple experts.

All of these works established Bayes consistency, a guarantee about minimizers over the class of all measurable functions. This guarantee can be less informative when, in practice, the surrogate is minimized over a restricted hypothesis class $\mathcal{H}$ (for instance, linear functions). \citet{Long2013} illustrated this with linearly separable distributions, on which a Bayes-consistent surrogate minimized over linear functions keeps a positive classification error, while a surrogate that is not Bayes-consistent drives the error to zero. This motivated them to study consistency relative to a hypothesis class: $\mathcal{H}$-consistency and realizable $\mathcal{H}$-consistency.
A surrogate is $\mathcal{H}$-consistent if, for every distribution, minimizing the surrogate risk over $\mathcal{H}$ yields an error approaching the smallest error attainable within $\mathcal{H}$. This requirement also includes settings in which no function in $\mathcal{H}$ attains zero error, either because the underlying distribution is noisy or because $\mathcal{H}$ is too restricted. \citet{Long2013} therefore introduced the weaker notion of realizable $\mathcal{H}$-consistency, which imposes the same conditions, but only on distributions for which some function in $\mathcal{H}$ attains zero error, so that minimizing the surrogate over $\mathcal{H}$ must drive the classification error to zero. \citet{Awasthi2022} later introduced $\mathcal{H}$-consistency bounds, which upper-bound the target estimation error (i.e., the gap between the target risk and the smallest target risk attainable within $\mathcal{H}$) by a function of the surrogate estimation error for a given hypothesis set, and derived explicit bounds for binary classification. These guarantees were stated for
classification with the $0$-$1$ loss; the following works on learning to defer adapt them by replacing the classification error
with a deferral loss.

In learning to defer with a single expert, \citet{Mozannar2023} showed that prior Bayes-consistent surrogates can fail to find a good linear classifier and deferral function even when a pair with zero error exists, and they proposed a surrogate that is realizable $\mathcal{H}$-consistent for model classes closed under scaling.  \citet{mao2024a} showed that a family of surrogates that includes the one of \citet{Mozannar2023} up to a constant is realizable $\mathcal{H}$-consistent under mild conditions and that suitable members admit $\mathcal{H}$-consistency bounds.  This also shows that the surrogate of\citet{Mozannar2023} is Bayes-consistent when the expert cost is its classification error, a question that was left open.
For multiple pre-defined experts, \citet{mao2024b} proved $\mathcal{H}$-consistency bounds when the predictor and the deferral function are learned simultaneously, and \citet{mao2023} proved such bounds for a two-stage variant that keeps a pre-trained predictor fixed while the deferral function is learned. In both works the cost of deferring to an expert can add an inference cost to that expert's error. \citet{mao2024b} also instantiated their surrogate with a constrained hinge loss whose scores sum to zero and obtained an $\mathcal{H}$-consistency bound that is linear in the surrogate estimation error for distributions on which the best surrogate risk within $\mathcal{H}$ equals the Bayes surrogate risk. 
Finally,~\citet{mao2025mastering} proposed surrogates for learning to defer to multiple experts that are realizable $\mathcal{H}$-consistent, admit $\mathcal{H}$-consistency bounds, and are Bayes-consistent, in both the standard and the two-stage settings. To our knowledge, this is the most general consistency analysis of learning to defer.

\paragraph{Joint classifier-router training.} 

L2D   assumes   the expert(s) are fixed and learns a \emph{single}   classifier together with a deferral rule. Closest to our setting, \citet{regol2025acquisition} and \citet{pugnana2025} have studied joint training of \emph{two} classifiers with a cost-sensitive routing rule, but neither  extends to the general multi-model setting.\citet{regol2025acquisition} prove Bayes consistency, while~\citet{pugnana2025} prove realizable $\mathcal{H}$-consistency. In contrast, our framework allows an arbitrary number of  classifiers with both single-stage  and cascade routing, and like~\citet{regol2025acquisition}, we establish Bayes consistency.

\clearpage
\section{  Proofs for the Flat Router Model}\label{app:flat_all}
In this section, we prove the calibration result in Theorem~\ref{thm:calibration} for the flat-router hinge loss $\ell^\hinge_\Flat$. 
For this,  we first prove some auxiliary results  in Sections~\ref{app:bayes_opt_flat}--\ref{app:connection_hinge_01c_conditional_bayes_risk}, and based on these, we   prove calibration  in Section~\ref{app:proof_calibration}. 

\paragraph{Notation.} We use LHS (resp.\ RHS) as short for left (resp.\ right) hand side.
 We  write $p(\cdot \mid \cdot )$   for  the conditional  probability mass function of $Y$, e.g.,  $p(y \mid x)\equiv \mathrm{Pr}(Y=y\mid X=x)$ and $p(y\mid x, z_t)\equiv \mathrm{Pr} (Y=y\mid X=x,Z_t=z_t)$.

\subsection{Bayes-Optimal Solutions}
\label{app:bayes_opt_flat}
In this section, we characterize  the  Bayes-optimal solution   $\pi^*_\Flat\in \Pi_\Flat$ under the $0$-$1$-$c$ loss (Lemma~\ref{lem:bayes_opt_flat}) and the Bayes-optimal solution $\tpi^*_\Flat\in \wt{\Pi}_\Flat$ under the hinge loss (Lemma~\ref{lem:BO_solution_flat_hinge}).  
We first characterize the Bayes-optimal solution within the hypothesis class $\Pi_\Flat$ under the $0$-$1$-$c$ loss associated with the population risk $\mc{R}(\pi)=\E\big[\ell(X,Z_{1:T}, Y;\pi, c)\big]$. 
\begin{restatable}[Bayes-optimal solution in $\Pi_\Flat$ for $0$-$1$-$c$ loss]{lemma}{bayesoptimalflat}
\label{lem:bayes_opt_flat}
For  the $0$-$1$-$c$ risk $\mc{R}$  defined in \eqref{eq:def_R(f;c)} with cost vector $c$, there exists a Bayes-optimal solution    
\begin{align}\label{eq:def_f_star_flat}
\pi^*_\Flat=(f^*_{1:T}, r^*_\Flat) \in \argmin_{\pi \in \Pi_\Flat} \mc{R}(\pi)    
\end{align}
whose inducing components $(f_{1:T}^*, r^*_\Flat)$ satisfy:   
    \begin{align}
    f_t^*(x,z_t)
    &\in \argmax_{y\in[K]} p(y\mid x,z_t)\quad \forall \; t\in[T],
    \label{eq:f_star_multi_app}\\
    r^*_\Flat(x)&
    %\equiv r^*(x, z_{1:M})
    \in \argmin_{t\in[T]}  \mc{R}_t(x;f^*_t),
    \label{eq:r_star_multi_app}
\end{align}  
where $\mc{R}_t(x;f^*_t)$ denotes the flat stage-wise Bayes risk of $f_t^*$ conditioned on $X=x$:
\begin{align}
   \mc{R}_t(x;f^*_t)
    =
    1 - \mathbb{E}
        \left[ \max_{y\in[K]} p(y\mid x,Z_{t})\mid X=x\right]
      + c_{t}\quad \forall t\in[T].
    \label{eq:G_l_app}
\end{align}
\end{restatable}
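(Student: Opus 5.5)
The plan is to condition on $X=x$ and minimize pointwise, exploiting the fact that in the flat model the router sees only $x$ while each classifier $f_t$ sees $(x,z_t)$. For any $\pi=(f_{1:T},r)\in\Pi_\Flat$, the tower property gives
\begin{align*}
\mc{R}(\pi)=\E\Big[\sum_{t=1}^T \Ind\{r(X)=t\}\Big(\E\big[\Ind\{f_t(X,Z_t)\ne Y\}\mid X\big]+c_t\Big)\Big],
\end{align*}
because $r(X)$ is $X$-measurable, so the indicator $\Ind\{r(X)=t\}$ can be pulled out of the inner conditional expectation. The inner quantity is exactly $\mc{R}_t(X;f_t)$ as defined in \eqref{eq:cond_R_def}.

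The first step is to lower bound each $\mc{R}_t(x;f_t)$ uniformly in $f_t$. Conditioning further on $Z_t$, we have $\E[\Ind\{f_t(x,Z_t)\ne Y\}\mid X=x]=\E[1-p(f_t(x,Z_t)\mid x,Z_t)\mid X=x]$. This is at least $1-\E[\max_y p(y\mid x,Z_t)\mid X=x]$, with equality when $f_t=f_t^*$ from \eqref{eq:f_star_multi_app}. Hence $\mc{R}_t(x;f_t)\ge\mc{R}_t(x;f_t^*)$, with the right-hand side given by \eqref{eq:G_l_app}. Importantly, $f_t^*$ does not depend on $r$, so a single choice of classifiers is simultaneously optimal for every routing decision.

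The second step uses this bound to get
\begin{align*}
\mc{R}(\pi)\ge\E\big[\mc{R}_{r(X)}(X;f^*_{r(X)})\big]\ge\E\big[\min_{t}\mc{R}_t(X;f_t^*)\big].
\end{align*}
Both inequalities become equalities for $r=r^*_\Flat$ as in \eqref{eq:r_star_multi_app}. The pair $(f^*_{1:T},r^*_\Flat)$ therefore attains the lower bound over all of $\Pi_\Flat$, and hence lies in the argmin \eqref{eq:def_f_star_flat}.

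The main obstacle is technical rather than conceptual: the argmax and argmin selections must be chosen measurably so that $f^*_t$ and $r^*_\Flat$ are genuine elements of $\Pi_\Flat$. I would handle this by breaking ties toward the smallest index. Since $[K]$ and $[T]$ are finite and $p(y\mid x,z_t)$ and $\mc{R}_t(x;f_t^*)$ are measurable functions (by existence of regular conditional probabilities), the smallest-index argmax and argmin are measurable. This also makes the minimum in the definition of the optimal risk attained rather than merely an infimum.
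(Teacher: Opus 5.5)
Your proposal is correct and takes essentially the same route as the paper: you decompose the risk over the routing events $\{r(X)=t\}$ via the tower property, minimize each $\mc{R}_t(x;f_t)$ pointwise with $f_t^*$ (which does not depend on $r$), and then route each $x$ to a stage minimizing $\mc{R}_t(x;f_t^*)$. Writing this as a single chain of lower bounds attained by $(f^*_{1:T},r^*_\Flat)$, and adding smallest-index tie-breaking for measurability, only makes the paper's two-part sequential minimization slightly more explicit.
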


\begin{proof}

We first prove the Bayes-optimal solution for the classifiers $f_{1:T}^*$ then the routing~function~$r^*_\Flat$. 
\paragraph{Part 1: Proof for the Bayes-optimal  $f^*_{1:T}$.} We first minimize $\mc{R}(\pi)\equiv \mc{R}(\pi;c)$ with respect to $f_{1:T}$ for an arbitrary routing rule  $r_\Flat : \mc{X}\to [T]$. The $0$-$1$-$c$ risk can be written as  
\begin{align}\label{eq:01c_risk_expanded_for_flat}
    \mc{R}(\pi)&=\E\Big[\ell(X,Z_{1:T}, Y; \pi, c)\Big] \nonumber\\
    &=\E\Big[\Ind\Big\lbrace f_{r_\Flat(X)}(X, Z_{r_\Flat(X)})\ne Y\Big\rbrace +c_{r_\Flat(X)}\Big]\nonumber\\
    &= \sum_{t=1}^T\E\Big[\Ind\{r_\Flat(X)=t\}\cdot \Big(\Ind\big\lbrace f_{t}(X,Z_t)\ne Y \big\rbrace +c_t\Big)\Big]\nonumber\\
    &= \sum_{t=1}^T\E\Bigg[\Ind\{r_\Flat(X)=t\}\cdot \underbrace{\Big(\E\Big[\Ind\big\lbrace f_{t}(X,Z_t)\ne Y \big\rbrace \mid X\Big]+c_t\Big)}_{=:\mc{R}_t(X; f_t)}\Bigg],
\end{align}
where the last step applies the tower property of conditional expectation, and  $\mc{R}_t(X;f_t)$ denotes the conditional risk of the $t$-th classifier given $X$; the semicolon here emphasizes that $X$ is a variable and $f_t$ is treated as a parameter. 

Since $c_t$ is a constant that does not depend on $f_{1:T}$, minimizing $\mc{R}(\pi)$ with respect to $f_t$ reduces to minimizing $\mc{R}_t(X; f_t)$ pointwise in $X$. Fixing $x\in\mc{X}$, by the tower property, we have 
\begin{align}
   \mc{R}_t(x; f_t)&:= \E\Big[ \Ind\big\lbrace f_{t}(X,Z_t)\ne Y \big\rbrace\mid X=x\Big]+c_t\nonumber\\
    &=\E\Big[\E\big[ \Ind\big\lbrace f_{t}(X,Z_t)\ne Y \big\rbrace\mid X=x, Z_t\big]\mid X=x\Big]+c_t\nonumber\\
    &=\E\Big[\mathbb{P}\big( Y\ne f_t(x, Z_t)\mid X=x, Z_t\big)\mid X=x\Big]+c_t\nonumber\\
    &=1-\E\Big[p(f_t(x,Z_t)\mid x,Z_t)\mid X=x\Big]+c_t,\label{eq:conditional_risk_flat_ft_given_x}
\end{align}
where $p(\cdot \mid x,Z_t)$ denotes the conditional probability of $Y$ given $X=x$ and $Z_t$.

Therefore, minimizing $\mc{R}_t(x;f_t)$ with respect to $f_t$ reduces to maximizing $p(f_t(x,z_t)\mid x,z_t)$ for each  $(x,z_t)\in\mc{X}\times \mc{Z}_t$, so pointwise in $(x,z_t)\in\mc{X}\times \mc{Z}_t$, we have 
\begin{align}
f_t^*(x,z_t) \in 
%\argmin_{y\in[K]}1-p(y\mid x, z_t)+c_t= 
\argmax_{y\in[K]} p(y\mid x,z_t),\label{eq:ft_star_flat_proof}
\end{align}
which proves \eqref{eq:f_star_multi_app}. 
\paragraph{Part 2: Proof for the Bayes-optimal   $r^*_\Flat$.} We now minimize $\mc{R}(\pi)\equiv \mc{R}(\pi;c)$ with respect to  $r_\Flat$ holding  $f_{1:T} =f^*_{1:T}$   fixed. Substituting \eqref{eq:ft_star_flat_proof} into \eqref{eq:conditional_risk_flat_ft_given_x}, the conditional Bayes  risk  of the $t$-th classifier is
\begin{align}
\mc{R}_t(x;f^*_t)&:=\E\Big[\Ind\big\lbrace f^*_{t}(X,Z_t)\ne Y \big\rbrace \mid X=x\Big]+c_t\nonumber \\
    &= 1 -\mathbb{E}
       \left[
          \max_{y\in[K]} p(y\mid x,Z_t)
       \mid X=x\right]
       + c_t,
\end{align}
yielding \eqref{eq:G_l_app}.
Since $\pi=(f^*_{1:T}, r_\Flat)$ here, we write $\mc{R}(f^*_{1:T}, r_\Flat)$ to mean $\mc{R}(\pi)$. From \eqref{eq:01c_risk_expanded_for_flat},   the unconditional risk is  $\mc{R}(f^*_{1:T}, r_\Flat)=\sum_{t=1}^T\E[\Ind\{r_\Flat(X)=t\}\cdot \mc{R}_t(X; f_t^*)]$, so minimizing $\mc{R}(f^*_{1:T}, r_\Flat)$ with respect to $r_\Flat$ reduces to a pointwise problem in $X$.  For each  $x\in\mc{X}$, the optimal routing function selects the classifier with the minimum conditional risk, that is,
\begin{align}
r^*_\Flat(x) \in \argmin_{t\in[T]} \mc{R}_t(x;f^*_t),
\end{align}
which proves \eqref{eq:r_star_multi_app}.

\end{proof}

We now characterize the Bayes-optimal solution within the hypothesis class $\wt\Pi_\Flat$ under the   surrogate hinge loss $\ell_\Flat^\hinge$ associated with the population risk $\mc{R}_\Flat^\hinge(\tpi)=\E\big[\ell_\Flat^\hinge(X, Z_{1:T}, Y;\tpi, c)\big]$.

\begin{lemma}[Bayes-optimal solution in $\wt{\Pi}_\Flat$ for hinge loss]\label{lem:BO_solution_flat_hinge}There exists a Bayes-optimal solution 
\begin{align*}
    \tpi_{\Flat}^*=(\tf_{1:T}^*, \tr^*_\Flat)\in \argmin_{\tpi\in \wt{\Pi}_\Flat} \mc{R}_\Flat^\hinge (\tpi),
\end{align*}
where, with $f^*_t$ defined as in \eqref{eq:f_star_multi_app}, 
\begin{align}\label{eq:f_star_tilde_flat}
\left[\tf^*_t(x,z_t)\right]_k=\begin{cases}
    1& \text{for}\;\; k =f^*_t(x, z_t)\\
    %=\argmax_{y\in[K]}p(y\mid x, z_t)\\
    -\tfrac{1}{K-1} & \text{otherwise}
\end{cases}\quad  \forall   t\in[T]\,,
\end{align}
and \begin{align}\label{eq:r_star_tilde_flat}
&\tr^*_\Flat(x)
\in\left\lbrace v\in \Delta^{T-1}: v_k=0 \quad \forall k\notin \argmin_{t\in[T]} \mc{R}_t(x; f_t^*) \right\rbrace    \,.
\end{align}
    
\end{lemma}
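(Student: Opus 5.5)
The plan follows the two-part structure of the proof of Lemma~\ref{lem:bayes_opt_flat}: first optimize the score functions $\tf_{1:T}$ for an arbitrary soft router, then optimize the router with the scores fixed at their optimum. By linearity of expectation and the tower property, the hinge risk decomposes as
\begin{align*}
\mc{R}^\hinge_\Flat(\tpi)=\sum_{t=1}^T \E\Big[\tr_t(X)\,\E\big[\ell^\hinge(\tf_t(X,Z_t),Y;c_t)\mid X\big]\Big].
\end{align*}
Because $\tr_t(X)\ge 0$ depends only on $X$, for any fixed router the risk is minimized by minimizing each inner conditional hinge risk $\E[\ell^\hinge(\tf_t(x,Z_t),Y;c_t)\mid X=x]$ separately for each $t$. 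A pointwise minimizer in $(x,z_t)$ does this uniformly in $x$, and hence works simultaneously for every router.

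For fixed $(x,z_t)$, write $p_k=p(k\mid x,z_t)$. The cost term $\tfrac{K}{K-1}c_t$ does not depend on $v$, so it suffices to minimize
\begin{align*}
g(v)=\sum_{k}(1-p_k)\,\big[v_k+\tfrac{1}{K-1}\big]_+
\end{align*}
over the hyperplane $\sum_k v_k=0$. This is the conditional risk of the Lee--Lin--Wahba/Tarigan hinge loss, and its minimizer is known to be the vector with $1$ in coordinate $\argmax_k p_k$ and $-\tfrac{1}{K-1}$ elsewhere. I would verify this directly as follows. Set $u_k=v_k+\tfrac{1}{K-1}$, so that $\sum_k u_k=\tfrac{K}{K-1}$. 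Since $[u_k]_+\ge u_k$, we have $\sum_k [u_k]_+\ge \tfrac{K}{K-1}$. Using $1-p_k\ge 1-p_{\max}$ then gives
\begin{align*}
g(v)\ge (1-p_{\max})\sum_k[u_k]_+\ge \tfrac{K}{K-1}(1-p_{\max}).
\end{align*}
The proposed vector attains this bound, since its only nonzero $u$-coordinate is $\tfrac{K}{K-1}$, placed at the argmax. Consequently the minimal conditional hinge risk equals $\tfrac{K}{K-1}\big(1-p(f^*_t(x,z_t)\mid x,z_t)+c_t\big)$.

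Next I take the conditional expectation over $Z_t$ given $X=x$. Comparing with \eqref{eq:G_l_app}, this yields the identity
\begin{align*}
\min_{\tf_t}\E\big[\ell^\hinge\mid X=x\big]=\tfrac{K}{K-1}\,\mc{R}_t(x;f^*_t).
\end{align*}
Substituting back, the router problem becomes the pointwise minimization of $\tfrac{K}{K-1}\sum_t \tr_t(x)\,\mc{R}_t(x;f^*_t)$ over $\Delta^{T-1}$. This is a linear function on the simplex, so it is minimized exactly by the distributions supported on $\argmin_t \mc{R}_t(x;f^*_t)$, which is \eqref{eq:r_star_tilde_flat}.

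To conclude, I would argue that the pair $(\tf^*_{1:T},\tr^*_\Flat)$ attains the global minimum. For any $\tpi$,
\begin{align*}
\mc{R}^\hinge_\Flat(\tpi)\ \ge\ \E\Big[\sum_t \tr_t(X)\,\tfrac{K}{K-1}\mc{R}_t(X;f^*_t)\Big]\ \ge\ \E\Big[\tfrac{K}{K-1}\min_t \mc{R}_t(X;f^*_t)\Big],
\end{align*}
and both inequalities become equalities at the proposed solution. Measurability of the argmax selections is routine via a fixed tie-breaking rule. The main obstacle is the pointwise hinge minimization step, specifically getting the lower bound argument right under the sum-to-zero constraint. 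The shift to $u$ reduces it to the inequality $\sum_k [u_k]_+\ge\sum_k u_k$, so it should be short.
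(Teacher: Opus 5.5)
Your proposal is correct and follows the paper's proof: it fixes the scores at the pointwise hinge minimizer, uses $\mc{R}^\hinge_t(x,z_t;\tf^*_t)=\tfrac{K}{K-1}\mc{R}_t(x,z_t;f^*_t)$, and then minimizes a function of $\tr_\Flat(x)$ that is linear over the simplex. The only differences are minor. You verify the pointwise minimizer directly via $\sum_k[u_k]_+\ge\tfrac{K}{K-1}$, the same bound the paper uses in Lemma~\ref{lem:K_mistake}, whereas the paper cites \cite[Lemma~1]{tarigan2008}; and your closing two-inequality chain states explicitly that the sequentially chosen pair is jointly optimal, which the paper leaves implicit.
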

\begin{proof}
    We prove \eqref{eq:f_star_tilde_flat} and \eqref{eq:r_star_tilde_flat} one at a time. 

\paragraph{Part 1:  Proof for the Bayes-optimal  $\tf^*_{1:T}$.}The result in   \eqref{eq:f_star_tilde_flat} directly follows from   \cite[Lemma~1]{tarigan2008}.

\paragraph{Part 2: Proof for the Bayes-optimal $\tr_\Flat^*$.} We now minimize $\mc{R}_\Flat^\hinge(\tpi)$ w.r.t.\ $\tr_\Flat$ holding $\tf_{1:T}=\tf^*_{1:T}$  fixed.  
By definition the overall flat-router hinge risk  is
\begin{align*}
\mc{R}_\Flat^\hinge(\tpi)=\E\Bigg[\sum_{t=1}^T [\tr_\Flat(X)]_t\cdot \ell^\hinge(\tf_t(X,Z_t), Y; c_t)\Bigg]
\end{align*}
Using $\tf_{1:T} = \tf_{1:T}^*$ and  writing $\tilde{\pi}=(\tf^*_{1:T}, \tr_\Flat)$, we have via  Lemma~\ref{lem:auxiliary_ft_tilde_star} that 
\begin{align}
\mc{R}_t^\hinge(x, z_t; \tf_t^*) &: = \E\Big[\ell^\hinge(\tf_t^*(X,Z_t), Y; c_t)\mid X=x, Z_t=z_t\Big]\nonumber\\
&=\tfrac{K}{K-1}\mc{R}_t(x,z_t; f_t^*) \,,
%\left(1-\max_{y\in[K]} p(y\mid x,z_t) + \kappa_t\right),
\end{align}
which implies that the overall hinge risk conditioned on $X=x$ 
can be written as  
\begin{align}
    \mc{R}^\hinge(x; \tpi)&:=\sum_{t=1}^T[\tr_\Flat(x)]_t\cdot \E \left[\mc{R}_t^\hinge(x, Z_t; \tf_t^*)\mid X=x\right]\nonumber\\
    &= \tfrac{K}{K-1}
    \sum_{t=1}^T
[\tr_\Flat(x)]_t\cdot \E\left[\mc{R}_t(x,Z_t; f_t^*)\mid X=x\right]\nonumber\\
%     \tfrac{K}{K-1}
%     \sum_{t=1}^T
% [\tr_\Flat(x)]_t\cdot \E\left[1-\max_{y\in[K]} p(y\mid x,Z_t) + \kappa_t\mid X=x\right]\nonumber\\
& = \tfrac{K}{K-1}\sum_{t=1}^T [\tr_\Flat(x)]_t\cdot \mc{R}_t(x; f_t^*)\,,
\end{align}
where the last step holds by definition of $\mc{R}_t(x;f_t^*)$ in \eqref{eq:G_l_app}. 
Therefore  any minimizer  $\tr_\Flat^*(x)\in \Delta^{T-1}$ of  $\mc{R}^\hinge( x; \tpi)$ must be supported on the set  $\argmin_{t\in[T]} \mc{R}_t(x; f_t^*)$, establishing \eqref{eq:r_star_tilde_flat}. In particular, any one-hot vector concentrated on any element of this set is a valid choice.
\end{proof}

\subsection{Relationship Between $0$-$1$-$c$ and Hinge Conditional Risks}\label{app:connection_hinge_01c_conditional_bayes_risk}
Using the characterization of the Bayes-optimal solutions from Lemmas~\ref{lem:bayes_opt_flat}--\ref{lem:BO_solution_flat_hinge}, the next two lemmas connect the hinge and $0$-$1$-$c$ conditional Bayes risks, and the hinge and $0$-$1$-$c$ excess risks, respectively, which will  be handy for our calibration  proof in the next section. 
\begin{lemma}[Hinge and $0$-$1$-$c$ conditional Bayes risks   differ by a factor of $\frac{K}{K-1}$]\label{lem:auxiliary_ft_tilde_star} 
For $t\in[T]$, let $ f_t^*$
%$: \mc{X}\times \mc{Z}_t\to [K]$
and $ \tf^*_t$
%:\mc{X}\times\mc{Z}_t\to \R^K$ 
denote the  Bayes-optimal  classifiers from Lemmas~\ref{lem:bayes_opt_flat}--\ref{lem:BO_solution_flat_hinge} 
%in $\Pi_\Flat$ and $\wt{\Pi}_\Flat$ 
under the $0$-$1$-$c$ loss and the hinge loss, respectively.  
Define the conditional Bayes risks \begin{align}\label{eq:def_Rt_x_zt}
   \mc{R}_t (x,z_t;  f_t^*)&:= \ex\Big[\Ind\big\{f_t^*(X, Z_t)\ne Y\big\}+ c_t \mid X=x, Z_t=z_t\Big]
   %&=\tfrac{K}{K-1} \left(1-\max_{y\in[K]} p(y\mid x,z_t) + c_t\right) 
   ,\\
\mc{R}_t^\hinge(x,z_t; \tf_t^*)&:= \ex\Big[\ell^\hinge(\tf^*_t(X,Z_t), Y;c_t)\mid X=x, Z_t=z_t\Big],
\label{eq:def_Rt_hinge_x_zt}
   %&=\tfrac{K}{K-1} \left(1-\max_{y\in[K]} p(y\mid x,z_t) + c_t\right) 
 \end{align}
then  
 % \begin{align}
 % &\mc{R}_t(x,z_t; f^*_t)=
 %  1-\max_{y\in[K]} p(y\mid x,z_t) + c_t .
 % \end{align}
 \begin{align}
     \mc{R}_t^\hinge(x,z_t; \tf_t^*)=\tfrac{K}{K-1} \mc{R}_t(x,z_t; f^*_t).\label{eq:stage_t_hinge_risk_BO}
 \end{align}
\end{lemma}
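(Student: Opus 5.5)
The plan is a direct pointwise computation. Fix $(x,z_t)$ and write $p_k := p(k\mid x,z_t)$ for $k\in[K]$, and let $k^* := f_t^*(x,z_t)\in\argmax_{k} p_k$. By \eqref{eq:f_star_tilde_flat}, the score vector $v:=\tf_t^*(x,z_t)$ has $v_{k^*}=1$ and $v_k=-\tfrac{1}{K-1}$ for $k\ne k^*$. The first step is to evaluate the hinge summand $[v_k+\tfrac{1}{K-1}]_+$ for every $k$. For $k=k^*$ it equals $1+\tfrac{1}{K-1}=\tfrac{K}{K-1}$. For $k\neq k^*$ it vanishes.

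Next we compute the hinge loss for each label $y$. If $y=k^*$, all summands over $k\ne y$ vanish, so $\ell^\hinge(v,y;c_t)=\tfrac{K}{K-1}c_t$. If $y\neq k^*$, exactly one summand is nonzero, namely the one with $k=k^*$, so $\ell^\hinge(v,y;c_t)=\tfrac{K}{K-1}+\tfrac{K}{K-1}c_t$. Both cases combine into the identity
\[
\ell^\hinge(v,y;c_t)=\tfrac{K}{K-1}\bigl(\Ind\{k^*\neq y\}+c_t\bigr).
\]
This holds deterministically for every $y$, not merely in expectation.

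The last step takes the conditional expectation given $X=x,Z_t=z_t$. Since $v$ and $k^*$ are fixed functions of $(x,z_t)$, we get $\mc{R}_t^\hinge(x,z_t;\tf_t^*)=\tfrac{K}{K-1}\,\E[\Ind\{f_t^*(X,Z_t)\ne Y\}+c_t\mid X=x,Z_t=z_t]=\tfrac{K}{K-1}\mc{R}_t(x,z_t;f_t^*)$. Equivalently, this equals $\tfrac{K}{K-1}(1-\max_y p(y\mid x,z_t)+c_t)$.

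There is no real obstacle in this argument. The only points needing care are the tie-breaking in $\argmax$ and confirming that the same selection $f_t^*$ is used in both \eqref{eq:f_star_multi_app} and \eqref{eq:f_star_tilde_flat}, which the lemma's statement already fixes. One should also note that $v$ satisfies the sum-to-zero constraint, since $1-(K-1)\cdot\tfrac{1}{K-1}=0$, so $v$ is a valid element of $\wt\Pi_\Flat$. The identity is then purely algebraic.
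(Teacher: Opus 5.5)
Your proposal is correct and matches the paper's proof: the paper also plugs the score vector of \eqref{eq:f_star_tilde_flat} into $\ell^\hinge$, gets $\tfrac{K}{K-1}c_t$ when $y=f_t^*(x,z_t)$ and $\tfrac{K}{K-1}(1+c_t)$ otherwise, and averages over $p(\cdot\mid x,z_t)$. Writing the pointwise identity $\ell^\hinge(v,y;c_t)=\tfrac{K}{K-1}(\Ind\{k^*\neq y\}+c_t)$ before taking the expectation is a slightly cleaner packaging of that same computation.
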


\begin{proof}
%Let $y_\MAP (x,z_t)\in \argmax_{y\in[K]} p(y\mid x,z_t)$.  Then 
We can write
    \begin{align}
 \mc{R}_t^\hinge(x,z_t; \tf_t^*) &  := \ex \left[\ell^\hinge(\tf^*_t(X,Z_t), Y;c_t)\mid X=x, Z_t=z_t\right]\nonumber\\
  &   =\sum_{y\in[K]}p(y\mid x, z_t)  \ell^\hinge(\tf^*_t(x,z_t), y;c_t) \nonumber\\
  &=p\big(f_t^* (x,z_t)\mid x, z_t\big) \tfrac{K}{K-1}c_t+  \Big(1-p\big(f_t^* (x,z_t)\mid x, z_t\big)\Big)\tfrac{K}{K-1}(1+c_t)\nonumber\\
&=\tfrac{K}{K-1}\Big(1-  p(f^*_t(x,z_t)\mid x,z_t) + c_t\Big) \nonumber\\
% &=\tfrac{K}{K-1} \left(1-\max_{y\in[K]} p(y\mid x,z_t) + c_t\right) .
&=\tfrac{K}{K-1} \Big(1-\max_{y\in[K]} p(y\mid x,z_t) + c_t\Big)=\tfrac{K}{K-1}\mc{R}_t(x, z_t; f_t^*). \label{eq:Rt_ft_star_Rt_tft_star}
\end{align} 
\end{proof}

\begin{lemma}[Calibration of the stage-wise cost-sensitive hinge loss $\ell^\hinge$]\label{lem:K_mistake}

Let $\tilde{f}_t$ be a soft classifier as defined in 
%as in $\widetilde{\Pi}_{\Flat}$ (see Eq.~
\eqref{eq:flat_space} and $f_t$  its induced hard classifier in \eqref{eq:hard_ft_from_soft_ft}. Define the  conditional risks  $\mc{R}_t(x, z_t; f_t)$ and $\mc{R}^{\hinge}_t(x,z_t; \tilde{f}_t)$ analogously to \eqref{eq:def_Rt_x_zt}--\eqref{eq:def_Rt_hinge_x_zt}. 
%as in Lemma~\ref{lem:auxiliary_ft_tilde_star}, with $f^*_t$ and $\tilde{f}_t^*$ replaced by $f_t$ and $\tilde{f}_t$.
Then, for all $t \in [T]$ and $(x, z_t) \in \mathcal{X} \times \mathcal{Z}_t$, 
\begin{align}
\mc{R}_t(x,z_t; f_t)- \mc{R}_{t}( x, z_t; f_{t}^*)\leq (K-1) \left( \mc{R}^{\hinge}_t(x, z_t; \tilde{f}_t)- \mc{R}^{\hinge}_{t}( x, z_t; \tilde{f}_{t}^*) \right)  .\label{eq:stagewise_calibration_hinge}
\end{align}
\end{lemma}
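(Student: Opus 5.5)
The plan is to reduce \eqref{eq:stagewise_calibration_hinge} to a cost-free, pointwise statement about the multiclass hinge loss of \citet{tarigan2008}, and then verify it by a finite-dimensional minimization over score vectors.

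\textbf{Step 1 (removing the cost).} Fix $(x,z_t)$ and write $p_k := p(k\mid x,z_t)$. Let $v := \tf_t(x,z_t)$, which satisfies $\sum_k v_k = 0$, and let $j := f_t(x,z_t) \in \argmax_k v_k$. Set $p_{\max} := \max_k p_k$.

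The cost enters $\mc{R}_t(x,z_t;\cdot)$ only as the additive constant $c_t$. It enters $\mc{R}^\hinge_t(x,z_t;\cdot)$ only as the additive constant $\tfrac{K}{K-1}c_t$. Both constants cancel in the two excess risks. The left-hand side of \eqref{eq:stagewise_calibration_hinge} is therefore $p_{\max}-p_j$. Using Lemma~\ref{lem:auxiliary_ft_tilde_star}, the claim becomes
\begin{align*}
L(v) - \tfrac{K}{K-1}(1-p_{\max}) \;\ge\; \tfrac{1}{K-1}\,(p_{\max}-p_j),
\end{align*}
where, after exchanging the two sums,
\begin{align*}
L(v) := \sum_{y}p_y\sum_{k\ne y}\big[v_k+\tfrac1{K-1}\big]_+ = \sum_{k}(1-p_k)\big[u_k\big]_+, \qquad u_k := v_k+\tfrac1{K-1}.
\end{align*}

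\textbf{Step 2 (minimizing over the feasible scores).} I would minimize $L$ over the set
\begin{align*}
U_j := \Big\{u:\ \textstyle\sum_k u_k=\tfrac{K}{K-1},\ u_j\ge u_k\ \forall k\Big\},
\end{align*}
which contains every $u$ whose induced hard prediction is $j$.

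First, reduce to $u\ge 0$. Zero out the negative coordinates, then rescale the positive ones down so the sum is again $\tfrac{K}{K-1}$. This does not increase $L$, and it preserves both $u_j$ being maximal and membership in $U_j$.

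On the nonnegative part of $U_j$, $L$ is linear. Its minimum is therefore attained at a vertex of that polytope. The vertices are exactly the vectors $u^{S}$ with $u_k = \tfrac{K}{(K-1)m}$ for $k\in S$ and $u_k=0$ otherwise, where $S\ni j$ and $|S|=m$. Identifying these vertices correctly is the one step I would double-check; it is the main obstacle.

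\textbf{Step 3 (checking each vertex).} With $P(S):=\sum_{k\in S}p_k$, the value at a vertex is $L(u^S)=\tfrac{K}{(K-1)m}\big(m-P(S)\big)$. After rearranging, the desired inequality at this vertex is equivalent to
\begin{align*}
\tfrac{K}{m}P(S) \;\le\; (K-1)p_{\max}+p_j .
\end{align*}
Since $S$ contains $j$ together with $m-1$ other labels, each of mass at most $p_{\max}$, we have $P(S)\le p_j+(m-1)p_{\max}$. Substituting this bound, the right-hand side minus the left-hand side is at least
\begin{align*}
\big(\tfrac{K}{m}-1\big)(p_{\max}-p_j) \;\ge\; 0,
\end{align*}
which holds because $m\le K$ and $p_{\max}\ge p_j$.

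Every $u\in U_j$ therefore satisfies the inequality, and in particular so does the $u$ coming from $\tf_t(x,z_t)$. This proves \eqref{eq:stagewise_calibration_hinge} for every $t$ and every $(x,z_t)$. Ties in the argmax defining $j$ are harmless, because the argument holds for whichever maximizer $f_t$ selects.
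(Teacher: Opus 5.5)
Your proof is correct, but it takes a different route from the paper's. After the same cost cancellation and the same rewriting $L(v)=\sum_k(1-p_k)[u_k]_+$, the paper never optimizes over score vectors. It sets $A:=\sum_k[u_k]_+$ and writes $L(v)=A-p_j[u_j]_+-\sum_{k\ne j}p_k[u_k]_+$. Bounding $p_k\le p_{\max}$ for $k\ne j$ gives $L(v)\ge A(1-p_{\max})+[u_j]_+(p_{\max}-p_j)$. It then uses $A\ge\frac{K}{K-1}$ (from $\sum_k v_k=0$) and $[u_j]_+\ge A/K\ge\frac{1}{K-1}$ (from $j$ being a maximizer). This is a short inequality chain with no reduction step and no polytope geometry. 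It also displays the excess hinge risk as a nonnegative ``scale'' term $(A-\frac{K}{K-1})(1-p_{\max})$ plus a misclassification term weighted by $[u_j]_+$.

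Your LP route costs more. You need the rescaling reduction to $u\ge 0$, which is valid: rescaling by $\lambda\le 1$ multiplies $L\ge 0$ by $\lambda$ and keeps $u_j$ maximal. You also need the vertex characterization you flagged, which is correct. On the nonnegative slice, $u_j\ge 0$ is never tight. For each $k\ne j$, at most one of $u_k\ge 0$ and $u_k\le u_j$ can be tight, since both would force $u_j=0$, contradicting $\sum_k u_k=\frac{K}{K-1}>0$. A vertex needs $K-1$ independent tight inequalities alongside the equality constraint, so $u_k\in\{0,u_j\}$ for every $k\ne j$.

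In return you get the exact value $\min_{u\in U_j}L(u)=\min_{S\ni j}\frac{K}{(K-1)|S|}\bigl(|S|-P(S)\bigr)$. This identifies the extremal configuration: $S=[K]$ (i.e.\ $v=0$, with the argmax tie resolved to $j$) and all labels other than $j$ tied at $p_{\max}$. There both of your inequalities hold with equality, so the stage-wise constant $K-1$ is attained exactly. This is the configuration the paper's lower-bound construction in Proposition~\ref{prop:LB} approaches via $v^{(s)}$ as $s\to 0$.
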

As an immediate remark, since  \eqref{eq:stagewise_calibration_hinge} holds for every $z_t$, we can trivially obtain the same bound for the risk conditioned on $X=x$ only. Likewise, in our proofs for the cascade model in Appendix~\ref{app:cascade_all}, \eqref{eq:stagewise_calibration_hinge} also holds with $(x, z_t)$ replaced by $h_t$, since   $p(y\mid h_t)=p(y\mid x, z_t)$ under the conditional independence assumption~\eqref{eq:conditional_indep} and both $f_t$ and $\tilde{f}_t$ depend on $h_t$ only through $(x, z_t)$.
\begin{proof} We begin by setting up some notation. Fix $(x,z_t)$ and let $p_y := p(y \mid x, z_t)$. Let  $v:= \tilde{f}_t(x, z_t)\in \mathbb{R}^K$ be the   score vector that $\tf_t$ outputs and $j:= f_t(x, z_t)\in [K]$ the class index that $f_t$ outputs. By the argmax map in \eqref{eq:hard_ft_from_soft_ft}, we have $  j\in \argmax_{k \in [K]} v_k$. 
%be the predicted class of $f_t, \tilde{f}_t$. 
Using the definition of the hinge loss $\ell^\hinge$ in \eqref{eq:stage_wise_surrogate}, we define  its cost-free variant  $\bar{\ell}^\hinge$   as  
\begin{align}
    \bar{\ell}^{\hinge}(v,y) := \ell^\hinge(v, y; 0) =\sum_{k \in [K] \setminus y} \left[ v_k+ \tfrac{1}{K-1} \right]_+ \,. 
\end{align}
We further define two useful shorthand: 
$L(v)$ as the associated  conditional risk, 
%expectation over $Y$, 
and $U$ as the total penalty:
\begin{align}
    L(v) &:=\E\left[\bar{\ell}^\hinge (v, Y)\mid X=x, Z_t=z_t\right]=\sum_{y \in [K]} p_y \bar{\ell}^{\hinge}(v,y), \label{eq:def_L(v)_single_stage_proof} \\
    U &:= \sum_{k \in [K]}  \left[ v_k+ \tfrac{1}{K-1} \right]_+.\label{eq:def_U}
\end{align}
We first express both excess risks in~\eqref{eq:stagewise_calibration_hinge} in terms of $p_y$ and $L(v)$, then derive a lower bound for the conditional expectation $L(v)$ to obtain the final inequality.
\paragraph{Part 1. Rewriting the stage-wise excess risks:}
By definition, we can write
\begin{align}
     \mc{R}^{\hinge}_t(x, z_t; \tilde{f}_t) &:= \ex\left[ \ell^{\hinge}(\tilde{f}_t(X, Z_t), Y; c_t)\mid X=x, Z_t=z_t\right] \nonumber\\
     & = \sum_{y \in [K]} p_y \ell^{\hinge}(v,y;c_t) = L(v) +\tfrac{K}{K-1}c_t.
\end{align}
Combined with $ \mc{R}^{\hinge}_{t}( x, z_t; \tilde{f}_{t}^*) = \tfrac{K}{K-1}\left(1 - \max_{y \in [K]}p_y\right) + \tfrac{K}{K-1}c_t$ from \eqref{eq:Rt_ft_star_Rt_tft_star}, the cost terms cancel and the excess stage-wise hinge risk can be written into 
\begin{align}
    \mc{R}^{\hinge}_t(x, z_t; \tilde{f}_t)- \mc{R}^{\hinge}_{t}( x, z_t; \tilde{f}_{t}^*) = L(v)  - \tfrac{K}{K-1}\Big(1 - \max_{y \in [K]}p_y\Big).\label{eq:stagewise_hinge_cal_excess}
\end{align}
Similarly, for the stage-wise $0$-$1$-$c$ risk, 
since $f_t$ predicts label $j$, we can write
\begin{align}
\mc{R}_t(x, z_t; f_t) = \ex\left[ \Ind\{f_t(X, Z_t) \neq Y\} +c_t \mid X=x, Z_t=z_t\right] = 1-p_j +c_t  . 
\end{align}
Combined with  
%$f^*_t$ predicts the most likely label,  we have 
$\mc{R}_t(x, z_t; f^*_t) = 1 - \max_{y \in [K]} p_y +c_t$ since $f^*_t$ predicts the most likely label, we can write the excess $0$-$1$-$c$ stage-wise risk into
\begin{align}
    \mc{R}_t(x, z_t; f_t)  - \mc{R}_t(x, z_t; f^*_t)  = \max_{y \in [K]} p_y - p_j .\label{eq:stagewise_01c_cal_excess}
\end{align}
In the rest of the proof, we provide a lower bound for $L(v)$, which will be integrated into \eqref{eq:stagewise_hinge_cal_excess} and combined with \eqref{eq:stagewise_01c_cal_excess} to prove the final calibration inequality \eqref{eq:stagewise_calibration_hinge}. 

\paragraph{Part 2. Lower bounding $L(v)$:}
%We can now derive a lower bound for $L(v)$ in terms of $p_y$ and $v_j$.
Using the definition of $L(v)$ in \eqref{eq:def_L(v)_single_stage_proof},  we can rewrite it as follows
\begin{align}
    L(v) &= \sum_{y \in [K]} p_y \sum_{k \in [K] \setminus y} \left[ v_k+ \tfrac{1}{K-1} \right]_+  \stackrel{\text{(i)}} {=} \sum_{k \in [K] } (1-p_k) \left[ v_k+ \tfrac{1}{K-1} \right]_+ \stackrel{\text{(ii)}}{=}   U - \sum_{k \in [K] } p_k \left[ v_k+ \tfrac{1}{K-1} \right]_+ \nonumber\\
    &\stackrel{\text{(iii)}} {=} U -  p_j \left[ v_j+ \tfrac{1}{K-1} \right]_+  - \sum_{k \in [K] \setminus j } p_k \left[ v_k+ \tfrac{1}{K-1} \right]_+ \,,
 \end{align}
 where step (i) follows from exchanging the two summations, step (ii) applies the definition of $U$ from \eqref{eq:def_U}, and step (iii) 
 simply splits  the $j$-th term from the summation. Using this, we now~lower~bound~$L(v)$:
   \begin{align}
  L(v)  &= U -  p_j \left[ v_j+ \tfrac{1}{K-1} \right]_+  - \sum_{k \in [K] \setminus j } p_k \left[ v_k+ \tfrac{1}{K-1} \right]_+ \nonumber\\ 
  & \geq U -  p_j \left[ v_j+ \tfrac{1}{K-1} \right]_+  - \max_{y\in [K]} p_y  \left(U - \left[ v_j+ \tfrac{1}{K-1} \right]_+\right) \nonumber\\
   & = U\left(1 - \max_{y\in [K]} p_y \right)  + \left[ v_j+ \tfrac{1}{K-1} \right]_+\left( \max_{y\in [K]} p_y  - p_j\right) \,,\label{eq:L(v)_LB_stage_wise}
\end{align}
where the inequality   holds by applying  $p_k \leq \max_{y \in [K]} p_y$ and the definition of $U$ from \eqref{eq:def_U}.

\paragraph{Part 3. Combining all components:}
%Two auxiliary inequalities: }
To combine all components of the proof, we   begin with  two auxiliary inequalities about $U$ and the hinge terms. First, 
  $U$ is lower bounded by $\frac{K}{K-1}$ because
\begin{align}
    U &=  \sum_{k \in [K]}  \left[ v_k+ \tfrac{1}{K-1} \right]_+  \stackrel{\text{(i)}}{\ge} \sum_{k \in [K]}  \left( v_k+ \tfrac{1}{K-1}\right) \stackrel{\text{(ii)}}{=}  \tfrac{K}{K-1} \,,
    \label{eq:u_ineq}
\end{align}
where step (i) uses $[x]_+ \geq x$, and step (ii) uses $\sum_{k \in [K]} v_k = 0$.
Second, since the predicted label $j \in \argmax_{k \in [K]} v_k$, we have $v_j \ge v_k$ for all $k\in[K]$, and so 
\begin{align}
    \left[ v_j+ \tfrac{1}{K-1} \right]_+ = \tfrac{1}{K}\sum_{k \in [K]} \left[ v_j+ \tfrac{1}{K-1} \right]_+  \ge \tfrac{1}{K}\sum_{k \in [K]} \left[ v_k+ \tfrac{1}{K-1} \right]_+  = \tfrac{1}{K}\cdot U \ge \tfrac{1}{K-1} \label{eq:vj_ineq}\,,
\end{align}
where 
the last inequality applies the previous inequality~\eqref{eq:u_ineq}.

Equipped with the lower bound in \eqref{eq:L(v)_LB_stage_wise} for $L(v)$ and the auxiliary inequalities \eqref{eq:u_ineq}--\eqref{eq:vj_ineq}, we can now lower bound the excess stage-wise hinge risk from \eqref{eq:stagewise_hinge_cal_excess} as follows:
\begin{align}
  &\mc{R}^{\hinge}_t(x, z_t; \tilde{f}_t)- \mc{R}^{\hinge}_{t}( x, z_t; \tilde{f}_{t}^*)\nonumber\\
  &= L(v)  - \tfrac{K}{K-1}\left(1 - \max_{y \in [K]}p_y\right) \nonumber\\ &\stackrel{\text{(i)}}{\geq} U\left(1 - \max_{y\in [K]} p_y \right)  + \left[ v_j+ \tfrac{1}{K-1} \right]_+\left( \max_{y\in [K]} p_y  - p_j\right)  - \tfrac{K}{K-1}\left(1 - \max_{y \in [K]}p_y\right)\nonumber\\
   &= \underbrace{\left(U - \tfrac{K}{K-1}\right)}_{\geq 0 \text{ by \eqref{eq:u_ineq}}} \underbrace{\left(1 - \max_{y\in [K]} p_y \right)}_{\geq 0}  + \left[ v_j+ \tfrac{1}{K-1} \right]_+\left( \max_{y\in [K]} p_y  - p_j\right)\nonumber \\
   &\stackrel{\text{(ii)}}{\geq}  \left[ v_j+ \tfrac{1}{K-1} \right]_+ \left( \max_{y\in [K]} p_y  - p_j\right)\nonumber \\
   &\stackrel{\text{(iii)}}{\geq}  \tfrac{1}{K-1}\left( \max_{y\in [K]} p_y  - p_j\right) \nonumber \\
   & = \tfrac{1}{K-1}\left(\mc{R}_t(x,z_t; f_t)- \mc{R}_{t}( x, z_t; f_{t}^*)\right)
\end{align}
where   (i) applies \eqref{eq:L(v)_LB_stage_wise},   (ii)  holds by dropping the non-negative term $\left(U \!-\! \tfrac{K}{K-1}\right)  \! \left(1 \!-\! \max_{y\in [K]} p_y \right)$,  
   (iii) uses  \eqref{eq:vj_ineq},  and the last step   uses  the expression of the excess $0$-$1$-$c$ risk from
\eqref{eq:stagewise_01c_cal_excess}. 
Multiplying both sides by $K-1$ proves our final calibration inequality  \eqref{eq:stagewise_calibration_hinge}.
\end{proof}

\subsection{Proof of Theorem~\ref{thm:calibration}}\label{app:proof_calibration}

Using the results in Sections~\ref{app:bayes_opt_flat}--\ref{app:connection_hinge_01c_conditional_bayes_risk},   
we prove our  main calibration result. We first restate the theorem. 

\excessriskflat*

\begin{proof}
First we set up some notation. Let 
\begin{align*}
    \tpi_\Flat=(\tf_{1:T}, \tr_\Flat)\in \wt{\Pi}_\Flat \qquad \text{and}\qquad \pi_\Flat=(f_{1:T}, r_\Flat)=\varphi(\tpi_\Flat)\in \Pi_\Flat
\end{align*} denote  an arbitrary  soft predictor and  its hard counterpart. Let
\begin{align*}
\tpi^*_\Flat=(\tf^*_{1:T}, \tr^*_\Flat)\in \argmin_{\tpi \in  \wt{\Pi}_\Flat}\mc{R}_\Flat^\hinge (\tpi)\qquad \text{and}\qquad     
\pi^*_\Flat=(f_{1:T}^*, r_\Flat^*)\in \argmin_{ \pi \in \Pi_\Flat} \mc{R}(\pi )
\end{align*}
  be the hinge risk minimizer  of Lemma~\ref{lem:BO_solution_flat_hinge}, and    the $0$-$1$-$c$ risk minimizer of Lemma~\ref{lem:bayes_opt_flat}. 
  %as in \eqref{eq:def_f_star_flat}.

 For a given input $x\in\mc{X}$, we use 
 \begin{align}
 t:=r_\Flat(x)\in \argmax_{\tau\in[T]}[\tr_\Flat(x)]_\tau \qquad \text{and} \qquad t^*:=r^*_\Flat(x)\in  \argmin_{\tau\in[T]}\mc{R}_\tau(x; f^*_\tau)\label{eq:def_t_t_star_flat} 
 \end{align}
  to denote the index of the  classifier selected by the hard predictor $\pi_\Flat$ and the Bayes-optimal predictor $\pi^*_\Flat$, respectively.
 
 Hence, to prove \eqref{eq:flat_calibration}, it suffices to prove pointwise in $x\in\mc{X}$ that 
\begin{align}\label{eq:calibration_x}
    \mc{R}_t(x; f_t) - \mc{R}_{t^*}(x;f^*_{t^*}) \le T(K-1)\left(\mc{R}^\hinge_\Flat( x;\tpi) - \mc{R}_\Flat^\hinge (x; \tpi^*)  \right),
\end{align}
where each term here denotes  conditional risks, conditioned on $X=x$, since taking total expectation on both sides yields the desired result. 
We prove \eqref{eq:calibration_x} in the rest of this proof.

\paragraph{Part 1: Decomposition of the conditional  excess  $0$-$1$-$c$ risk.}
We now provide general definitions that we used to define  the terms on the LHS of \eqref{eq:calibration_x}. For $\tau\in[T]$, define in line with \eqref{eq:cond_R_def}, 
\begin{align}
\mc{R}_\tau( x; f_\tau) &\coloneqq \ex\left[\Ind\{f_\tau(x,Z_\tau)\neq Y\}\mid X=x\right]+\kappa_\tau,\\
\mc{R}_\tau( x; f^*_{\tau}) &\coloneqq \ex\left[\Ind\{f_{\tau}^*(x,Z_{\tau})\neq Y\}\mid X=x\right]+\kappa_{\tau}\nonumber\\
&\stackrel{\text{(i)}}{=} 1-\ex\left[\max_{y\in[K]} p(y\mid x,Z_{\tau})\mid X=x\right]+\kappa_{\tau}
%\stackrel{\text{(ii)}}{=}
% \mc{R}_{\tau}(x; f_\tau^*)
,\label{eq:R_tau_cond_x}
\end{align}
where (i) applies the expression for $f_{\tau}^*$ from  \eqref{eq:f_star_multi_app}. 
Combining these, we can write 
\begin{align}
    &\mc{R}_t( x; f_t)-\mc{R}_{t^*}(x; f^*_{t^*})
    = \underbrace{\mc{R}_t( x; f_t) - \mc{R}_t( x; f^*_t)}_{=: \Delta_\internal(x)}+ \underbrace{\mc{R}_t( x; f^*_t)-\mc{R}_{t^*}( x; f^*_{t^*})}_{=:\Delta_\select(x)}
     =\Delta_\internal(x)+ \Delta_\select(x)
   \,.\label{eq:decompose_risk_into_internal_select}
\end{align}
Intuitively, $\Delta_\internal(x)$  denotes the excess risk caused by using a suboptimal classifier as the $t$-th classifier, so it is an excess risk \emph{internal} to the $t$-th stage. Note  $\Delta_\internal(x)\ge0$ because $f^*_t$ is the Bayes-optimal $t$-th classifier; see the explanation around \eqref{eq:conditional_risk_flat_ft_given_x}--\eqref{eq:ft_star_flat_proof}. The $\Delta_{\select}(x)$ term is the excess risk caused by \emph{selecting} a suboptimal exit stage  $t$, and $\Delta_\select(x)\ge 0 $ by the definition of $t^*$ in \eqref{eq:def_t_t_star_flat}. 

\paragraph{Part 2: Decomposition of the conditional  excess hinge risk.}
We can similarly decompose the conditional   excess hinge  risk. To begin, define   the stage-$\tau$ conditional  hinge risk given $X=x$ as
\begin{align}\label{eq:R_tau_hinge_shorthand}
\mc{R}^{\hinge}_\tau(x; \tf_\tau)\coloneqq \E\left[\ell^\hinge(\tf_\tau(x,Z_\tau), Y;\kappa_\tau) \mid X=x\right] \quad \text{for} \quad \tau\in[T]\,.    
\end{align}
Evaluated at the Bayes-optimal solution, this is linked to  the  stage-$\tau$ conditional $0$-$1$-$c$ risk via
\begin{align}
\mc{R}_\tau^\hinge( x; \tf_\tau^*)
&\stackrel{\text{(i)}}{=}\E\big[\mc{R}_\tau^{\hinge}(x, Z_{\tau}; \tf_\tau^*)\mid X=x\big] \nonumber\\
& \stackrel{\text{(ii)}}{=}\tfrac{K}{K-1} \E\big[\mc{R}_\tau (x,Z_\tau; f_\tau^*)\mid X=x\big]
\stackrel{\text{(iii)}}{=}\tfrac{K}{K-1}\mc{R}_\tau(x; f_\tau^*)\quad \forall \tau\in[T]\,,\label{eq:R_hinge_ftau}
\end{align}
where  (i) and (iii) apply the tower property  with respect to $P_{Z_\tau\mid X}(\cdot \mid x)$, and (ii)  applies  \eqref{eq:stage_t_hinge_risk_BO}. This result will become useful shortly.

 Note that  Lemma~\ref{lem:BO_solution_flat_hinge} proves that the support of any Bayes-optimal $\tr_\Flat^*(x)$ is contained in $\argmin_{t\in[T]}\mc{R}_t(x; f_t^*)$, so the discrete router $r_\Flat^*(x)$ from Lemma~\ref{lem:bayes_opt_flat} satisfies 
 \begin{align}\label{eq:r_soft_flat_link_t_star}
  t^*:=r^*_\Flat(x)\in \argmin_{t\in[T]} \mc{R}_t(x; f_t^*) \supseteq
  \argmax_{t\in[T]} \big[\tr_\Flat^*(x)\big]_t\,  .
 \end{align}
 Hence, using \eqref{eq:R_tau_hinge_shorthand} and \eqref{eq:r_soft_flat_link_t_star} and   letting $\mc{R}_\Flat^\hinge( x; \tpi^*)$ denote  the conditional flat-router hinge risk, conditional on $X=x$, 
 we can write  
\begin{align}
\mc{R}_\Flat^\hinge( x; \tpi^*)&:=  \sum_{\tau=1}^T\big[\tr_\Flat^*(x)\big]_{\tau} \mc{R}_\tau^\hinge(x; \tf_\tau^*)\nonumber\\
&\stackrel{\text{(i)}}{=}\sum_{\tau=1}^T\big[\tr_\Flat^*(x)\big]_{\tau}\tfrac{K}{K-1} \mc{R}_\tau(x; f^*_\tau)
\stackrel{\text{(ii)}}{=} \tfrac{K}{K-1}\mc{R}_{t^*}(x; f_{t^*}^*) \stackrel{\text{(iii)}}{=}\mc{R}^\hinge_{t^*}( x; \tf_{t^*}^*) \label{eq:Rf_star_link},
\end{align}
where steps (i) and (iii) hold by  \eqref{eq:R_hinge_ftau}, and step (ii)   uses the fact that $\mc{R}_\tau (x; f_\tau^*)=\mc{R}_{t^*} (x; f_{t^*}^*)$ for all $\tau\in \argmin_{t\in[T]} \mc{R}_t(x; f_t^*)$. 
Using \eqref{eq:R_tau_hinge_shorthand} and  \eqref{eq:Rf_star_link}, we can therefore write 
\begin{align}
&\mc{R}_\Flat^\hinge( x; \tpi)-
\mc{R}_\Flat^\hinge( x; \tpi^*)\nonumber\\
&=
\sum_{\tau=1}^T [\tr_\Flat(x)]_{\tau}\mc{R}_\tau^\hinge(x;\tf_\tau)-  \mc{R}^\hinge_{t^*}(x; \tf_{t^*}^*)\nonumber\\
&=\sum_{\tau=1}^T [\tilde r_\Flat(x)]_\tau\left( \mc{R}^\hinge_\tau( x; \tf_\tau) -\mc{R}^\hinge_\tau(x; \tf_\tau^*)+\mc{R}^\hinge_\tau( x; \tf_\tau^*)
-  \mc{R}^\hinge_{t^*}(x; \tf_{t^*}^*) \right)\nonumber\\
&= \sum_{\tau=1}^T  [\tr_\Flat(x)]_\tau\underbrace{\left( \mc{R}^\hinge_\tau(x;\tf_\tau) -\mc{R}^\hinge_\tau( x; \tf_\tau^*)\right)}_{=:\Delta_\internal^\hinge(x, \tau)}+\sum_{\tau=1}^T [\tr_\Flat(x)]_\tau\underbrace{\left(\mc{R}^\hinge_\tau( x; \tf_\tau^*)
-  \mc{R}^\hinge_{t^*}(x; \tf_{t^*}^*)\right)}_{=:\Delta_\select^\hinge(x, \tau)}\nonumber\\
& = \sum_{\tau=1}^T[\tr_\Flat(x)]_\tau\Delta_\internal^\hinge(x,\tau)+ \sum_{\tau=1}^T[\tr_\Flat(x)]_\tau\Delta_\select^\hinge(x,\tau)\,,
\end{align}
where   we have $\Delta^\hinge_\internal(x, \tau)\ge 0 $ by  applying \cite[Lemma 1]{tarigan2008} to the  Bayes-optimal soft classifier  $\tf_\tau^*$, and $\Delta_\select^\hinge(x, \tau) = \frac{K}{K-1}\left( \mc{R}_\tau (x, f_\tau^*) -  \mc{R}_{t^*} (x, f_{t^*}^*)\right)\ge 0 $ by \eqref{eq:R_hinge_ftau} and the definition of $t^*$ in \eqref{eq:def_t_t_star_flat}.

\paragraph{Part 3: Upper bounding $\Delta_\internal(x)$ and  $\Delta_\select(x)$ by  $\Delta_\internal^\hinge(x,\tau)$ and $\Delta_\select^\hinge(x,\tau)$.}
  
By Lemma~\ref{lem:K_mistake} (given only $X=x$), which links $\Delta_\internal$ and $\Delta^\hinge_\internal$ up to a factor of $K-1$, we can write
\begin{align}
    \Delta_\internal(x)&\le (K-1)\Delta_{\internal}^\hinge(x, t) \nonumber\\
    &\le (K-1)\frac{1}{[\tr_\Flat(x)]_t}\sum_{\tau=1}^T [\tr_\Flat(x)]_\tau\Delta_\internal^\hinge(x, \tau) \nonumber\\
    &{\le} T(K-1) \sum_{\tau=1}^T [\tr_\Flat(x)]_\tau\Delta_\internal^\hinge(x, \tau),\label{eq:Delta_int_UB}
\end{align}
where the last inequality holds because $  [\tr_\Flat(x)]_\tau\ge 0, \Delta_\internal^\hinge(x, \tau)\ge0$ for all $\tau\in[T]$ and,  by the definition of $t$ in \eqref{eq:def_t_t_star_flat}, $[\tr_\Flat(x)]_t\ge \frac{1}{T}$ at the chosen classifier index $t$.  
Similarly, we have 
\begin{align}
\sum_{\tau=1}^T [\tr_\Flat(x)]_\tau\Delta_\select^\hinge(x,\tau)
&=\sum_{\tau=1}^T [\tr_\Flat(x)]_\tau\left(\mc{R}_\tau^\hinge( x; \tf_\tau^*)
-  \mc{R}_{t^*}^\hinge( x; \tf_{t^*}^*)\right)\nonumber\\
& \stackrel{\text{(i)}}{=} \sum_{\tau=1}^T [\tr_\Flat(x)]_\tau \cdot \tfrac{K}{K-1}\left(\mc{R}_\tau(x; f_\tau^*) -  \mc{R}_{t^*}(x; f_{t^*}^*) \right)\nonumber\\
& \stackrel{\text{(ii)}}{\ge}  [\tr_\Flat(x)]_t \cdot \tfrac{K}{K-1}\left(\mc{R}_t(x; f_t^*) -  \mc{R}_{t^*}(x; f_{t^*}^*) \right)\nonumber \\
&\stackrel{\text{(iii)}}{\ge} 
\tfrac{K}{T(K-1)} \Delta_\select(x)\,,\nonumber\\
\implies \Delta_\select(x)& \le \tfrac{T(K-1)}{K}\sum_{\tau=1}^T [\tr_\Flat(x)]_\tau\Delta_\select^\hinge(x,\tau).\label{eq:Delta_select_UB}
\end{align}
where step  (i) applies \eqref{eq:R_hinge_ftau}, step (ii) holds because $[\tr_\Flat(x)]_\tau \ge0$ for all $\tau\in[T]$ 
%(since $\tr_\Flat(x)\in \Delta^{T-1}$) 
and, by the definition of $t^*$ in \eqref{eq:def_t_t_star_flat},  $\mc{R}_\tau(x; f_\tau^*)-\mc{R}_{t^*}(x; f_{t^*}^*)\ge 0$  for all $\tau\in[T]$, and step (iii) holds because $[\tr_\Flat(x)]_t\ge \frac{1}{T}$ and $ \Delta_\select(x)=\mc{R}_t(x; f^*_t)-\mc{R}_{t^*}(x; f^*_{t^*})\ge0$.

\paragraph{Part 4: Combining all components.}
We have so far bounded the selection and internal   excess $0$-$1$-$c$ risks by their  excess hinge risk counterparts, respectively. The final step is to combine these components (Eq.~\eqref{eq:Delta_int_UB} and \eqref{eq:Delta_select_UB}) to obtain
\begin{align}
 \mc{R}_t(x; f_t)- \mc{R}_{t^*}( x; f_{t^*}^*)&=\Delta_\internal(x)+\Delta_\select(x)\nonumber\\
 & \le  T(K-1) \sum_{\tau=1}^T[\tr_\Flat(x)]_\tau \left(\Delta_{\internal}^\hinge(x, \tau)+ \Delta_\select^\hinge(x, \tau)\right) \nonumber\\
 &=  T(K-1) \left(\mc{R}_\Flat^\hinge(x; \tpi)-
\mc{R}_\Flat^\hinge(x; \tpi^*)\right).
\end{align}
This concludes the proof of \eqref{eq:calibration_x}.

\end{proof}

\clearpage

\section{ Proofs for the Cascade Model}\label{app:cascade_all}
In this section, we prove the calibration result  in Theorem~\ref{thm:calibration_casc} for the cascade hinge loss $\ell^\hinge_\casc$. 
Similar to \eqref{eq:def_Rt_x_zt}--\eqref{eq:def_Rt_hinge_x_zt} of the flat case, we first define conditional 0-1-$c$ and hinge risks as
\begin{align}
    \mc{R}_t(h_t; f_t)&: = \ex\Big[\Ind\big\{f_t(X, Z_t)\ne Y\big\}+ c_t \mid H_t=h_t\Big]\,, \label{eq:cond_Rt_casc_1}\\
    \mc{R}_t^\hinge(h_t; \tf_t)&:= \ex\Big[\ell^\hinge(\tf_t(X,Z_t), Y;c_t)\mid  H_t=h_t\Big]\,. \label{eq:cond_Rt_casc_2}
\end{align}
As elsewhere in the paper, we suppress the `0-1-$c$' superscript and use the  `$\hinge$' superscript only to distinguish hinge quantities. Thus, $\mc{R}_t$ means $\mc{R}_t^{0\text{-}1\text{-}c}$, and the same naming convention applies to other quantities throughout.  The calibration proof relies on   \emph{value functions}, a concept standard in  the dynamic programming literature which we restate below for completeness. 
\begin{definition}[Value function]\label{def:value_function}
For a cascade classifier $\pi=(f_{1:T}, d_{1:T})$, a   $\loss\in\{0\text{-}1\text{-}c, \hinge\}$, and the stage index $t\in[T]$, we define the stage-$t$ value function $V_t^{\pi, \loss}(h_t)$ as the expected cost-to-go from stage $t$ onward, given the current history $h_t$. Formally,  define  
\begin{align}\label{eq:def_Vt_new}
    V_t^{\pi,\loss}(h_t):= (1-d_t(h_t))  \mc{R}_t^{\loss}(h_t; f_t) +  d_t(h_t)  \E\left[V_{t+1}^{\pi, \loss}(H_{t+1})\mid H_t=h_t\right],
    \quad \text{for }t<T
\end{align}
and 
\begin{align}\label{eq:def_VT}
    V_T^{\pi,\loss}(h_T):=\mc{R}_T^\loss(h_T; f_T).
\end{align} 
\end{definition}
The recursion in \eqref{eq:def_Vt_new} reflects the two possible actions at stage $t$   given current history $h_t$: if the deferral function exits ($d_t(h_t)=0$), it incurs the immediate conditional risk $\mc{R}_t^\loss(h_t; f_t)$; if it defers  ($d_t(h_t)=1$), it incurs   the   expected future cost  starting from stage $t+1$. At the terminal stage, there is  no further stage to defer to, and the value function reduces to the immediate risk, yielding \eqref{eq:def_VT}. 
Thus, $V_t^{\pi, \loss}(h_t)$   represents the total  expected risk  incurred by following $\pi = (f_{1:T}, d_{1:T})$ from  stage $t$ onward, conditional on $h_t$.

\paragraph{Notation:}
Since   the $0$-$1$-$c$ loss is only  used with hard policies (e.g., $\pi$ and $\pi^*_\casc$) and the  hinge loss only  with soft policies (e.g., $\tpi$ and $\tpi^*_\casc$), we   drop the superscript `$\loss$' from $V_t^{\pi,\loss}(h_t)$ for brevity, as the associated loss  will be  clear from context.  As in Appendix~\ref{app:flat_all}, we use LHS (resp.\ RHS) as short for left (resp.\ right) hand side;  
 we  write $p(\cdot \mid \cdot )$   for  the conditional  probability mass function of $Y$, e.g.,  $p(y \mid x)\equiv \mathrm{Pr}(Y=y\mid X=x)$ and $p(y\mid x, z_t)\equiv \mathrm{Pr} (Y=y\mid X=x,Z_t=z_t)$.

\paragraph{Proof sketch:} This section is organized as follows. We first show that the   $0$-$1$-$c$ and hinge risks for cascades can be expressed in terms of the stage-1 value functions  (Section~\ref{app:connect_risk_value}). Building on this connection,   we   characterize the Bayes-optimal solutions under both the $0$-$1$-$c$ loss and cascade hinge loss (Section~\ref{app:proof_BO_casc_V}). Next, we establish two useful  results linking hinge and $0$-$1$-$c$ quantities (Section~\ref{app:connections_01c_hinge_cascade}). Using this, we complete the proof of the calibration result in Theorem~\ref{thm:calibration_casc} (Section~\ref{app:proof_calibration_casc}).

\subsection{Connection of Risks to Expected Value Functions}\label{app:connect_risk_value}
In this section, we establish an essential connection between the global risk of a cascade classifier $\pi=(f_{1:T}, d_{1:T})$ and its associated value functions. Specifically, Lemma~\ref{lem:global_risk_to_V1} shows that both the  $0$-$1$-$c$ risk $\mc{R}(\pi)$ and hinge risk $\mc{R}_\casc^\hinge(\tpi)$ can be expressed as   expected value functions at the initial stage for arbitrary $\pi\in\Pi_\casc$ and $\tpi\in\wt\Pi_\casc$,   and Corollary~\ref{cor:bayes_to_value} specializes this result to  their Bayes-optimal  counterparts $\pi_\casc^*\in\Pi_\casc$ and $\tpi_\casc^*\in\wt\Pi_\casc$. These connections allow us to analyze the   risks through the recursive structure of the cascade, and  serves as the foundation for characterizing Bayes-optimal solutions in the next section and,  ultimately, for  proving  calibration.

\begin{lemma}[Connection of   risks to value functions for arbitrary $\pi$ and $ \tpi$]\label{lem:global_risk_to_V1}   For any hard cascade classifier $\pi\in \Pi_\casc$,  
%with $r_\casc$ defined through $d_{1:T}$, 
we have
     \begin{align}
     \mc{R}(\pi)  = \E\left[V_1^{\pi}(H_1)\right] \,,\label{eq:result2_new}
\end{align}
    and similarly, 
for any soft cascade classifier  $\tpi\in \wt{\Pi}_\casc$, we have  
\begin{align}
    \mc{R}^\hinge_\casc(\tpi) = \E\left[V^{\tpi}_1(H_1)\right] \,.\label{eq:result1_new}
    \end{align}
\end{lemma}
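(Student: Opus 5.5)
The plan is to prove both identities by unrolling the value-function recursion in Definition~\ref{def:value_function} and comparing the result term by term with the definitions of the risks. Write $\tr_t(h_t)$ for the exit weight: $\prod_{\tau<t}\td_\tau(h_\tau)\,(1-\td_t(h_t))$ in the soft case, and the analogous product of hard $d_\tau\in\{0,1\}$ in the hard case. The key intermediate claim is
\begin{align*}
V_1^{\pi}(h_1)=\E\Big[\sum_{t=1}^T \tr_t(H_t)\,\mc{R}_t^{\loss}(H_t; f_t)\;\Big|\; H_1=h_1\Big].
\end{align*}

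I will establish this by backward induction on $t$. The statement to carry is the stage-$t$ version
\begin{align*}
V_t^\pi(h_t)=\E\Big[\sum_{s=t}^T\Big(\prod_{\tau=t}^{s-1}d_\tau(H_\tau)\Big)(1-d_s(H_s))\,\mc{R}_s(H_s;f_s)\;\Big|\; H_t=h_t\Big].
\end{align*}
The base case $t=T$ is \eqref{eq:def_VT}, using $d_T\equiv 0$. For the inductive step, I substitute the hypothesis for $V_{t+1}$ into \eqref{eq:def_Vt_new}. Since $d_t(h_t)$ is $H_t$-measurable, it can be pulled inside the conditional expectation, and the tower property applies because $H_t\subseteq H_{t+1}$. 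The resulting sum then regroups into the claimed form. The same argument goes through verbatim with $\td_t\in[0,1]$ in place of $d_t$, because the recursion is linear in the deferral weights.

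Next comes the hinge case. Take the total expectation of the claim at $t=1$ and apply the tower property again, using that $\mc{R}_t^\hinge(H_t;\tf_t)=\E[\ell^\hinge(\tf_t(X,Z_t),Y;c_t)\mid H_t]$ by \eqref{eq:cond_Rt_casc_2} and that $\tr_t(H_t)$ is $H_t$-measurable. This gives $\E[V_1^{\tpi}(H_1)]=\E\big[\sum_t \tr_t(H_t)\,\ell^\hinge(\tf_t,Y;c_t)\big]$, which is exactly $\mc{R}^\hinge_\casc(\tpi)$ by \eqref{eq:surrogate_hinge_casc}--\eqref{eq:r_tilde_t_def}.

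For the hard case I additionally observe that the hard weights satisfy $\prod_{\tau<t}d_\tau(1-d_t)=\Ind\{r(H)=t\}$, where $r$ is the stopping time in \eqref{eq:def_r_casc}. Indeed, the product equals one exactly when every earlier stage defers and stage $t$ exits. Hence $\sum_t \Ind\{r=t\}\big(\Ind\{f_t\neq Y\}+c_t\big)=\ell(X,Z_{1:T},Y;\pi,c)$, and taking expectations gives \eqref{eq:result2_new}.

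I do not expect a deep obstacle here. The main points of care are in the inductive step: checking measurability with respect to the nested histories $H_t$ so that the tower property is legitimate, and tracking the product indices when the sum is regrouped. Conditional independence is not needed for this lemma.
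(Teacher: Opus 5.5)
Your proposal is correct and follows essentially the same route as the paper. Both arguments prove, by backward induction and the tower property, the unrolled stage-$i$ identity for $V_i$. Both then identify the hard exit weights with the indicator that the stopping time equals $t$, and both pass to the risks by conditioning on $H_t$, with the soft case going through verbatim. The only difference is ordering: you do the induction first and then connect to the risk, whereas the paper first expands the risk and then inducts, which is immaterial.
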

\begin{proof}
Recall we write $H_t=(X, Z_{1:t})$ and $H\equiv H_T=(X,Z_{1:T})$.  Let     $\pi=(f_{1:T}, r_\casc) $, where the stopping rule is defined as $r_\casc(h)=\min\{t\in[T]: d_t(h_t)=0\}$. By the definition of the $0$-$1$-$c$ risk $\mc{R}(\pi)$ in \eqref{eq:multi_01c}--\eqref{eq:def_R(f;c)}  and the tower property,  we can write 
\begin{align*}
    \mc{R}(\pi) &= \E\Big[\Ind\{f_{r_\casc(H)}(X, Z_{r_\casc(H)})\ne Y\}+ c_{r_\casc(H)}\Big]\\
    &=\sum_{t=1}^T\E\Big[\Ind\big\lbrace r_\casc(H)=t\}\cdot \left(\Ind\{f_{t}(X, Z_{t})\ne Y\big\rbrace+c_t\right)\Big]\\
     &=\sum_{t=1}^T\E\Big[\E\left[\Ind\{r_\casc(H)=t\}\cdot \left(\Ind\{f_{t}(X, Z_{t})\ne Y\}+c_t\right)\mid H_t\right]\Big]\\
      &=\sum_{t=1}^T\E\left[\Bigg(\prod_{\tau=1}^{t-1} d_\tau(H_{\tau})\Bigg) 
\big(1 - d_t(H_t)\big)  \cdot \E\left[ \Ind\{f_{t}(X, Z_{t})\ne Y\}+c_t \mid H_t\right]\right]\\
     &=\sum_{t=1}^T\E\left[ \Bigg(\prod_{\tau=1}^{t-1} d_\tau(H_{\tau})\Bigg) 
\big(1 - d_t(H_t)\big) \cdot \mc{R}_t(H_t; f_t) \right]\\
&=\E\left[\sum_{t=1}^T \Bigg(\prod_{\tau=1}^{t-1} d_\tau(H_{\tau})\Bigg) 
\big(1 - d_t(H_t)\big) \cdot \mc{R}_t(H_t; f_t) \right]
%\stackrel{?}{=}\E[V_1^f(H_1)]
\end{align*}
Therefore, to prove  \eqref{eq:result2_new},
 it suffices to show that 
 \begin{align}\label{eq:intermediate}
     \E\left[\sum_{t=1}^T \Bigg(\prod_{\tau=1}^{t-1} d_\tau(H_{\tau})\Bigg) 
\big(1 - d_t(H_t)\big) \cdot \mc{R}_t(H_t; f_t) \right]=\E\left[V_1^{\pi}(H_1)\right]\,,
 \end{align}
 which in turn implies that, it suffices to show that for all $i\in [T]$,
\begin{align}\label{eq:hypothesis_new}
   \E\left[\sum_{t=i}^T \Bigg(\prod_{\tau=i}^{t-1} d_\tau(H_{\tau})\Bigg) 
\big(1 - d_{t}(H_{t})\big) \cdot \mc{R}_{t}(H_{t}; f_t) \;\Big|\; H_i\right]= V_i^{\pi}(H_i)\,,%\quad \forall  i\in [T]\,.
\end{align}
We prove \eqref{eq:hypothesis_new} 
 using backward induction. \\
 \textbf{Base  case   $i=T$:} To show \eqref{eq:hypothesis_new} holds for $i=T$, note that by the definition of the stage-$T$ value function in \eqref{eq:def_VT}, 
 \begin{align*}
     V_T^{\pi}(H_T)=\mc{R}_T(H_T; f_T)= \E\left[\sum_{t=T}^T \Bigg(\prod_{\tau=T}^{t-1} d_\tau(H_{\tau})\Bigg) 
\big(1 - d_t(H_t)\big) \cdot \mc{R}_t(H_t; f_t) \right]\,,
 \end{align*}
 where we used   the fact that for $a>b$, $\prod_{\tau=a}^{b} d_\tau(H_\tau) = 1$. \\
 \textbf{Backward induction:} Now, suppose \eqref{eq:hypothesis_new} holds for  $i=s+1$, we next prove it holds for $i=s$. Using the recursive Bellman definition of the value function  \eqref{eq:def_Vt_new}, we can write
\begin{align*}
    V_{s}^{\pi}(H_s)
   &=(1-d_s(H_s))\mc{R}_s(H_s; f_s) + d_s(H_s)\E\left[V_{s+1}^{\pi}(H_{s+1})\mid H_s\right]\\
    &=(1-d_s(H_s))\mc{R}_s(H_s; f_s)\, + \\
    &\quad d_s(H_s)\E\left[\E\Bigg[\sum_{t=s+1}^T \Bigg(\prod_{\tau=s+1}^{t-1} d_\tau(H_{\tau})\Bigg) 
\big(1 - d_{t}(H_{t})\big) \cdot \mc{R}_{t}(H_{t}; f_{t})\mid H_{s+1}\Bigg]\mid H_{s}\right]\\
&=(1-d_{s}(H_{s}))\mc{R}_{s}(H_{s}; f_{s}) \,+ \\
    &\quad d_s(H_s)\E\left[ \sum_{t=s+1}^T \Bigg(\prod_{\tau=s+1}^{t-1} d_\tau(H_{\tau})\Bigg) 
\big(1 - d_{t}(H_{t})\big) \cdot \mc{R}_{t}(H_{t}; f_{t}) \mid H_{s}\right]\\
&=\E\left[\sum_{t=s}^T \Bigg(\prod_{\tau=s}^{t-1} d_\tau(H_{\tau})\Bigg) 
\big(1 - d_{t}(H_{t})\big) \cdot \mc{R}_{t}(H_{t}; f_{t})\mid H_s\right],
\end{align*}
where the last step absorbs the $t=s$ term into the summation, yielding \eqref{eq:hypothesis_new} for $i=s$.\\
\textbf{Conclusion:} Applying this backward induction argument   down from $i=T-1$ to $i=1$ yields  \eqref{eq:hypothesis_new} for all $i\in[T]$. Setting $i=1$ in \eqref{eq:hypothesis_new} and taking total expectation over $H_1$ proves \eqref{eq:intermediate}, which~gives~\eqref{eq:result2_new}.

In the remainder of the proof, we establish \eqref{eq:result1_new}. Let $\tpi=(\tf_{1:T}, \td_{1:T})$.   By definition of $\mc{R}^\hinge_\casc(\tpi)$ in \eqref{eq:surrogate_hinge_casc} and tower property,
\begin{align*}
    \mc{R}_\casc^\hinge(\tpi) &= \E\left[\ell_\casc^\hinge(X,Z_{1:T}, Y; \tpi, c)\right]\\
    &=\sum_{t=1}^T \E\left[\Bigg(\prod_{\tau=1}^{t-1} \td_\tau(H_{\tau})\Bigg) 
\big(1 - \tilde{d}_t(H_t)\big)\E\left[\ell^\hinge(\tf_t(X,Z_t), Y; c_t)\mid H_t\right]\right]\\
&=\E\left[\sum_{t=1}^T \Bigg(\prod_{\tau=1}^{t-1} \td_\tau(H_{\tau})\Bigg) 
\big(1 - \tilde{d}_t(H_t)\big)\cdot \mc{R}^\hinge_t (H_t; \tf_t)\right],
%\stackrel{?}{=}\E[V_1^{\tf}(H_1)]
\end{align*}
so to prove \eqref{eq:result1_new}, it suffices to prove 
\begin{align*}
     \E\left[\sum_{t=1}^T \Bigg(\prod_{\tau=1}^{t-1} \td_\tau(H_{\tau})\Bigg) 
\big(1 - \tilde{d}_t(H_t)\big)\cdot \mc{R}^\hinge_t (H_t; \tf_t)\right]{=}\E\left[V_1^{\tpi}(H_1)\right],
\end{align*}
which in turn implies it suffices to prove for all $i\in[T]$,
\begin{align}
     \E\left[\sum_{t=i}^T \Bigg(\prod_{\tau=i}^{t-1} \td_\tau(H_{\tau})\Bigg) 
\big(1 - \tilde{d}_t(H_t)\big)\cdot \mc{R}^\hinge_t (H_t; \tf_t)\mid H_i\right]{=} V_i^{\tpi}(H_i) \,.
\end{align}
This is   analogous to \eqref{eq:hypothesis_new}.  The proof works verbatim 
because the Bellman recursion and tower rule do not depend on whether $\td_t$ lies in $\{0,1\}$ or $[0,1]$.
\end{proof}

Lemma~\ref{lem:global_risk_to_V1} connects the risks to the value functions for arbitrary cascade classifiers   $\pi$ and $\tpi$. Specializing this result to the \emph{optimal} cascade classifiers $\pi^*_\casc$ and $\tpi^*_\casc$ yields a corresponding connection between the  Bayes risks and the value functions of the optimal cascade classifiers. This is stated in   Corollary \ref{cor:bayes_to_value}.

\begin{corollary}[Connection of  risks to value functions for Bayes-optimal  $\pi^*_\casc$ and $\tpi^*_\casc$]\label{cor:bayes_to_value}Let $\pi^*_\casc=(f_{1:T}^*, d_{1:T}^*)\in \argmin_{\pi \in \Pi_\casc}\mc{R}(\pi)$ be as defined in Lemma~\ref{lem:bayes_opt_casc_V}, then  
\begin{align*}
    \mc{R}_\casc^*=\E\left[V_1^{\pi^*_\casc}(H_1)\right].
\end{align*}
  Similarly, let $\tpi^*_\casc=(\tf^*_{1:T}, \td^*_{1:T})\in \argmin_{\tpi \in\wt{\Pi}_\casc} \mc{R}^\hinge_\casc(\tpi)$ be as defined in Lemma~\ref{lem:fstar_tilde_def}, then  
  \begin{align*}
      \mc{R}^{\hinge*}_\casc=\E\left[V_1^{\tpi^*_\casc}(H_1)\right]\,.
  \end{align*}
\end{corollary}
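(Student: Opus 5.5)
The corollary is a direct specialization of Lemma~\ref{lem:global_risk_to_V1}, so the plan is to instantiate that lemma at the two Bayes-optimal classifiers and then identify the resulting risks with the minimal risks $\mc{R}^*_\casc$ and $\mc{R}^{\hinge*}_\casc$.

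For the $0$-$1$-$c$ statement, I take $\pi^*_\casc=(f^*_{1:T},d^*_{1:T})\in\Pi_\casc$ as given by Lemma~\ref{lem:bayes_opt_casc_V}. That lemma asserts it attains the minimum over $\Pi_\casc$, so the minimum defining $\mc{R}^*_\casc$ exists and
\[
\mc{R}^*_\casc=\mc{R}(\pi^*_\casc).
\]
Since $\pi^*_\casc$ is a legitimate hard cascade classifier, \eqref{eq:result2_new} applies to it verbatim and gives $\mc{R}(\pi^*_\casc)=\E[V_1^{\pi^*_\casc}(H_1)]$. Chaining the two equalities proves the first claim.

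The hinge statement follows the same way. I take $\tpi^*_\casc\in\wt\Pi_\casc$ from Lemma~\ref{lem:fstar_tilde_def}, so $\mc{R}^{\hinge*}_\casc=\mc{R}^\hinge_\casc(\tpi^*_\casc)$. Then \eqref{eq:result1_new} gives $\mc{R}^\hinge_\casc(\tpi^*_\casc)=\E[V_1^{\tpi^*_\casc}(H_1)]$, with the value functions of Definition~\ref{def:value_function} evaluated using the hinge conditional risks \eqref{eq:cond_Rt_casc_2}.

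The only point needing care is that the minima defining $\mc{R}^*_\casc$ and $\mc{R}^{\hinge*}_\casc$ are actually attained. This is exactly what the two cited lemmas provide, so I would explicitly invoke their existence claims rather than treating these quantities as infima.

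A minor second check is that the value functions are well defined, i.e.\ that the conditional expectations in \eqref{eq:def_Vt_new} are finite. Here $\mc{R}_t\le 1+c_t$, and the Bayes-optimal hinge scores are bounded, as in \eqref{eq:f_star_tilde_flat}. I do not expect any real obstacle beyond these checks.
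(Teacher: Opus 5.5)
Your proposal is correct and matches the paper's proof. Both identify $\mc{R}^*_\casc=\mc{R}(\pi^*_\casc)$ and $\mc{R}^{\hinge*}_\casc=\mc{R}^\hinge_\casc(\tpi^*_\casc)$ using the attainment given by Lemmas~\ref{lem:bayes_opt_casc_V} and~\ref{lem:fstar_tilde_def}, then apply Lemma~\ref{lem:global_risk_to_V1} to these particular classifiers; your extra checks on attainment and finiteness are harmless additions that the paper leaves implicit.
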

\begin{proof}
By \eqref{eq:result2_new}, which relates $\mc{R}$ to $V_1^\pi$ for an arbitrary hard cascade classifier $\pi$, we can write 
\begin{align*}
    \mc{R}_\casc^*:=\min_{\pi \in \Pi_\casc } \mc{R}(\pi) =  \mc{R}(\pi_\casc^*)=  \E\left[V_1^{\pi^*_\casc}(H_1)\right].
\end{align*}

By \eqref{eq:result1_new}, which relates $\mc{R}^\hinge_\casc$ to $V_1^{\tpi}$ for an arbitrary soft  cascade classifier $\tpi$, we can write
\begin{align*}
    \mc{R}_\casc^{\hinge*}:=\min_{\tpi \in \wt{\Pi}_\casc } \mc{R}^\hinge_\casc(\tpi) =  \mc{R}^\hinge_\casc(\tpi_\casc^*)=  \E\left[V_1^{\tpi^*_\casc}(H_1)\right].
\end{align*}
\end{proof}

\subsection{Bayes-Optimal Solutions }\label{app:proof_BO_casc_V}
Using the connections of risks to expected value functions proven in the previous section, 
we now characterize the Bayes-optimal solutions under the $0$-$1$-$c$ loss and cascade hinge loss, respectively. 
\begin{restatable}[Bayes-optimal solution in $\Pi_\casc$ for $0$-$1$-$c$ loss]{lemma}{lemmaBOcascV}\label{lem:bayes_opt_casc_V}
Assume $Y\perp Z_{1:t-1}\mid (X,Z_t)$ as in  \eqref{eq:conditional_indep}. For the $0$-$1$-$c$ risk $\mc{R}$ with cost vector $c=[\kappa_1, \kappa_1+\kappa_2, \dots, \sum_{t=1}^T \kappa_t]^\top$, there exists a Bayes-optimal cascade classifier 
\begin{align}\label{eq:def_f_casc_star}
\pi^*_\casc=(f^*_{1:T}, r_{\casc}^*)\in \argmin_{\pi \in \Pi_\casc} \mc{R}(\pi),
\end{align}
where $f^*_{1:T}$ satisfies \eqref{eq:f_star_multi_app},  and $r^*_\casc(h):=\min\{t\in[T]: d_t^*(h_t)=0\}$ 
%as in \eqref{eq:def_r_casc} 
is induced by     the deferral rules $d^*_{1:T}$   given by 
\begin{align}
d_t^*(h_t) 
= \Ind\!\Big\{ 
\mc{R}_t(h_t; f^*_t)
>   \E\Big[ V^{\pi_{\casc}^*}_{t+1}(H_{t+1}) \mid H_t=h_t \Big] 
  \Big\}\quad \forall t<T ,\quad \text{and}\quad  d_T^*(h_T)=0,\label{eq:d_star_casc_V}
\end{align}
where 
\begin{align}\label{eq:lem_Rt_ht_ft_star_def}
    \mc{R}_t(h_t; f^*_t) = 1 - \max_{y\in[K]} p(y\mid x,z_t) + c_t,\quad \forall t\in[T]
\end{align}
is the Bayes risk of exiting at stage $t$ conditional on $H_t=h_t$. 
\end{restatable}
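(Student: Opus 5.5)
The plan is to reduce everything to the value-function representation of Lemma~\ref{lem:global_risk_to_V1}, $\mc{R}(\pi)=\E[V_1^{\pi}(H_1)]$, and then run a standard backward-induction (Bellman optimality) argument in two steps. First we optimize the classifiers pointwise, showing that replacing every $f_t$ by $f_t^*$ never increases $V_1^\pi$. Then, with $f^*_{1:T}$ fixed, we show that the deferral rules \eqref{eq:d_star_casc_V} minimize $V_t$ pointwise at every stage.

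\textbf{Step 1 (explicit stage risk and optimal classifiers).} Fix $t$ and $h_t=(x,z_{1:t})$. Conditioning first on $H_t$ and using the assumption $Y\perp Z_{1:t-1}\mid(X,Z_t)$, we have $p(y\mid h_t)=p(y\mid x,z_t)$. Since $f_t$ depends on $h_t$ only through $(x,z_t)$, this gives
\begin{align*}
\mc{R}_t(h_t;f_t)=1-p\big(f_t(x,z_t)\mid x,z_t\big)+c_t\;\ge\;1-\max_{y\in[K]}p(y\mid x,z_t)+c_t=\mc{R}_t(h_t;f_t^*),
\end{align*}
with $f_t^*$ as in \eqref{eq:f_star_multi_app}. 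This proves \eqref{eq:lem_Rt_ht_ft_star_def}. Unrolling the recursion as in the proof of Lemma~\ref{lem:global_risk_to_V1}, $V_1^\pi(h_1)$ is a conditional expectation of $\sum_t\big(\prod_{\tau<t}d_\tau\big)(1-d_t)\,\mc{R}_t(H_t;f_t)$, whose weights are nonnegative. Hence $V_1^\pi$ is monotone nondecreasing in each $\mc{R}_t(\cdot;f_t)$. It follows that for any deferral rules $d_{1:T}$, swapping $f_{1:T}$ for $f^*_{1:T}$ does not increase $\mc{R}(\pi)$, so it suffices to optimize over $d_{1:T}$ with $f^*_{1:T}$ fixed.

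\textbf{Step 2 (backward induction on deferral rules).} First, the rules $d^*_t$ are well defined by backward recursion. Set $d_T^*\equiv0$, so that $V_T^{\pi^*_\casc}=\mc{R}_T(\cdot;f_T^*)$. Then $d^*_{T-1}$ depends only on $V_T^{\pi^*_\casc}$, and in general $V_{t+1}^{\pi^*_\casc}$ depends only on $d^*_{t+1:T}$, so $d^*_t$ given by \eqref{eq:d_star_casc_V} is determined once the later rules are. By construction,
\begin{align*}
V_t^{\pi^*_\casc}(h_t)=\min\Big\{\mc{R}_t(h_t;f_t^*),\,\E\big[V_{t+1}^{\pi^*_\casc}(H_{t+1})\mid H_t=h_t\big]\Big\}.
\end{align*}
Now let $\pi=(f^*_{1:T},d_{1:T})$ be arbitrary, and prove by backward induction on $t$ that $V_t^{\pi}(h_t)\ge V_t^{\pi^*_\casc}(h_t)$ for all $h_t$. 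The base case $t=T$ is an equality. For the inductive step, monotonicity of conditional expectation and the induction hypothesis give
\begin{align*}
V_t^\pi(h_t)&=(1-d_t)\,\mc{R}_t(h_t;f_t^*)+d_t\,\E\big[V_{t+1}^\pi(H_{t+1})\mid h_t\big]\\
&\ge(1-d_t)\,\mc{R}_t(h_t;f_t^*)+d_t\,\E\big[V_{t+1}^{\pi^*_\casc}(H_{t+1})\mid h_t\big]\\
&\ge V_t^{\pi^*_\casc}(h_t),
\end{align*}
where the last inequality holds because a convex combination is at least the minimum of its two terms. Taking $t=1$ and expectations, then combining with Step 1 and Lemma~\ref{lem:global_risk_to_V1}, yields $\mc{R}(\pi^*_\casc)\le\mc{R}(\pi)$ for every $\pi\in\Pi_\casc$. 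This establishes \eqref{eq:def_f_casc_star}.

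\textbf{Main obstacle.} The delicate point is Step 1 rather than the induction. The stage risk $\mc{R}_t(h_t;f_t)$ is conditioned on the full history $h_t$, while $f_t$ may only use $(x,z_t)$. The conditional independence assumption is exactly what makes the pointwise maximizer $\argmax_y p(y\mid x,z_t)$ simultaneously optimal for every $h_t$ and compatible with this restriction. A secondary point is that the monotonicity of $V_1$ in the stage risks must hold uniformly for the later (arbitrary) deferral rules, which follows from the nonnegative-weight unrolled form. Measurability of $d_t^*$ is routine, since it is an indicator of a comparison between measurable functions.
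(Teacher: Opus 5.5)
Your proof is correct, and its overall structure matches the paper's. The classifier step is essentially identical. The paper writes the risk as $\sum_t \E[\Ind\{r_\casc(H)=t\}\,\mc{R}_t(H_t;f_t)]$ with the exit event $H_t$-measurable, which is your nonnegative-weight unrolled form, and it uses the conditional independence in the same way.

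Where you differ is the deferral step. The paper unrolls $\E[V_1]$ and isolates the $d_t$-dependent part $\E\big[\prod_{\tau<t}d_\tau(H_\tau)\,V_t(H_t)\big]$, with $d_{t+1:T}=d^*_{t+1:T}$ fixed and $d_{1:t-1}$ arbitrary. It then minimizes pointwise over $d\in\{0,1\}$, thereby \emph{deriving} the threshold form. Joint optimality then rests on composing these stagewise minimizers, which implicitly means replacing the rules one at a time from the back. You instead \emph{verify} the candidate by a comparison argument. You show $V_t^{\pi}\ge V_t^{\pi^*_\casc}$ pointwise for every competitor $\pi=(f^*_{1:T},d_{1:T})$, using only that a convex combination dominates the minimum.

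Your route buys three things:
\begin{itemize}
\item It gives joint optimality in one stroke.
\item It makes explicit that the self-referential definition of $d_t^*$ is well founded, which the paper leaves implicit.
\item Combined with the stagewise version of your Step~1 monotonicity, it yields $V_t^{\pi}\ge V_t^{\pi^*_\casc}$ for arbitrary $\pi$ at every stage. This is exactly the nonnegativity of $\Delta_t$ that the paper re-establishes inside the induction for Theorem~\ref{thm:calibration_casc}.
\end{itemize}
The paper's route, in turn, obtains the rule from the optimization rather than positing it. Its unrolling is also reused for the hinge-loss counterpart (Lemma~\ref{lem:fstar_tilde_def}). Your inequality $(1-\lambda)a+\lambda b\ge\min\{a,b\}$ for $\lambda\in[0,1]$ would handle that soft case equally well.
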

\begin{proof}Throughout, we write  $H_t=(X, Z_{1:t})$ and  $ H\equiv H_T=(X,Z_{1:T})$. 
We first optimize $f_{1:T}$ with $d_{1:T}$ fixed and then optimize $d_{1:T}$ with $f_{1:T}$ fixed.

\paragraph{Part 1: 
Proof for the Bayes-optimal   $f^*_{1:T}$.} 
We first minimize $\mc{R}(\pi)$ with respect to $f_{1:T}$ for an arbitrary  routing function $r_{\casc}$, or equivalently, an arbitrary $d_{1:T}$.  Fix $d_{1:T}$, we can expand the $0$-$1$-$c$ risk   as follows:
 \begin{align*}
        \mc{R}(\pi) &= \E[\ell(X,Z_{1:T}, Y; \pi, c)] \\
        &=\E\Big[\Ind\big\{f_{r_\casc(H)} (X, Z_{r_\casc(H)}) \ne Y\big\} + c_{r_\casc(H)}\Big]\\
        & =\sum_{t=1}^T \E\Big[\Ind\{r_\casc(H)=t\}\cdot \Big(  \Ind\big\{f_{t} (X, Z_{t}) \ne Y\big\}+ c_{t}\Big)  \Big] \\
        &\stackrel{\text{(i)}}{=} \sum_{t=1}^T \E\Bigg[\Ind\{r_\casc(H)=t\}\cdot \underbrace{\Big(\E\Big[  \Ind\big\{f_{t} (X, Z_{t}) \ne Y\big\} \mid H_t\Big] + c_{t}\Big)}_{=\mc{R}_t(H_t; f_t)}\Bigg] ,
    \end{align*}
    where $\mc{R}_t(H_t; f_t)$ is the conditional risk of exiting at stage $t$ given $H_t$, as defined in \eqref{eq:cond_Rt_casc_1}. 
    %; the semicolon here emphasizes that $H_t$ is a variable and $f_t$ is treated as a parameter.
    Note step (i)   holds  because $\Ind\{r_\casc(H)=t\}$ is measurable with respect to $H_t$.   %Since $c_t$ is a constant that does not depend on $f_{1:T}$, 
    Minimizing $\mc{R}(\pi)$ with respect to $f_t$ reduces to minimizing $\mc{R}_t(H_t; f_t)$ pointwise in $H_t$. Fix $h_t\in\mc{H}_t$, using $Y\perp  Z_{1:t-1}\mid (X, Z_t)$~from~\eqref{eq:conditional_indep}, we have 
    \begin{align}
        \mc{R}_t(h_t; f_t)&=\E\Big[  \Ind\big\{f_{t} (X, Z_{t}) \ne Y\big\} \mid H_t=h_t\Big] + c_{t}\nonumber\\
        &\stackrel{\text{(i)}}{=}\E\Big[  \Ind\big\{Y\ne f_{t} (x, z_{t}) \big\} \mid X=x, Z_t=z_t\Big] + c_{t} \nonumber\\
        &= 1-p\big(f_t(x,z_t)\mid x, z_t\big) +c_t\label{eq:Rt_ht_cascade_general_ft}
    \end{align}
    %where $c_t$ is a constant that does not depend on $f_{1:T}$. 
    where recall $p(\cdot \mid x,z_t)$ denotes the conditional probability of $Y$ given $(X,Z_t)=(x,z_t)$, and~step~(i) here holds because  $Y\perp Z_{1:t-1}\mid (X, Z_t)$ by the assumption in  \eqref{eq:conditional_indep}.
Hence, minimizing $\mc{R}_t(h_t; f_t)$ pointwise in $h_t$ reduces to minimizing \eqref{eq:Rt_ht_cascade_general_ft} pointwise in $(x,z_t)$, that is,
\begin{align}
    f_t^*(x,z_t)\in \argmin_{y\in[K]} 1-p\big(y\mid x, z_t\big) +c_t=\argmax_{y\in[K]}p\big(y\mid x, z_t\big) .\label{eq:ft_star_casc_proof_concludion}
\end{align}
This proves that $f^*_{1:T}$ satisfy \eqref{eq:f_star_multi_app}.  Substituting \eqref{eq:ft_star_casc_proof_concludion} into \eqref{eq:Rt_ht_cascade_general_ft} yields 
\begin{align*}
    \mc{R}_t(h_t; f_t^*)=1-\max_{y\in[K]}p\big(y\mid x, z_t\big)+c_t,
\end{align*} 
as stated in \eqref{eq:lem_Rt_ht_ft_star_def}. 

Note the result in \eqref{eq:ft_star_casc_proof_concludion}  holds \emph{uniformly} for any  routing policy $r_\casc$, or equivalently,  any  $d_{1:T} $. In other words, it characterizes the Bayes-optimal classifiers $f_{1:T}^*$ independently of how $d_{1:T}$ is chosen.  Therefore, it   remains  only to characterize the Bayes-optimal deferral rules $d_{1:T}^*$ with the classifiers fixed at $f_{1:T}=f^*_{1:T} $.

    \paragraph{Part 2: Proof for the Bayes-optimal   $d_{1:T}^*$.}
 We now  minimize $\mc{R}(\pi)$ with respect to   $d_{1:T}$  with  $f_{1:T}=f^*_{1:T}$ held fixed. 
Since $\pi=(f_{1:T}, d_{1:T})$, we   write $\mc{R}(f_{1:T}, d_{1:T})$ to mean $\mc{R}(\pi)$ and $V_t^{f_{1:T}, d_{1:T}}(H_t)$ to mean $V_t^\pi(H_t)$. 
By \eqref{eq:result2_new}  which links $\mc{R}$ to $V_1$, we have
\begin{align}\label{eq:d_star_formulation}
d^*_{1:T}\in\argmin_{d_{1:T}}
\mc{R}(f_{1:T}^*, d_{1:T}) = \argmin_{d_{1:T}} \E\left[V_1^{f^*_{1:T}, d_{1:T}}(H_1)\right],
\end{align}
where the minimum is over $d_t:\mc{H}_t\to \{0,1\}$ for $t\in[T]$. By the recursive definition \eqref{eq:def_Vt_new} of the value function, we can unroll \eqref{eq:d_star_formulation} to optimize $d_t$
 one at a time, treating the earlier functions $d_{1:t-1} $ as arbitrary. Consequently, the optimal  $d_t^*$ is  pointwise optimal for every trajectory $d_{1:t-1}$, \emph{not} just for the optimal trajectory $d_{1:t-1}^*$, and composing the pointwise minimizers $d_t^*$ across $t\in[T]$  yields $d_{1:T}^*$. 
 
In the rest of the proof,  we characterize $d_t^*$ using  backward induction   starting with   $t=T$. \\
 \textbf{Base case $t=T$:}
 Since $T$ is the terminal stage, $d_T^*(H_T)$ is fixed to $0$ (exit), as stated in  \eqref{eq:d_star_casc_V}.  By definition \eqref{eq:def_VT} of the terminal-stage value function, $V_T^{f^*_{1:T}, d_{1:T-1},d^*_T}(H_T)=\mc{R}_T(H_T; f_T^*)$ for any  arbitrary $d_{1:T-1}$, so we can write 
 \begin{align}\label{eq:VT_f_casc_star}
     V_T^{f^*_{1:T}, d_{1:T-1}, d^*_T}(H_T)=V_T^{\pi^*_\casc}(H_T)\quad \text{where}\quad \pi^*_\casc=(f^*_{1:T}, d_{1:T}^*).
 \end{align} 
\textbf{Backward induction:} Using \eqref{eq:VT_f_casc_star}, we now prove the induction step. Suppose that    for some $t\in\{1, \dots T-1\}$,  
\begin{align}\label{eq:V_t_star_hypothesis}
    V_{t+1}^{f^*_{1:T}, d_{1:t}, d^*_{t+1:T}}(H_{t+1})=V_{t+1}^{\pi^*_\casc}(H_{t+1})
\end{align}for any arbitrary $d_{1:t}$, then 
\begin{align}
  d_t^*(h_t) 
= \Ind\!\Big\{ 
\mc{R}_t(h_t; f^*_t)
>   \E\Big[ V^{\pi_{\casc}^*}_{t+1}(H_{t+1}) \mid H_t=h_t \Big] 
  \Big\}\,.  %V_{t}^{f^*_{1:T}, d_{1:t-1}, d^*_{t:T}}(H_t)=V_t^{f^*_\casc}(H_t)\,.
    \label{eq:d_star_target}
\end{align}
%we fix  $d_T=d_T^*$ and  unroll the value function $V_1$ in the  objective   \eqref{eq:d_star_formulation} to optimize $d_{T-1}$. This gives
We fix  $d_{t+1:T}=d_{t+1:T}^*$ and  unroll the value function $V_1$ in the  objective   \eqref{eq:d_star_formulation} to optimize over $d_{t}$. This gives
 \begin{align}\label{eq:E_V1_induction_new}
    \E\Big[V_1^{f^*_{1:T}, d_{1:t},d_{t+1:T}^*}(H_1)\Big]&=\underbrace{\E\Bigg[ (1-d_1(H_1))\mc{R}_1  (H_1; f^*_1)+  d_1(H_1)\E\Big[\dots+ \E\Big[\dots + d_{t-1}(H_{t-1})}_{\text{Term I}}  \nonumber\\
    &\quad\; \E\Big[ V_{t}^{f^*_{1:T}, d_{1:t}, d_{t+1:T}^*}(H_t)\underbrace{\mid H_{t-1}\Big]\mid H_{t-2}\Big]\mid  \dots \mid H_1\Big]\Bigg]}_{\text{Term II}}.
\end{align}

To clarify, 
 Term I collects the nested expectations taken successively over $ H_{t-1}| H_{t-2}$, $H_{t-2}| H_{t-3}, \dots$ down to $ H_{2}| H_1$, and finally over $H_1$ through the outermost expectation, along with terms depending on  $ d_1, \dots, d_{t-1}$.   Term II collects the corresponding chain of conditionings from $H_{t-1}$ down to $H_1$. Hence, the part of \eqref{eq:E_V1_induction_new} that depends on $d_{t}$ is   
 \begin{align}
&\E\Bigg[d_1(H_1)\E\Big[d_2(H_2)\dots \E\Big[ d_{t-1}(H_{t-1})\E\Big[V_t^{f^*_{1:T}, d_{1:t}, d^*_{t+1:T}}(H_{t})\mid H_{t-1}\Big]\mid H_{t-2}\Big]\mid \dots \mid H_1\Big]\Bigg]\nonumber\\
=&\E\Bigg[\prod_{\tau=1}^{t-1} d_\tau (H_\tau) V_t^{f^*_{1:T}, d_{1:t}, d^*_{t+1:T}} (H_{t})\Bigg],\label{eq:V1_simplified}
\end{align}
where the equality uses the tower rule of conditional expectation, 
and  $V_t^{f^*_{1:T}, d_{1:t}, d^*_{t+1:T}}(H_{t})$ is the innermost term of \eqref{eq:E_V1_induction_new}, which by definition, is
%depending on $H_{T-1}$ and $d_{T-1}$:
\begin{align}
V_t^{f^*_{1:T}, d_{1:t}, d^*_{t+1:T}}(H_{t})
&= \big(1-d_{t}(H_{t})\big)\mc{R}_{t}(H_{t}; f^*_{t}) + d_{t}(H_{t})\E\Big[V_{t+1}^{f^*_{1:T}, d_{1:t}, d^*_{t+1:T}}(H_{t+1})\mid H_{t}\Big]\nonumber\\
&\stackrel{\text{(i)}}{=} \big(1-d_{t}(H_{t})\big)\mc{R}_{t}(H_{t}; f^*_{t}) + d_{t}(H_{t})\E\Big[V_{t+1}^{\pi^*_\casc}(H_{t+1})\mid H_{t}\Big],\label{eq:Vt_star}
\end{align}
where (i) holds by  the induction hypothesis \eqref{eq:V_t_star_hypothesis}. 
Hence, \eqref{eq:E_V1_induction_new}--\eqref{eq:V1_simplified} imply that the Bayes-optimal deferral function $d^*_{t}$ should satisfy
\begin{align}\label{eq:d_star_T-1_as_function_of_A}
    d^*_{t}&\in\argmin_{d_{t}}  \E\Big[V_1^{f_{1:T}^*, d_{1:t}, d_{t+1:T}^*}(H_1)\Big]=\argmin_{d_{t}} \E\Bigg[\prod_{\tau=1}^{t-1} d_\tau (H_\tau) V_{t}^{f^*_{1:T}, d_{1:t}, d^*_{t+1:T}} (H_{t})\Bigg].
\end{align}
Since the $\prod_{\tau=1}^{t-1} d_\tau (H_\tau) $ term is independent of $d_{t}$ and always non-negative, \eqref{eq:d_star_T-1_as_function_of_A} reduces   pointwise in $h_{t}\in\mc{H}_{t}$ to
\begin{align}
    d_{t}^*(h_{t})
   &\in\argmin_{d\in\{0,1\}}V_t^{f^*_{1:T}, d_{1:t}, d^*_{t+1:T}}(h_{t}) \nonumber\\
    &= \argmin_{d  \in\{0,1\}}\big(1-d \big)\mc{R}_{t}(h_{t}; f^*_{t}) + d \E\Big[V_{t+1}^{\pi_{\casc}^*}(H_{t+1})\mid H_{t}=h_{t}\Big] \nonumber\\
    & =\Ind\Big\lbrace \mc{R}_{t}(h_{t}; f_{t}^*)> \E\Big[V_{t+1}^{\pi_{\casc}^*}(H_{t+1})\mid  H_{t}=h_{t} \Big]\Big\rbrace\,.
\end{align}
This proves \eqref{eq:d_star_target}, that is, 
  \eqref{eq:d_star_casc_V} for stage $t$.  Substituting this Bayes-optimal $d_t^*$ back into the recursion \eqref{eq:Vt_star} yields:
 \begin{align}
     V_t^{f^*_{1:T}, d_{1:t-1}, d^*_{t:T}}(H_t)&=\big(1-d_{t}^*(H_{t})\big)\mc{R}_{t}(H_{t}; f^*_{t}) + d_{t}^*(H_{t})\E\Big[V_{t+1}^{\pi^*_\casc}(H_{t+1})\mid H_{t}\Big]\stackrel{\text{(i)}}{=}V_t^{\pi^*_\casc}(H_t),\label{eq:Vt_conclusion}
 \end{align}
  for any arbitrary $d_{1:t-1}$, where (i) holds by definition of $V_t^{\pi^*_\casc}$. Thus, this establishes     \eqref{eq:V_t_star_hypothesis}  for stage $t$, completing the induction step.\\ 
\textbf{Conclusion:}
 By backward induction, we  obtain \eqref{eq:d_star_casc_V} for all $t\in[T]$. 

 \end{proof}
 \paragraph{Connection of $d_{1:T}^*$ to optimal stopping:} %The Bayes-optimal $d_{1:T}^*$ we prove  above  recovers the optimal stopping result of  \citep{shiryaev2008optimal}. 
  
 Identify the filtration   $\sigma(H_t)$, the adapted reward process $\{G_t\}_{t=1}^T$ with $G_t=-\mc{R}_t(H_t; f_t^*)$, and let $  \mathrm{T} = \{\tau: \tau \in [T], \{\tau=t\} \in \sigma(H_t) \}$ denote the set of stopping times. Then the deferral rule $d_{1:T}^*$ in \eqref{eq:d_star_casc_V}, which minimizes the risk $\mc{R}(f^*_{1:T}, d_{1:T})$, coincides with the optimal stopping rule for the finite-horizon problem  $\sup_{\tau \in \mathrm{T}}\E[G_\tau]$ \citep{shiryaev2008optimal}, which maximizes the reward, and our value function is its dynamic programming recursion. 

While Lemma \ref{lem:bayes_opt_casc_V} characterizes the Bayes-optimal solution under the target $0$-$1$-$c$ loss, the next lemma is in parallel and characterizes the Bayes-optimal solution under the  cascade hinge loss. 

\begin{lemma}[Bayes-optimal solution in $\wt{\Pi}_\casc$ for hinge loss]\label{lem:fstar_tilde_def}
   Assume $Y\perp Z_{1:t-1}\mid (X,Z_t)$ as in  \eqref{eq:conditional_indep}. There exists a Bayes-optimal cascade classifier  
   \begin{align*}
       \tpi^*_\casc=(\tf^*_{1:T}, \td_{1:T}^*)\in \argmin_{\tpi \in \wt{\Pi}_\casc}\mc{R}^\hinge_\casc(\tpi) ,
   \end{align*}  
   where, with $f^*_t$ defined as in \eqref{eq:f_star_multi_app}, 
\begin{align}\label{eq:f_star_tilde}
\left[\tf^*_t(x,z_t)\right]_k=\begin{cases}
    1& \text{for}\;\; k = f_t^*(x, z_t)\\
    %\argmax_{y\in[K]}p(y\mid x, z_t)\\
    -\tfrac{1}{K-1} & \text{otherwise}
\end{cases}\quad \forall  t\in[T]\,,
\end{align}
and, with $\mc{R}^\hinge_t(h_t; \tf_t)$ defined as in \eqref{eq:cond_Rt_casc_2},
\begin{align}\label{eq:d_star_tilde}
    \td_t^*(h_t) = \Ind\Big\lbrace \mc{R}_t^\hinge (h_t; \tf^*_t)>\E\Big[V^{\tpi^*_\casc}_{t+1}(H_{t+1})\mid H_t=h_t\Big]\Big\rbrace\,\quad \forall t<T\quad  \text{and}\quad \td_T^*\equiv 0\,.
\end{align}
\end{lemma}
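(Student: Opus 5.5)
The proof mirrors that of Lemma~\ref{lem:bayes_opt_casc_V}: first optimize the score functions $\tf_{1:T}$ uniformly over all soft deferral rules $\td_{1:T}$, then optimize $\td_{1:T}$ by backward induction on the value functions. All arguments use the hinge value functions $V_t^{\tpi}$ of Definition~\ref{def:value_function}, and Lemma~\ref{lem:global_risk_to_V1} gives $\mc{R}^\hinge_\casc(\tpi)=\E[V_1^{\tpi}(H_1)]$.

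\textbf{Step 1: classifiers.} The proof of \eqref{eq:result1_new} expands the risk as
\[
\mc{R}^\hinge_\casc(\tpi)=\E\Big[\textstyle\sum_{t}\tr_t(H_t)\,\mc{R}^\hinge_t(H_t;\tf_t)\Big],
\]
with weights $\tr_t(H_t)\ge 0$ that are $\sigma(H_t)$-measurable and do not depend on $\tf$. Hence, for any fixed $\td_{1:T}$, it suffices to minimize each $\mc{R}^\hinge_t(h_t;\tf_t)$ pointwise in $h_t$. Under \eqref{eq:conditional_indep}, $p(y\mid h_t)=p(y\mid x,z_t)$, so
\[
\mc{R}^\hinge_t(h_t;\tf_t)=\sum_y p(y\mid x,z_t)\,\ell^\hinge(\tf_t(x,z_t),y;c_t).
\]
This is a function of $(x,z_t)$ only, and the cost term is constant in $\tf_t$. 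Therefore \cite[Lemma~1]{tarigan2008} applies, exactly as in Part~1 of Lemma~\ref{lem:BO_solution_flat_hinge}, and gives \eqref{eq:f_star_tilde} simultaneously for every choice of $\td_{1:T}$. As a consequence, the minimizing score function at stage $t$ does not depend on the deferral rules. Lemma~\ref{lem:auxiliary_ft_tilde_star} (with $h_t$ in place of $(x,z_t)$) then evaluates $\mc{R}^\hinge_t(h_t;\tf^*_t)=\tfrac{K}{K-1}\mc{R}_t(h_t;f^*_t)$. This identity is not needed for the lemma itself, but it is the natural form in which to record the result.

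\textbf{Step 2: deferral rules by backward induction.} Fix $\tf_{1:T}=\tf^*_{1:T}$. The base case is immediate: $\td^*_T\equiv0$, so $V_T^{\tpi}=\mc{R}^\hinge_T(\cdot;\tf^*_T)$ regardless of $\td_{1:T-1}$. For the induction step, assume that
\[
V_{t+1}^{\tf^*,\td_{1:t},\td^*_{t+1:T}}=V_{t+1}^{\tpi^*_\casc}
\]
for arbitrary $\td_{1:t}$. Unrolling $\E[V_1]$ as in \eqref{eq:E_V1_induction_new}--\eqref{eq:V1_simplified}, the part of the objective depending on $\td_t$ is
\[
\E\Big[\textstyle\prod_{\tau<t}\td_\tau(H_\tau)\,V_t(H_t)\Big],
\qquad
V_t(h_t)=(1-\td_t(h_t))\,\mc{R}^\hinge_t(h_t;\tf^*_t)+\td_t(h_t)\,\E[V_{t+1}^{\tpi^*_\casc}(H_{t+1})\mid H_t=h_t].
\]
The one genuine difference from the hard case is that $\td_t(h_t)$ now ranges over $[0,1]$ instead of $\{0,1\}$. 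However, $V_t(h_t)$ is affine in $\td_t(h_t)$ and the prefactor $\prod_{\tau<t}\td_\tau(H_\tau)$ is nonnegative. The pointwise minimum over $[0,1]$ is therefore attained at an endpoint, and the indicator in \eqref{eq:d_star_tilde} is a valid minimizer, with ties broken toward exit. Substituting this minimizer back into $V_t$ shows $V_t^{\tf^*,\td_{1:t-1},\td^*_{t:T}}=V_t^{\tpi^*_\casc}$ for arbitrary $\td_{1:t-1}$, which closes the induction.

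\textbf{Main obstacle: existence of a joint minimizer.} The delicate point is justifying that the separately derived optima form a joint minimizer of $\mc{R}^\hinge_\casc$ over all of $\wt\Pi_\casc$, not merely a stationary pair. I would argue this directly from the two steps above. Step~1 gives, for every $\td$,
\[
\mc{R}^\hinge_\casc(\tf,\td)\ge\mc{R}^\hinge_\casc(\tf^*,\td),
\]
and the induction in Step~2 gives, for every $\td$,
\[
\mc{R}^\hinge_\casc(\tf^*,\td)\ge\mc{R}^\hinge_\casc(\tf^*,\td^*).
\]
Chaining these two inequalities yields global optimality of $(\tf^*_{1:T},\td^*_{1:T})$. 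A minor additional check is that $\td^*_t$ is measurable and takes values in $[0,1]$; this holds because it is an indicator of a measurable comparison.
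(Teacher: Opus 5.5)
Your proposal is correct and follows essentially the same route as the paper. Part~1 applies \cite[Lemma~1]{tarigan2008} stage-wise after using \eqref{eq:conditional_indep} to reduce $p(y\mid h_t)$ to $p(y\mid x,z_t)$. Part~2 reruns the backward induction of Lemma~\ref{lem:bayes_opt_casc_V}, observing that the stage-$t$ objective is affine in $\tilde{d}_t\in[0,1]$, so an endpoint (the indicator, with ties broken toward exit) is optimal. Your explicit chaining of the two inequalities to certify joint optimality, and your measurability remark, only spell out what the paper leaves implicit when it says it minimizes ``first over $\tilde{f}_{1:T}$, then over $\tilde{d}_{1:T}$.''
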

\begin{proof}
We minimize $\mc{R}^\hinge_\casc(\tpi)$ in two stages: first over $\tf_{1:T}$ with  $\td_{1:T}$   held fixed, then over $\td_{1:T}$  with $\tf_{1:T}=\tf^*_{1:T}$ fixed. 

\paragraph{Part 1: Proof for the Bayes-optimal   $\tf_{1:T}^*$.} Since  $p(y\mid x,z_{1:t})=p(y\mid x, z_t)$, the result in \eqref{eq:f_star_tilde} follows from   \cite[Lemma~1]{tarigan2008}.  
%we know $\mc{R}_t^\hinge(h_t; \tf_t)$ 
% is minimized pointwise in $(x,z_t)$ by taking $\tf_t=\tf_t^*$ to satisfy the following: 

\paragraph{Part 2: Proof for the Bayes-optimal   $\td_{1:T}^*$.} We now prove \eqref{eq:d_star_tilde}.  Since $\tpi=(\tf_{1:T}, \td_{1:T})$, we   write $\mc{R}_\casc^\hinge(\tf_{1:T}, \td_{1:T})$ to mean $\mc{R}_\casc^\hinge(\tpi)$, and $ V_t^{\tf_{1:T}, \td_{1:T}}(H_t)$ to mean    $ V_t^{\tpi}(H_t)$.  By definition of $\tpi^*_\casc$ and using \eqref{eq:result1_new} which links  $\mc{R}_\casc^\hinge$ and $V_1^{\tpi}$,  we can write 
  \begin{align}
&  \td^*_{1:T}\in \underset{\td_{1:T}}{\arg\min}\;\mc{R}_\casc^\hinge(\tf_{1:T}^*, \td_{1:T}) =\underset{\td_{1:T}  }{\arg\min} \; \E\left[V_1^{\tf^*_{1:T}, \td_{1:T}}(H_1)\right],\label{eq:objective_for_td_star}
  \end{align}
  where each minimum is taken over $ \td_t:\mc{H}_t\to [0,1]$ for $t\in[T]$. The proof proceeds by backward  induction, following the same steps as those in Part 2 of the proof of Lemma~\ref{lem:bayes_opt_casc_V}, see \eqref{eq:d_star_formulation}--\eqref{eq:Vt_conclusion}. For all $t<T$, this yields
  \begin{align*}
      \td_t^*(h_t)&\in \argmin_{\td \in[0,1]}\big(1-\td\big)\mc{R}^\hinge_{t}(h_{t}; \tf^*_{t})+ \td\E\Big[V_{t+1}^{\tf_{1:T}^*,\td_{1:t}, \td^*_{t+1:T}}(H_{t+1})\mid H_{t}=h_{t}\Big]\\
      & =\argmin_{\td \in[0,1]}\big(1-\td\big)\mc{R}^\hinge_{t}(h_{t}; \tf^*_{t})+ \td\E\Big[V_{t+1}^{\tpi_\casc^*}(H_{t+1})\mid H_{t}=h_{t}\Big]\,.
      \end{align*}
Since this objective is linear  in $\td\in[0,1]$,   the minimum is attained at an endpoint, that is,
\begin{align*}
    \td_t^*(h_t)
=\Ind\Big\lbrace \mc{R}^\hinge_{t}(h_{t}; \tf_{t}^*)> \E\Big[V_{t+1}^{\tpi^*_\casc}(H_{t+1})\mid  H_{t}=h_{t} \Big]\Big\rbrace\,,
  \end{align*}
  yielding \eqref{eq:d_star_tilde}.
\end{proof}

\subsection{Connections Between $0$-$1$-$c$   and Hinge Quantities}\label{app:connections_01c_hinge_cascade}

  In this section we establish two useful  results linking hinge and $0$-$1$-$c$ quantities, which will help prove calibration in the next section.
  \begin{lemma}[Hinge and $0$-$1$-$c$ Bayes quantities differ by a factor of $\frac{K}{K-1}$]\label{lem:K/(K-1)_factor} Assume $Y\perp Z_{1:t-1}\mid (X,Z_t)$ as in  \eqref{eq:conditional_indep}. Let $\pi^*_\casc= (f_{1:T}^*, d^*_{1:T})$ and $ \tpi^*_\casc=(\tf^*_{1:T}, \td^*_{1:T}) $ denote the  Bayes-optimal cascade classifiers from Lemmas~\ref{lem:bayes_opt_casc_V}--\ref{lem:fstar_tilde_def} under the $0$-$1$-$c$ loss and the hinge loss, respectively.  Define the conditional Bayes risks 
  \begin{align}
  \mc{R}_t(h_t; f_t^*)&:=\E\Big[\Ind\big\lbrace f_t^*(X,Z_t)\ne Y\big\rbrace+c_t \mid H_t=h_t\Big],\\
      \mc{R}^\hinge_t(h_t; \tf_t^*)&:= \E\Big[\ell^\hinge\big( \tf_t^*(X,Z_t), Y; c_t\big) \mid H_t=h_t\Big]\,,
  \end{align}
  then 
\begin{align}
    \mc{R}_t^\hinge(h_t; \tf^*_t)&=\tfrac{K}{K-1}\mc{R}_t(h_t; f^*_t)\qquad \forall t\in[T]\label{eq:lem_Rt_link_Rt_hinge}.
    \end{align}
   Moreover, the stage-$t$ value functions of $\tpi_\casc^*$ and $\pi^*_\casc$ satisfy
    \begin{align}
   V_t^{\tpi_\casc^*}(h_t) & =\tfrac{K}{K-1} V_t^{ \pi_\casc^*}(h_t) \qquad \forall t\in[T].\label{eq:lem_Vt_link_Vt_hinge}
\end{align}
\end{lemma}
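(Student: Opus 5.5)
The statement has two parts. The first is a pointwise identity, which I will reduce to the flat computation of Lemma~\ref{lem:auxiliary_ft_tilde_star}. The second is an identity between value functions, which I will prove by backward induction on $t$, mirroring the structure of Lemmas~\ref{lem:bayes_opt_casc_V}--\ref{lem:fstar_tilde_def}.

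For \eqref{eq:lem_Rt_link_Rt_hinge}, fix $h_t=(x,z_{1:t})$. Both $\tf^*_t$ and $f^*_t$ depend on $h_t$ only through $(x,z_t)$. Under the conditional independence assumption \eqref{eq:conditional_indep}, $p(y\mid h_t)=p(y\mid x,z_t)$. Hence conditioning on $H_t=h_t$ is the same as conditioning on $(X,Z_t)=(x,z_t)$, and so $\mc{R}^\hinge_t(h_t;\tf^*_t)=\mc{R}^\hinge_t(x,z_t;\tf^*_t)$ and $\mc{R}_t(h_t;f^*_t)=\mc{R}_t(x,z_t;f^*_t)$. Lemma~\ref{lem:auxiliary_ft_tilde_star}, i.e.\ \eqref{eq:stage_t_hinge_risk_BO}, then gives the factor $\tfrac{K}{K-1}$ directly. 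Note that the cost vector $c$ is now the cumulative one, but that lemma holds for arbitrary $c_t\ge0$, so nothing changes.

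For \eqref{eq:lem_Vt_link_Vt_hinge}, I induct backward from $t=T$. In the base case, $V_T^{\tpi^*_\casc}(h_T)=\mc{R}^\hinge_T(h_T;\tf^*_T)=\tfrac{K}{K-1}\mc{R}_T(h_T;f^*_T)=\tfrac{K}{K-1}V_T^{\pi^*_\casc}(h_T)$, using \eqref{eq:def_VT} and the first part. For the step, assume the identity at $t+1$ and set
\begin{align*}
A_t(h_t)&:=\E\big[V_{t+1}^{\pi^*_\casc}(H_{t+1})\mid H_t=h_t\big],\\
\tilde A_t(h_t)&:=\E\big[V_{t+1}^{\tpi^*_\casc}(H_{t+1})\mid H_t=h_t\big].
\end{align*}
By linearity of conditional expectation and the hypothesis, $\tilde A_t=\tfrac{K}{K-1}A_t$. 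Since $\tfrac{K}{K-1}>0$, the comparison $\mc{R}^\hinge_t(h_t;\tf^*_t)>\tilde A_t(h_t)$ is equivalent to $\mc{R}_t(h_t;f^*_t)>A_t(h_t)$. By \eqref{eq:d_star_casc_V} and \eqref{eq:d_star_tilde}, this gives $\td^*_t(h_t)=d^*_t(h_t)$ for all $h_t$. Substituting into \eqref{eq:def_Vt_new} for both policies yields
\begin{align*}
V_t^{\tpi^*_\casc}(h_t)
&=(1-d^*_t(h_t))\,\tfrac{K}{K-1}\mc{R}_t(h_t;f^*_t)+d^*_t(h_t)\,\tfrac{K}{K-1}A_t(h_t)\\
&=\tfrac{K}{K-1}V_t^{\pi^*_\casc}(h_t),
\end{align*}
which closes the induction.

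The only delicate point is ensuring the two Bayes-optimal deferral rules coincide, including tie-breaking. Both use the strict-inequality indicator, so ties resolve to exit in both, and positive scaling preserves strict inequality. A minor bookkeeping point is that the value functions in \eqref{eq:d_star_casc_V} and \eqref{eq:d_star_tilde} are defined self-referentially. The backward recursion makes them well defined stage by stage, so the induction is legitimate.
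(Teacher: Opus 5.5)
Your proposal is correct and follows essentially the same route as the paper. The first identity is reduced via \eqref{eq:conditional_indep} to Lemma~\ref{lem:auxiliary_ft_tilde_star}, and the value-function identity is proved by backward induction from $t=T$. The only cosmetic difference is in the induction step: the paper writes both optimal value functions as $\min\{\cdot,\cdot\}$ of the exit risk and the expected continuation value, then pulls $\tfrac{K}{K-1}$ out of the minimum, so it never has to check tie-breaking. You instead show $\td^*_t=d^*_t$ explicitly and substitute this into \eqref{eq:def_Vt_new}, which is equally valid.
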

\begin{proof}
By Lemmas~\ref{lem:bayes_opt_casc_V}--\ref{lem:fstar_tilde_def} together with Lemma \ref{lem:auxiliary_ft_tilde_star}, we have  
    \begin{align*}
\mc{R}_t(h_t; f^*_t)
& = 1-\max_{y\in [K]}p(y\mid x,z_t)+c_t\,,\\
\mc{R}_t^{\hinge}(h_t; \tf_t^*)&=\tfrac{K}{K-1}\left(1-\max_{y\in[K]} p(y\mid x, z_t)+c_t\right)\,,
    \end{align*}
    which proves \eqref{eq:lem_Rt_link_Rt_hinge}. 
To prove \eqref{eq:lem_Vt_link_Vt_hinge}, we use backward induction. \\
    \textbf{Base case $t=T$:} we have 
    \begin{align*}
    V_T^{\tpi_\casc^*}(h_T)=\mc{R}_T^\hinge(h_T; \tf_T^*)\qquad \text{and}\qquad
        V_T^{\pi_\casc^*}(h_T)=\mc{R}_T(h_T; f_T^*),
    \end{align*}
    so applying \eqref{eq:lem_Rt_link_Rt_hinge} gives
     \eqref{eq:lem_Vt_link_Vt_hinge} for $t=T$.\\
     \textbf{Backward induction:} Assume  \eqref{eq:lem_Vt_link_Vt_hinge} holds for $t+1$, we prove it for $t$.  First, by Lemmas~\ref{lem:bayes_opt_casc_V}--\ref{lem:fstar_tilde_def}, since both Bayes-optimal deferral rules  $d_{1:T}^*$ and $\td_{1:T}^*$ are  $\in\{0,1\}$-valued (see \eqref{eq:d_star_casc_V} and \eqref{eq:d_star_tilde}), we have 
     \begin{align}
     V_t^{\pi^*_\casc}(h_t)&=\min\left\lbrace\mc{R}_t(h_t; f_t^*), \E\left[V_{t+1}^{\pi^*_\casc}(H_{t+1})\mid H_t=h_t\right]\right\rbrace,\label{eq:def_Vt_f_star}\\
     V_t^{\tpi^*_\casc}(h_t)&=    \min\left\lbrace\mc{R}_t^\hinge(h_t; \tf_t^*), \E\left[V_{t+1}^{\tpi^*_\casc}(H_{t+1})\mid H_t=h_t\right]\right\rbrace.\label{eq:def_Vt_ftilde_star}
     \end{align}
     Note by \eqref{eq:def_Vt_ftilde_star}, \eqref{eq:lem_Rt_link_Rt_hinge} and \eqref{eq:def_Vt_f_star}, we can write
     \begin{align*}
         V_t^{\tpi_\casc^*}(h_t)&\stackrel{\text{(i)}}{=}\min\left\lbrace\mc{R}_t^\hinge(h_t; \tf_t^*), \E\left[V_{t+1}^{\tpi^*_\casc}(H_{t+1})\mid H_t=h_t\right]\right\rbrace\\
         &\stackrel{\text{(ii)}}{=}\min\left\lbrace \tfrac{K}{K-1} \mc{R}_t(h_t; f_t^*), \E\left[\tfrac{K}{K-1}V_{t+1}^{\pi^*_\casc}(H_{t+1})\mid H_t=h_t\right]\right\rbrace\\
         &=\tfrac{K}{K-1}\min\left\lbrace  \mc{R}_t(h_t; f_t^*), \E\left[ V_{t+1}^{\pi^*_\casc}(H_{t+1})\mid H_t=h_t\right]\right\rbrace\\
         &\stackrel{\text{(iii)}}{=}\tfrac{K}{K-1} V_t^{\pi^*_\casc}(h_t)\,,
     \end{align*}
     where  step (i) follows from \eqref{eq:def_Vt_ftilde_star}, step (ii) follows from \eqref{eq:lem_Rt_link_Rt_hinge} and the induction hypothesis, and step (iii) applies \eqref{eq:def_Vt_f_star}. 
     This completes the induction.\\
     \textbf{Conclusion: } By backward induction, we establish \eqref{eq:lem_Vt_link_Vt_hinge} for all $t\in[T]$.
\end{proof}
    
The next lemma   is an algebraic inequality  linking the hard and soft deferral decisions, which will be useful in the proof of  the main result (Theorem~\ref{thm:calibration_casc}) in Section~\ref{app:proof_calibration_casc} to show that the suboptimality gap under a hard deferral rule is at most \emph{twice} that under its soft counterpart.
\begin{lemma}[Inequality linking soft and hard deferral decisions]\label{lem:small_ineq}
For $\td\in[0,1]$, let $d := \Ind\{\td \geq0.5\}$. For all $\mc{R}, \bar{V}\in \mathbb{R}$, it holds that
\begin{align}
    (1-d) \mc{R} + d\bar{V}- \min(\mc{R}, \bar{V}) \leq 2\Big( (1-\td) \mc{R} +\td \bar{V} - \min(\mc{R}, \bar{V})\Big) \,,\label{eq:small_ineq_new}
\end{align}
where the 2 on the RHS is the smallest valid constant.
\end{lemma}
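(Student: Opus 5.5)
The plan is a direct case analysis on which of $\mc{R}$ and $\bar V$ is smaller, plus an explicit instance showing that $2$ cannot be lowered. First I would normalise: set $m:=\min(\mc{R},\bar V)$ and $\Delta:=|\mc{R}-\bar V|\ge 0$. For any weight $w\in[0,1]$, the quantity $(1-w)\mc{R}+w\bar V-m$ equals $w\Delta$ when $\mc{R}\le \bar V$, and $(1-w)\Delta$ when $\bar V<\mc{R}$. So both sides of \eqref{eq:small_ineq_new} are nonnegative multiples of $\Delta$: the left side with $w=d$, the right side with $w=\td$. If $\Delta=0$, both sides vanish and there is nothing to prove.

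\textbf{Case $\mc{R}\le\bar V$ (exiting is optimal).} Here the LHS is $d\Delta$ and the RHS is $2\td\Delta$.
\begin{itemize}
\item If $d=0$, the LHS is $0\le$ RHS.
\item If $d=1$, then $\td\ge 0.5$, so $2\td\ge 1=d$ and the inequality holds.
\end{itemize}

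\textbf{Case $\bar V<\mc{R}$ (deferring is optimal).} Here the LHS is $(1-d)\Delta$ and the RHS is $2(1-\td)\Delta$.
\begin{itemize}
\item If $d=1$, the LHS vanishes.
\item If $d=0$, then $\td<0.5$, so $2(1-\td)>1=1-d$ and the inequality holds.
\end{itemize}
The only point needing care is the tie convention $d=\Ind\{\td\ge 0.5\}$. It interacts correctly with both cases: at $\td=0.5$ we get $d=1$, and $2\td=1$ exactly matches the LHS in the first case.

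\textbf{Sharpness.} The goal is to show no constant $C<2$ works. Take $\td=0.5$ (so $d=1$), $\mc{R}=0$ and $\bar V=1$. Then the LHS is $1$ and the RHS bracket is $0.5$, so the inequality requires $C\ge 2$. This mirrors the tie instances ($\td_t\equiv 0.5$) mentioned after Theorem~\ref{thm:calibration_casc}.

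I expect no real obstacle. The only step to watch is making sure the threshold tie at $\td=0.5$ is resolved consistently with \eqref{eq:hard_r_casc}. Note that $d=\Ind\{\td\ge0.5\}$ is the same as defer iff $\td\not<0.5$, and I would state this identification explicitly.
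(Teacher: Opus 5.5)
Your proof is correct and follows essentially the same route as the paper. The paper also splits on whether $\mathcal{R}\le\bar V$ or $\mathcal{R}>\bar V$, uses $d\le 2\tilde d$ in the first case and $1-d\le 2(1-\tilde d)$ in the second, and proves sharpness with the same instance $\mathcal{R}=0$, $\bar V=1$, $\tilde d=0.5$. The only cosmetic difference is that the paper states these two scalar inequalities up front, whereas you verify them inline through subcases on $d$.
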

\begin{proof}  
We first establish two inequalities linking $d$ and $\td$. When $\td\ge0.5$, we have $d=1$ and $2\td\ge1$, so $d\le 2\td$; when $\td<0.5$, $d=0$ so $d\le2\td$ is immediate. The same argument applied to $1-\td$, using $1-d=\Ind\{1-\td> 0.5\}$, gives $1-d\le 2(1-\td)$. 
Now we prove \eqref{eq:small_ineq_new} by considering two cases.\\
\textbf{Case 1. When $\mc{R}\le \bar{V}$:} we have 
\begin{align*}
  (1-d)\mc{R}+d\bar{V}-\min(\mc{R}, \bar{V})&=  (1-d) \mc{R} + d\bar{V}- \mc{R}%  = -d\mc{R} +d\bar{V}  
  = d( \bar{V} -\mc{R} )\\
    &\stackrel{\text{(i)}}{\leq} 2\td  (  \bar{V}  - \mc{R})=2\left(\td\bar{V} +(1-\td)\mc{R}  -\mc{R}\right)\\
    &=2\left( (1-\td) \mc{R} +\td \bar{V}- \min(\mc{R}, \bar{V})\right)
\end{align*}
where step (i) applies $d\le2\td$. \\
\textbf{Case 2. When $\mc{R}>\bar{V}$:} we have 
\begin{align*}
    (1-d) \mc{R} + d\bar{V}- \min(\mc{R}, \bar{V})
    & =(1-d)\mc{R}+d\bar{V}-\bar{V} =(1-d)(\mc{R}-\bar{V})\\
    &\stackrel{\text{(i)}}{\le} 2(1-\td)(\mc{R}-\bar{V})=2\left((1-\td)\mc{R}-\bar{V}+ \td\bar{V}\right) \\
    &= 2\Big((1-\td)\mc{R}+\td\bar{V}-\min(\mc{R}, \bar{V})\Big)
\end{align*}
where step (i) applies $1-d\le 2(1-\td)$. \\
\textbf{Tightness of \eqref{eq:small_ineq_new}:} Finally, the factor of 2 in \eqref{eq:small_ineq_new} is tight and cannot be improved: setting $\mc{R}=0$, $\bar{V}=1$, and $\tilde{d}= 0.5$ (so $d=1)$ gives $\text{LHS}=1$ and $\text{RHS}= 1$.
\end{proof}

\subsection{Proof of  Theorem~\ref{thm:calibration_casc}}\label{app:proof_calibration_casc}
In this section, we prove our main calibration result of Theorem~\ref{thm:calibration_casc}. Like our convention in the main text, we will consistently denote $0$-$1$-$c$ risk and its related quantities without superscripts and    hinge risks and their related quantities    with `hinge' superscripts, e.g., $\mc{R}_\casc$, $\Delta_t$ and $\mc{R}^\hinge_\casc$, $\Delta^\hinge_t$, respectively.

\excessriskcasc*

\begin{proof}In this proof, we write $\varphi(\tpi)=\pi=(f_{1:T}, d_{1:T})$. 
Let $\pi^*_\casc $  and $\tpi^*_\casc $ denote the Bayes solutions under the target $0$-$1$-$c$ loss and the cascade hinge loss, respectively. That is,
\begin{align*}
    \pi^*_\casc&= (f^*_{1:T}, d^*_{1:T})\in \argmin_{\pi \in \Pi_\casc}\mc{R}(\pi),
    \\
    \tpi^*_\casc&= (\tf^*_{1:T}, \td^*_{1:T})\in \argmin_{\tpi \in \wt{\Pi}_\casc}\mc{R}^\hinge_\casc(\tpi).
\end{align*}
Then our end result \eqref{eq:casc_calibration} can be rewritten into
\begin{align}\label{eq:calibration_C1_def}
    \mc{R} (\pi ) - \mc{R}(\pi^*_\casc)
\le C_1 
\Big(
\mc{R}^\hinge_\casc(\tpi) - \mc{R}^{\hinge}_\casc(\tpi^*_\casc)
\Big)\quad\text{with}\quad  C_1=2^{T-1}(K-1)\,.
\end{align}
By Lemma~\ref{lem:global_risk_to_V1} and Corollary~\ref{cor:bayes_to_value},  the risks on both sides of \eqref{eq:calibration_C1_def} equal  their expected  stage-1 value functions, further  reducing \eqref{eq:calibration_C1_def} to 
\begin{align}
    \E\left[V_1^{\pi}(H_1)\right]-\E\left[V_1^{\pi^*_\casc}(H_1)\right] \le C_1  \Big( \E\left[V_1^{\tpi}(H_1)\right] - \E\left[V_1^{\tpi^*_\casc}(H_1)\right]\Big).\label{eq:four_V1_inequality}
\end{align}
To prove \eqref{eq:four_V1_inequality},   it suffices in turn to prove for every $t\in[T]$ and pointwise in $h_t\in\mc{H}_t$, 
\begin{align}\label{eq:Delta_link_Delta}
   \underbrace{V_t^{\pi}(h_t)-V_t^{\pi^*_\casc}(h_t)}_{=:\Delta_t (h_t)}\le C_t \underbrace{\Big(V_t^{\tpi}(h_t) - V_t^{\tpi^*_\casc}(h_t)\Big)}_{= :\Delta_t^\hinge(h_t)} \quad 
\text{with}\quad C_t= 2^{T-t}(K-1)\,.
\end{align}
Here,   $\Delta_t (h_t)$ and $\Delta_t^\hinge(h_t) $ denote the suboptimality gaps in the stage-$t$ value functions associated with the $0$-$1$-$c$ risk and hinge risk, respectively. 
The form of $C_t$ arises from unrolling the recursion backward from the terminal stage, where each step of the induction introduces an additional factor of $2$, giving  $C_T= K-1$ up to $C_1=2^{T-1}(K-1)$. 
In the remainder of the proof, we prove \eqref{eq:Delta_link_Delta} using backward induction.
\paragraph{Step 1. Base case where $t=T$:}
By the definition \eqref{eq:def_VT} of stage-$T$  value functions, 
\eqref{eq:Delta_link_Delta} specialized to $t=T$ reduces to 
\begin{align}
 & \underbrace{\mc{R}_T(h_T; f_T) - \mc{R}_T(h_T; f_T^*)}_{=:\Delta_T(h_T)}\le 
   C_T\underbrace{\Big( \mc{R}_T^\hinge(h_T; \tf_T) - \mc{R}_T^\hinge(h_T; \tf_T^*)\Big)}_{=: \Delta_T^\hinge (h_T)}  \quad \text{with} \quad C_T= K-1,\label{eq:stage_T_calibration}\\
&\text{where}\quad    \Delta_T(h_T)\ge 0\,,\;\Delta_T^\hinge (h_T)\ge 0 \quad \text{for all}\quad h_T\in\mc{H}_T\,.\label{eq:stage_T_calibration_nonneg}
\end{align}
Eq.~\eqref{eq:stage_T_calibration} holds  by the single-stage calibration bound of  Lemma~\ref{lem:K_mistake} with $(x,z_t)$ replaced by $h_t$. In \eqref{eq:stage_T_calibration_nonneg}, $\Delta_T(h_T)\ge 0$ follows from  the Bayes optimality of $f_t^*$ for the 0-1-$c$ loss 
%, which gives   $\mc{R}_t(h_t; f_t) \ge \mc{R}_t(h_t; f_t^*)$  
(see explanation above \eqref{eq:Rt_ht_cascade_general_ft}), and $\Delta_T^\hinge(h_T)\ge 0 $
 follows from the Bayes optimality of  $\tf_T^*$  for the cascade hinge loss. More precisely, $\Delta_T^\hinge(h_T)\ge 0 $  because of  the proof of \cite[Lemma 1]{tarigan2008} and the conditional independence assumption \eqref{eq:conditional_indep}.

\paragraph{Step 2. Backward induction:} Next, for $t\le T-1$, we  show 
that  if  the suboptimality gaps at stage $t+1$ satisfy:
\begin{align}
  &\underbrace{V_{t+1}^{\pi}(h_{t+1})-V_{t+1}^{\pi^*_\casc}(h_{t+1})}_{=:\Delta_{t+1} (h_{t+1})} \leq C_{t+1} \underbrace{\Big(V_{t+1}^{\tpi}(h_{t+1})-V_{t+1}^{\tpi^*_\casc}(h_{t+1})\Big)}_{=: \Delta_{t+1}^{\hinge}(h_{t+1})} \quad  \text{with}\quad  C_{t+1}=2^{T-t-1}(K-1)\,,\label{eq:induction_hypothesis}\\
& \text{where}\quad 
   \Delta_{t+1}(h_{t+1})\ge 0\,, \;  \Delta_{t+1}^\hinge (h_{t+1})\ge 0 \quad \text{for all}\quad h_{t+1}\in\mc{H}_{t+1}\,,\label{eq:induction_hypothesis_nonneg}
\end{align}
then  the corresponding suboptimality gaps at stage $t$  satisfy
\begin{align}\label{eq:induction_hypothesis_t}
 &\underbrace{V_{t}^{\pi}(h_{t})-V_{t}^{\pi^*_\casc}(h_{t}) }_{=:\Delta_t(h_{t})}\leq C_{t} \underbrace{\Big(V_{t}^{\tpi}(h_{t})-V_{t}^{\tpi^*_\casc}(h_{t})\Big)}_{=:\Delta_t^{\hinge}(h_{t}) }
   \quad \text{with}\quad C_t=2C_{t+1},\\
  & \text{where}\quad \Delta_t(h_t)\ge 0 \,,\;\Delta_t^\hinge(h_t)\ge 0\quad \text{for all}\quad h_t\in\mc{H}_t\,.\label{eq:induction_hypothesis_t_nonneg}
\end{align}

\paragraph{Step 2.1. Decomposing  the     suboptimality gap under the $0$-$1$-$c$ loss:} Let $\textsf{m}$ be the cost achieved by an \emph{oracle}   deferral rule that picks  the cheaper routing option at  stage $t$, that is, 
\begin{align}\label{eq:def_m_oracle}
    \textsf{m}:=\min\Big\lbrace \mc{R}_t(h_t; f_t), \E\left[V^{\pi}_{t+1}(H_{t+1})\mid H_t=h_t\right]\Big\rbrace.
\end{align}  For the induction step, we first expand the  $\Delta_t(h_t)$ term on the  LHS of    \eqref{eq:induction_hypothesis_t} by adding  and subtracting  $\textsf{m}$ to obtain
\begin{align}
     \Delta_t (h_t) 
    = V_t^{\pi}(h_t) - V_t^{\pi^*_\casc}(h_t)  
      =  \underbrace{V_t^{\pi}(h_t)  - \textsf{m} }_{=:\Delta_{t,\select}(h_t)} 
      +\underbrace{\textsf{m}- V_t^{\pi^*_\casc}(h_t)}_{=:\Delta_{t, \internal}(h_t)} \,.\label{eq:Delta_ht_decompose_cascade}
\end{align}
Using the definition     of $\textsf{m}$ in \eqref{eq:def_m_oracle}   and  the recursive definition    of the value function in \eqref{eq:def_Vt_new}, we can rewrite the two terms on the RHS of  \eqref{eq:Delta_ht_decompose_cascade} as follows, making their non-negativity explicit:
\begin{align}
    &\Delta_{t, \select}(h_t)=\underbrace{(1 - d_t(h_t)) \mc{R}_t(h_t; f_t) + d_t(h_t) \E\left[V^{\pi}_{t+1}(H_{t+1})\mid H_t = h_t\right]}_{V_t^\pi(h_t)}\nonumber \\
       &\qquad\qquad\qquad\qquad\qquad\quad\qquad  - \underbrace{\min\Big\lbrace \mc{R}_t(h_t; f_t), \E\left[V^{\pi}_{t+1}(H_{t+1})\mid H_t=h_t\right]\Big\rbrace }_{\textsf{m}} \ge0\,, \label{eq:Delta_select_ht}\\
&\Delta_{t, \internal}(h_t)=\underbrace{\min\Big\lbrace \mc{R}_{t}(h_t; f_t), \E\left[V_{t+1}^{\pi}(H_{t+1})\mid H_t=h_t\right]\Big\rbrace}_{\textsf{m}} \nonumber\\
& \qquad\qquad\qquad\qquad\qquad\qquad\quad  -\underbrace{\min\Big\lbrace \mc{R}_{t}(h_t; f_t^*), \E\left[V_{t+1}^{\pi^*_\casc}(H_{t+1})\mid H_t=h_t\right]\Big\rbrace }_{V_t^{\pi^*_\casc}(h_t)}\ge0\,.\label{eq:Delta_internal_ht}
\end{align}
Here,  $\Delta_t (h_t)$ is decomposed  into two terms: $\Delta_{t, \select}(h_t)$   measures the suboptimality gap of   $d_t(h_t)$ relative to the  oracle deferral rule, while $\Delta_{t, \internal}(h_t)$ measures the suboptimality gap of using $\pi$ rather than  $\pi^*_\casc$ for the downstream stages, holding the deferral choice at $t$ as the oracle. 
We have   $\Delta_{t, \select}(h_t)\ge 0$ because  $  \min\{a,b\}\le (1-\lambda)a + \lambda b $ for every $\lambda \in [0,1]$. Moreover,   $\Delta_{t, \internal}(h_t)\ge 0 $ follows from the Bayes optimality of $f_t^*$, which gives   $\mc{R}_t(h_t; f_t) \ge \mc{R}_t(h_t; f_t^*)$, 
and  the induction hypothesis \eqref{eq:induction_hypothesis_nonneg}, which assumes $\Delta_{t+1}(h_{t+1})=V_{t+1}^{\pi}(h_{t+1})-V_{t+1}^{\pi^*_\casc}(h_{t+1})\ge 0$ for all $h_{t+1}\in \mc{H}_{t+1}$.
Since the sum of two non-negative terms is non-negative, we obtain  
\begin{align}\label{eq:Delta_t_nonneg}
    \Delta_t(h_t)\ge 0\,,
\end{align} which is the first part of the conclusion \eqref{eq:induction_hypothesis_t_nonneg} of the induction hypothesis. \\
\textbf{Step 2.2. Replacing hard with soft deferral rules:} Using   \eqref{eq:def_m_oracle}--\eqref{eq:Delta_internal_ht} and applying  Lemma~\ref{lem:small_ineq}, we can  replace  the hard deferral decision $d_t(h_t)$ in $\Delta_{t, \select}(h_t)$  with its soft counterpart $\td_t(h_t)$ at the cost of a factor of 2, yielding  
\begin{align}
   \Delta_t (h_t) &= \Delta_{t, \select}(h_t)+\Delta_{t, \internal}(h_t)\nonumber\\
     &\leq 2 \Big( (1 - \td_t(h_t)) \mc{R}_t(h_t; f_t) + \td_t(h_t) \E\left[V^{\pi}_{t+1}(H_{t+1})\mid H_t = h_t\right]-\textsf{m}  \Big)   +\Big(\textsf{m}    -V_t^{\pi^*_\casc}(h_t) \Big) \label{eq:Delta_01c_decompose_new_1}\\
     &= 2 \Big( (1 - \td_t(h_t)) \mc{R}_t(h_t; f_t)  + \td_t(h_t) \E\left[V^{\pi}_{t+1}(H_{t+1})\mid H_t = h_t\right]   \Big) 
    -\textsf{m}- V_t^{\pi^*_\casc}(h_t)\nonumber\\
    &= 2 \Big( (1 - \td_t(h_t)) \mc{R}_t(h_t; f_t)  + \td_t(h_t) \E\left[V^{\pi}_{t+1}(H_{t+1})\mid H_t = h_t\right]   \Big) 
    - \Delta_{t,\internal}(h_t)-2 V_t^{\pi^*_\casc}(h_t)\nonumber\\
     & \leq 2 \Big( (1 - \td_t(h_t)) \mc{R}_t(h_t; f_t) + \td_t(h_t) \E\left[V_{t+1}^{\pi}(H_{t+1})\mid H_t=h_t\right]  -V_t^{\pi^*_\casc}(h_t) \Big) ,\label{eq:Delta_01c_decompose_new}
\end{align}
where the equalities follow by rearranging terms and   the final inequality~follows~from~$\Delta_{t,\internal}(h_t)\!\ge\!0$.\\
\textbf{Step 2.3. Connecting   $\Delta_t(h_t)$ to $\Delta_t^\hinge(h_t)$:} To help connect   the $0$-$1$-$c$ suboptimality gap $\Delta_t(h_t)$  to its hinge counterpart $\Delta^\hinge_t(h_t)$,  we  rewrite the RHS of  \eqref{eq:Delta_01c_decompose_new} without the factor of 2,  by   adding and subtracting the same terms to obtain
\begin{align}
    \tfrac{1}{2}\Delta_t(h_t)&\le   (1 - \td_t(h_t)) \mc{R}_t(h_t; f_t) + \td_t(h_t) \E\left[V_{t+1}^{\pi}(H_{t+1})\mid H_t=h_t\right]  -V_t^{\pi^*_\casc}(h_t) \nonumber \\
    &=\underbrace{(1 - \td_t(h_t)) \big(\mc{R}_t(h_t; f_t) -\mc{R}_t(h_t; f_t^*)\big)}_{=:\textsf{A}\ge0} + (1-\td_t(h_t))\mc{R}_t(h_t; f_t^*) \nonumber\\
    & \quad +\underbrace{\td_t(h_t)\Big( \E\left[V_{t+1}^{\pi}(H_{t+1})\mid H_t=h_t\right] -\E\left[V_{t+1}^{\pi^*_\casc}(H_{t+1})\mid H_t=h_t\right] \Big)}_{=:\textsf{B}\ge0} \nonumber\\
    &\quad  +\td_t(h_t)\E\left[V_{t+1}^{\pi^*_\casc}(H_{t+1})\mid H_t=h_t\right] - V_t^{\pi^*_\casc}(h_t)\nonumber\\
    &= \textsf{A}+\textsf{B} +\underbrace{ (1-\td_t(h_t))\mc{R}_t(h_t; f_t^*)+\td_t(h_t)\E\left[V_{t+1}^{\pi^*_\casc}(H_{t+1})\mid H_t=h_t\right] - V_t^{\pi^*_\casc}(h_t)}_{=:\textsf{C}\ge0}\nonumber\\
    &= \textsf{A} +\textsf{B} +\textsf{C}\,.\label{eq:half_Delta_ht}
\end{align}
Here, $\textsf{A}\ge0$ because $f_t^*$ is Bayes-optimal. Term  $\textsf{B}\ge 0$ because  the  induction hypothesis \eqref{eq:induction_hypothesis_nonneg}  assumes $\Delta_{t+1}(h_{t+1})=V_{t+1}^{\pi}(h_{t+1})-V_{t+1}^{\pi^*_\casc}(h_{t+1})\ge 0$ for all $h_{t+1}\in \mc{H}_{t+1}$.
Term $\textsf{C}\ge0$ because, using  \eqref{eq:def_Vt_f_star} and the fact that 
$  \min\{a,b\}\le (1-\lambda)a + \lambda b $ for every $\lambda \in [0,1]$, we have
\begin{align}\label{eq:prove_C_non_negative}
    V_t^{\pi^*_\casc} (h_t)
    &=\min\left\lbrace\mc{R}_t(h_t; f_t^*), \E\left[V_{t+1}^{\pi^*_\casc}(H_{t+1})\mid H_t=h_t\right]\right\rbrace\nonumber\\
    &\le (1-\td_t(h_t))\mc{R}_t(h_t; f_t^*)+\td_t(h_t)\E\left[V_{t+1}^{\pi^*_\casc}(H_{t+1})\mid H_t=h_t\right]\,.
\end{align} 

We upper bound terms $\textsf{A}, \textsf{B}, \textsf{C}$  separately with their hinge counterparts. By single-stage calibration (Lemma~\ref{lem:K_mistake}, with $h_t$ in place of $x, z_t$), similar to \eqref{eq:stage_T_calibration}, we have 
\begin{align}
    \textsf{A}\le (K-1) \underbrace{(1-\td_t(h_t))\Big(\mc{R}_t^\hinge(h_t; \tf_t) - \mc{R}_t^\hinge(h_t; \tf^*_t)\Big)}_{=:\tilde{\textsf{A}}} = (K-1) \tilde{\textsf{A}}\,.\label{eq:A_tilde_A}
\end{align}
Since  $\tf_t^*$ is   Bayes-optimal   for the cascade hinge loss, $\mc{R}_t^\hinge(h_t; \tf_t) - \mc{R}_t^\hinge(h_t; \tf^*_t)\ge0$,  and so 
  $\tilde{\textsf{A}}\ge0$. 
By the induction hypothesis \eqref{eq:induction_hypothesis}, we can write 
\begin{align}
    \textsf{B}&= \td_t(h_t)\E\left[V_{t+1}^{\pi}(H_{t+1}) - V_{t+1}^{\pi^*_\casc}(H_{t+1})\mid H_t=h_t\right]\nonumber\\
    &\le\td_t(h_t)C_{t+1}\E\left[V_{t+1}^{\tpi}(H_{t+1}) - V_{t+1}^{\tpi^*_\casc}(H_{t+1})\mid H_t=h_t\right]\nonumber\\
    & = 
    %2^{T-t-1}(K-1) 
    C_{t+1}   \underbrace{\td_t(h_t)\E\left[V_{t+1}^{\tpi}(H_{t+1}) - V_{t+1}^{\tpi^*_\casc}(H_{t+1})\mid H_t=h_t\right]}_{=:\tilde{\textsf{B}}} =C_{t+1} \tilde{\textsf{B}}\,,\label{eq:B_tilde_B}
\end{align}
where $\tilde{\textsf{B}}\ge0$ because
the induction hypothesis \eqref{eq:induction_hypothesis_nonneg}   assumes $\Delta_{t+1}^\hinge(h_{t+1})=V_{t+1}^{\tpi}(h_{t+1})-V_{t+1}^{\tpi^*_\casc}(h_{t+1})\ge 0$ for all $h_{t+1}\in \mc{H}_{t+1}$.

Finally, since $\mc{R}_t(h_t;f_t^*)=\tfrac{K-1}{K}\mc{R}_t^\hinge(h_t; \tf_t^*)$ and $V_t^{ \pi_\casc^*}(h_t) =\tfrac{K-1}{K} V_t^{\tpi_\casc^*}(h_t)  $ by Lemma~\ref{lem:K/(K-1)_factor} for all $t\in[T]$, we can write 
\begin{align}
    \textsf{C}&=(1-\td_t(h_t))\tfrac{K-1}{K}\mc{R}_t^\hinge(h_t; \tf_t^*)+\td_t(h_t)\tfrac{K-1}{K}\E\left[V_{t+1}^{\tpi^*_\casc}(H_{t+1})\mid H_t=h_t\right]-\tfrac{K-1}{K}V_t^{\tpi_\casc^*}(h_t)\nonumber \\
    &=\tfrac{K-1}{K}\underbrace{\Big((1-\td_t(h_t))\mc{R}_t^\hinge(h_t; \tf_t^*)+\td_t(h_t) \E\left[V_{t+1}^{\tpi^*_\casc}(H_{t+1})\mid H_t=h_t\right]- V_t^{\tpi_\casc^*}(h_t)\Big)}_{=:\tilde{\textsf{C}}} = \tfrac{K-1}{K} \tilde{\textsf{C}}\,,\label{eq:C_tilde_C}
\end{align}
where $\tilde{\textsf{C}}\ge0$ following the same argument as that in \eqref{eq:prove_C_non_negative}  for $\textsf{C}\ge0$, using \eqref{eq:def_Vt_ftilde_star} instead of \eqref{eq:def_Vt_f_star}.\\
\textbf{Step 2.4. Assembling all terms:} Finally, recall from \eqref{eq:induction_hypothesis} that $ C_{t+1}=2^{T-t-1}(K-1) $. Since $t\le T-1$ implies $2^{T-t-1}\ge 1$,    the pre-multiplicative factors  $  K-1 ,  C_{t+1} $, and $\frac{K-1}{K}$ in   \eqref{eq:A_tilde_A}, \eqref{eq:B_tilde_B}, and \eqref{eq:C_tilde_C}, respectively,  are all dominated by   $ C_{t+1} $.  Together with  $\tilde{\textsf{A}}, \tilde{\textsf{B}}, \tilde{\textsf{C}}\ge 0$, we have 
\begin{align*}
    \textsf{A}\le C_{t+1}\tilde{\textsf{A}},\quad  \textsf{B}\le C_{t+1}\tilde{\textsf{B}}, \quad \text{and}\quad \textsf{C}\le  C_{t+1}\tilde{\textsf{C}}\,.
\end{align*}  Substituting this 
into \eqref{eq:half_Delta_ht} and rearranging yields
\begin{align}\label{eq:A_B_C}
     \Delta_t(h_t)  &\leq 2\big(\textsf{A} + \textsf{B} + \textsf{C}\big) 
     \le 2\cdot C_{t+1}\big(\tilde{\textsf{A}} + \tilde{\textsf{B}} + \tilde{\textsf{C}}\big) = C_t\big(\tilde{\textsf{A}} + \tilde{\textsf{B}} + \tilde{\textsf{C}}\big)\,.
     \end{align}
  By the definition of $\tilde{\textsf{A}}, \tilde{\textsf{B}}$,  and $\tilde{\textsf{C}}$ from \eqref{eq:A_tilde_A}, \eqref{eq:B_tilde_B}, and \eqref{eq:C_tilde_C}, we can write 
  %Rearranging and simplifying gives
\begin{align}
    \tilde{\textsf{A}} + \tilde{\textsf{B}} + \tilde{\textsf{C}}
   &=(1-\td_t(h_t))\Big(\mc{R}_t^\hinge(h_t; \tf_t) - \mc{R}_t^\hinge(h_t; \tf^*_t)\Big)\nonumber \\
   &\quad  + \td_t(h_t)\E\left[V_{t+1}^{\tpi}(H_{t+1}) - V_{t+1}^{\tpi^*_\casc}(H_{t+1})\mid H_t=h_t\right]\nonumber \\
   &\quad +\Big((1-\td_t(h_t))\mc{R}_t^\hinge(h_t; \tf_t^*)+\td_t(h_t) \E\left[V_{t+1}^{\tpi^*_\casc}(H_{t+1})\mid H_t=h_t\right]- V_t^{\tpi_\casc^*}(h_t)\Big)\nonumber \\
   &=(1-\td_t(h_t)) \mc{R}_t^\hinge(h_t; \tf_t)  + \td_t(h_t)\E\left[V_{t+1}^{\tpi}(H_{t+1})  \mid H_t=h_t\right]   - V_t^{\tpi_\casc^*}(h_t) \nonumber \\
   &\stackrel{\text{(i)}}{=} V_t^{\tpi}(h_t) - V_t^{\tpi_\casc^*}(h_t) \stackrel{\text{(ii)}}{=} \Delta_t^\hinge(h_t)\,,\label{eq:tilde_A_B_C}
\end{align}
  where   step (i) applies the definition of $V_t^{\tpi}(h_t)$ (see \eqref{eq:def_Vt_new}), and step (ii)  uses the definition of $\Delta_t^\hinge(h_t)$ in \eqref{eq:induction_hypothesis_t}. Combining \eqref{eq:A_B_C} and \eqref{eq:tilde_A_B_C} proves  \eqref{eq:induction_hypothesis_t}. Together with the result $\Delta_t(h_t)\ge 0$ established  in \eqref{eq:Delta_t_nonneg}, this implies   $\Delta_t^\hinge(h_t)\ge 0$, proving the second part of \eqref{eq:induction_hypothesis_t_nonneg}. Taken together, we have  completed the backward~induction~step. 
\paragraph{Step 3. Conclusion:} Applying this argument inductively down to $t=1$ yields  \eqref{eq:Delta_link_Delta} for all $t\in[T]$. Setting $t=1$ in \eqref{eq:Delta_link_Delta} and taking total expectation over $H_1$ proves \eqref{eq:calibration_C1_def}, which in turn proves   \eqref{eq:casc_calibration}.
\end{proof}

\subsection{Lower Bound}
We prove the following lower bound to show that  $2^{T-1}(K-1)$ is the smallest constant for which the upper bound in Theorem~\ref{thm:calibration_casc} holds \emph{uniformly} over all data distributions $P$
  in \eqref{eq:def_P} satisfying \eqref{eq:conditional_indep},  cost vectors $c=\big[
    \kappa_1, \kappa_1+\kappa_2, \dots, \sum_{t=1}^T\kappa_t\big]^\top$  with $\kappa_1=0$ and $\kappa_t\ge \kappa_{t-1}$, and  multi-model classifiers  $\tpi\in \wt{\Pi}_\casc$.

\begin{proposition}[Lower bound]\label{prop:LB}
Let $K\ge 2$ and $T\ge 2$. For every $C<2^{T-1}(K-1)$, there exists a distribution  $P$  satisfying \eqref{eq:conditional_indep},   an admissible  cost vector $c$,  and   $\tpi\in \wt{\Pi}_\casc$ such that 
    \begin{align}
        \mc{R}\big(\varphi(\tpi)\big) - \mc{R}^*_\casc > C\Big(\mc{R}^\hinge_\casc(\tpi) - \mc{R}^{\hinge*}_\casc\Big)\,.
    \end{align}
    
\end{proposition}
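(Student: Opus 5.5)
The plan is to build a degenerate instance that makes every inequality in the proof of Theorem~\ref{thm:calibration_casc} nearly tight at once. That means equality in Lemma~\ref{lem:K_mistake} at the terminal stage, and the worst case of Lemma~\ref{lem:small_ineq} (soft deferral $\td_t\equiv 0.5$, rounded up to defer) at every stage $t<T$.

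\textbf{The distribution.} Take $\mc{X}$ and every $\mc{Z}_t$ to be singletons. Then $H_t$ is deterministic and \eqref{eq:conditional_indep} holds trivially. All stages share one conditional label law $p$.

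For $\delta>0$, fix a label $j$ and set $p_k=q$ for $k\neq j$ and $p_j=q-\delta$, where $q=(1+\delta)/K$.

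Take costs $\kappa_1=0$ and $\kappa_t=\eta$ for $t\ge2$. This is admissible and gives $c_t=(t-1)\eta$. Then $\mc{R}_t(h_t;f_t^*)=1-q+(t-1)\eta$, which is strictly increasing in $t$. By Lemma~\ref{lem:bayes_opt_casc_V} the Bayes cascade exits at stage 1, and $V_t^{\pi^*_\casc}=\mc{R}_t(h_t;f^*_t)$ for every $t$. By Lemma~\ref{lem:K/(K-1)_factor} the hinge Bayes values are the same quantities scaled by $\tfrac{K}{K-1}$.

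\textbf{The soft policy $\tpi$.} Set $\td_t\equiv0.5$ for all $t<T$. By \eqref{eq:hard_r_casc} the hard router never exits early and always reaches stage $T$.

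Take $\tf_t=\tf_t^*$ for $t<T$. For the last stage take $\tf_T\equiv 0\in\R^K$, with the argmax tie broken to $j$.

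A direct computation following Lemma~\ref{lem:K_mistake} gives $L(0)=U-\sum_k p_k/(K-1)=1$. The hinge Bayes risk without cost is $\tfrac{K}{K-1}(1-q)$. Hence the terminal excess hinge risk is $\delta/(K-1)$, while the terminal $0$-$1$-$c$ excess is $q-p_j=\delta$. So $\Delta_T=(K-1)\Delta_T^\hinge$ exactly, because every $p_k$ with $k\ne j$ equals $\max_y p_y$ and every hinge term is active.

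\textbf{Unrolling the recursion.} For $t<T$, the hard policy defers, so
\begin{align*}
V_t^{\pi}=V_{t+1}^{\pi}=\dots=V_T^{\pi}=1-q+\delta+(T-1)\eta .
\end{align*}
This gives $\mc{R}(\varphi(\tpi))-\mc{R}^*_\casc=\delta+(T-1)\eta$.

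For the soft policy,
\begin{align*}
V_t^{\tpi}=\tfrac12\,\mc{R}^\hinge_t(h_t;\tf_t^*)+\tfrac12\,V_{t+1}^{\tpi} .
\end{align*}
Subtracting $V_t^{\tpi^*_\casc}=\mc{R}^\hinge_t(h_t;\tf_t^*)$ yields
\begin{align*}
\Delta_t^\hinge=\tfrac12\Delta_{t+1}^\hinge+\tfrac12\cdot\tfrac{K}{K-1}\eta .
\end{align*}
Unrolling from $t=T$ down to $t=1$ gives
\begin{align*}
\Delta_1^\hinge=2^{-(T-1)}\tfrac{\delta}{K-1}+O(\eta),
\end{align*}
with the $O(\eta)$ term bounded by $\tfrac{K}{K-1}\eta$.

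\textbf{Conclusion.} The ratio of the two excess risks is
\begin{align*}
\frac{\delta+(T-1)\eta}{2^{-(T-1)}\delta/(K-1)+O(\eta)} .
\end{align*}
Fix $\delta$ and let $\eta\to0$; the ratio tends to $2^{T-1}(K-1)$. Hence for any $C<2^{T-1}(K-1)$, a small enough $\eta>0$ makes the inequality in Proposition~\ref{prop:LB} strict.

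\textbf{Main obstacle.} The delicate step is keeping the Bayes deferral rule strictly ``exit at stage 1'' while making the continuation penalty $\eta$ negligible. This is needed because each near-tie contributes a factor $\tfrac12$ to the hinge gap but essentially nothing to the $0$-$1$-$c$ gap.

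Two checks handle this. The strict monotonicity of $c_t$ removes ties in \eqref{eq:d_star_casc_V} and \eqref{eq:d_star_tilde}. The bookkeeping of the $O(\eta)$ terms in both numerator and denominator must still be verified, so that they vanish and do not alter the limit.
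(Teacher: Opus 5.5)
Your construction is correct and is essentially the paper's. The paper uses zero costs and a label law that is uniform on $K-1$ labels with zero mass on label $K$; this is your family with $\delta=\tfrac{1}{K-1}$ and $j=K$. It also uses Bayes-optimal scores before stage $T$, soft deferrals $\tfrac12+\epsilon$, and a terminal score vector $v^{(s)}$ that slightly favours label $K$, then lets $\epsilon,s\to0$.

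Two small remarks. First, the positive cost $\eta$ is unnecessary: with $\eta=0$, ties in the Bayes deferral do not change the value functions, and your recursion then gives the ratio exactly $2^{T-1}(K-1)$. Second, breaking the tie of $\tf_T\equiv 0$ toward $j$ should either match whatever selection $\varphi$ actually makes (choose $j$ accordingly), or be replaced by a slight perturbation, as the paper does with $v^{(s)}$.
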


\begin{proof} We construct a bad instance of $P, c$ and $ \tpi$, and show that the excess risks of the bad instance  have a ratio that approaches $2^{T-1}(K-1)$.\\
\textbf{Step 1. Choose $P$ and $c$ for  the bad instance:} Set all incremental costs to zero, i.e., $\kappa_t=0$ for all $t\in[T]$, giving the all-zero cost vector $c= \bzero_T$. Let $X$ have any distribution on $\mc{X}$, and let $Z_1, Z_2, \dots, Z_T$ be constant.     Let $Y$ be independent of $X$ with 
\begin{align}\label{eq:def_py}
   p_y:= \Pr(Y=y)=\begin{cases}
        \tfrac{1}{K-1}  &\text{for}\quad  y\in [K-1]\,,\\
        0  &\text{for}\quad  y=K\,.
    \end{cases}
\end{align}
This ensures $Y$ to be independent of the full history $H_T=(X, Z_{1:T})$. In particular, \eqref{eq:conditional_indep} holds and $p(y\mid x, z_{1:t}) = p(y\mid x, z_t)=p_y$ for all $t\in [T]$.

 Using 
 the definition of $\ell^\hinge$ from \eqref{eq:stage_wise_surrogate},     for a score vector $v\in \mathbb{R}^K$ with $\sum_{k\in[K]} v_k=0$, we can write  that 
 \begin{align}\label{eq:ell_bar_def}
     \ell^\hinge(v, y; 0) = \sum_{k\in[K]\setminus y}
\underbrace{\Big[v_k + \tfrac{1}{K-1}\Big]_+}_{=: a_k}
+ \tfrac{K}{K-1}\cdot 0 = \sum_{k\in[K]\setminus y} a_k = :\bar{\ell}^\hinge (v, y)  \,.
 \end{align}
 Define $L(v)$ as follows, and expanding and rearranging gives
\begin{align}\label{eq:L(v)}
    L(v):=
    %\sum_{y\in [K]} p_y \ell_t^\hinge (v, y; 0) =
    \sum_{y\in[K]}p_y \bar{\ell}^\hinge (v, y) %&= \sum_{y\in [K]} p_y \Bigg(\sum_{k\in[K]\setminus y}\underbrace{\Big[v_k + \tfrac{1}{K-1}\Big]_+}_{=: a_k}+ \tfrac{K}{K-1}\cdot 0\Bigg) \nonumber\\
&= \sum_{y\in [K]} p_y  \sum_{k\in[K]\setminus y}
a_k 
\stackrel{\text{(i)}}{=}  \sum_{k\in[K] }  a_k\sum_{y\in [K]\setminus k } p_y   \nonumber\\
&= \sum_{k\in[K] } a_k(1-p_k) 
\stackrel{\text{(ii)}}{=} \sum_{k=1}^{K-1}a_k  \cdot \tfrac{K-2}{K-1} + a_K \,,
\end{align}
where step (i) exchanges summations, and  step  (ii) applies \eqref{eq:def_py}. Moreover, we can lower bound $L(v)$ by
\begin{align}\label{eq:L(v)_LB}
L(v) \stackrel{\text{(i)}}{\ge} \tfrac{K-2}{K-1}\sum_{k\in[K]}a_k 
\stackrel{\text{(ii)}}{\ge} \tfrac{K-2}{K-1}\sum_{k\in[K]}\Big[v_k + \tfrac{1}{K-1}\Big]
\stackrel{\text{(iii)}}{=}  \tfrac{K-2}{K-1} \cdot K\cdot  \tfrac{1}{K-1} = \tfrac{K(K-2)}{(K-1)^2}=:L^*\,,
\end{align}
where step (i)  uses $a_k\ge 0$ for all $k\in[K]$,   step (ii) uses $[x]_+\ge x$ for any $x\in \mathbb{R}$, and step (iii) applies $\sum_{k\in[K]}v_k=0$. 
Define a score vector $v^*\in \mathbb{R}^K$  by 
$v^*_1:=1$ and $v^*_k:=-\frac{1}{K-1}$ for all $2\le k\le K$, which satisfies $\sum_{k\in[K]}v_k^*=0$.  Plugging $v=v^*$ into \eqref{eq:L(v)} and applying $p_1=\tfrac{1}{K-1}$  from \eqref{eq:def_py} obtains
\begin{align}\label{eq:v_star_def}
   L(v^*) = %\tfrac{K}{K-1}\cdot (1-p_1) + \sum_{k=2}^K0\cdot (1-p_k)  =
   \tfrac{K}{K-1} \tfrac{K-2}{K-1} + 0
   %= \tfrac{K}{K-1}\left(1-\tfrac{1}{K-1}\right)
   =\tfrac{K(K-2)}{(K-1)^2}=L^*\,.
\end{align}
\textbf{Step 2. Compute   Bayes risks $\mc{R}^*$ and $\mc{R}^{\hinge*}_\casc$:} We now  show that  in the bad instance that we have set up, the  Bayes 0-1-$c$ risk  and the Bayes cascade hinge risk are $\mc{R}^*=\frac{K-2}{K-1}$ and $\mc{R}^{\hinge*}_\casc =L^*$, respectively.

First, using the definition of the 0-1-$c$ risk  $\mc{R}(\pi)$ in \eqref{eq:multi_01c}--\eqref{eq:def_R(f;c)},
% and \eqref{eq:def_py}
we can express $\mc{R}(\pi)$ in the bad instance  as  
\begin{align}\label{eq:R_pi_hard_case}
    \mc{R}(\pi) &= \E\Big[\Ind\{f_{r(H)} (X, Z_{r(H)}) \ne Y\} + c_{r(H)}\Big] \nonumber\\
    &=\Pr\Big(f_{r(H)} (X, Z_{r(H)}) \ne Y\Big)=1-\Pr\Big(Y=f_{r(H)} (X, Z_{r(H)})\Big)\,,
\end{align}
and, using \eqref{eq:def_py}, we can lower bound it by
\begin{align}
     \mc{R}(\pi) \ge 1- \max_{y\in [K]} p_y =1- \tfrac{1}{K-1}=\tfrac{K-2}{K-1}\,.
\end{align}
Note that equality holds for $f_t \equiv 1$ for all $t\in[T]$ and  any router $r$, attaining  the lower bound $\tfrac{K-2}{K-1}$\,. Hence,  the Bayes 0-1-$c$ risk is 
\begin{align}\label{eq:def_R_star_casc_LB}
\mc{R}^*_\casc=\tfrac{K-2}{K-1}\,.
\end{align} 

Second, using the definition of the hinge risk $\mc{R}^\hinge_\casc(\tpi)$ below  \eqref{eq:surrogate_hinge_casc}--\eqref{eq:r_tilde_t_def}, we can express $\mc{R}^\hinge_\casc(\tpi)$ in the bad instance as
\begin{align}
\mc{R}^\hinge_\casc(\tpi)
&=\E\left[\sum_{t=1}^T\tr_t(H_t)\ell^\hinge\big(\tf_t(X,Z_{t}),Y; c_t\big)\right]  \nonumber\\
& 
\stackrel{\text{(i)}}{=} \E\left[\sum_{t=1}^T\tr_t(H_t)\bar{\ell}^\hinge \big(\tf_t(X,Z_{t}),Y\big)\right]\nonumber\\
& = \sum_{t=1}^T\E\left[\tr_t(H_t)\E\left[\bar{\ell}^\hinge \big(\tf_t(X,Z_{t}),Y\big)\,\Big|\, H_t\right]\right]\nonumber\\
& \stackrel{\text{(ii)}}{=} \sum_{t=1}^T\E\left[\tr_t(H_t) \sum_{y\in[K]}p_y\bar{\ell}^\hinge\big( \tf_t(X,Z_{t}),y\big)\right]\nonumber\\
& \stackrel{\text{(iii)}}{=} \sum_{t=1}^T\E\left[\tr_t(H_t)  L\big(\tf_t(X,Z_{t})\big)\right]\label{eq:R_hinge_casc_hard_case}\,,
\end{align}
 where step (i) applies the definition of $\bar{\ell}^{\hinge}$ in \eqref{eq:ell_bar_def}, step (ii) uses $ \Pr(Y=y\mid H_t)=p_y$, and step (iii) uses the definition of $L$ in \eqref{eq:L(v)}. 
Using \eqref{eq:L(v)_LB} and $\sum_{t=1}^T \tr_t(H_t)=1$, we  can further lower bound $\mc{R}^\hinge_\casc(\tpi)$ by 
\begin{align}
   \mc{R}^{\hinge}_\casc (\tpi) \ge   \sum_{t=1}^T\E \left[\tr_t(H_t)   L^*\right] =  \E\left[\sum_{t=1}^T \tr_t(H_t)\right]\cdot L^* =L^* \,.
\end{align}
% step (iv) applies   \eqref{eq:L(v)_LB}, and step (v) uses $\sum_{t=1}^T \tr_t(H_t)=1$ for any $X, Z_1, \dots, Z_T$. 
Note that \eqref{eq:v_star_def} implies that equality holds for $\tf_t\equiv v^*$ for all $t\in[T]$ and for any soft router  $\tr_1, \dots, \tr_T$. Hence, the Bayes cascade hinge risk is 
 \begin{align}\label{eq:def_R_star_hinge_casc_LB}
\mc{R}^{\hinge*}_\casc=L^*. 
 \end{align}

 \paragraph{Step 3. Choose  $ \tpi\in \wt{\Pi}_\casc$ for  the bad instance:} Recall that $v^*\in \mathbb{R}^K$  is a score vector with  $v^*_1=1$ and $v^*_k=-\frac{1}{K-1}$ for all $2\le k\le K$. Fix $\epsilon\in [0,\tfrac{1}{2}]$ and $s\in (0, \frac{1}{K-1}]$. Define  a score vector  $v^{(s)}\in \mathbb{R}^K$  by   $v^{(s)}_k := -s$ for $k\in[K-1]$ and $v_K^{(s)} :=(K-1)s$,  satisfying  $\sum_{k=1}^K v_k^{(s)}=0$.  Then, define $\tpi =(\tf_{1:T}, \td_{1:T})$ as 
 %follows: for any $x\in\mc{X}, z_t\in \mc{Z}_t$, define 
 \begin{align}
\tf_t \equiv v^*\quad \text{for}\; t\le T-1, \quad \text{and}\quad \tf_T  \equiv v^{(s)}\,,
 \end{align}
 and  \begin{align}
     \td_t\equiv \tfrac{1}{2}+\epsilon \quad \text{for}\; t\le T-1, \quad \text{and}\quad \td_T\equiv 0\,.
 \end{align}
This constructed $\tpi$ belongs to $ \wt{\Pi}_\casc$ by definition of $\wt{\Pi}_\casc$. The probability of exiting  at stage $T$ is thus 
\begin{align}\label{eq:tilde_rT_LB}
    \tr_T\equiv (\tfrac{1}{2}+\epsilon)^{T-1}
\end{align} and, by the definition of $L(v)$ in \eqref{eq:L(v)}, we have 
\begin{align}
    L(v^{(s)}) &= \sum_{k=1}^{K-1} \Big[v_k^{(s)} + \tfrac{1}{K-1}\Big]_+\cdot \tfrac{K-2}{K-1} + \Big[v_K^{(s)} + \tfrac{1}{K-1}\Big]_+\nonumber\\
    & = \sum_{k=1}^{K-1} \Big[-s + \tfrac{1}{K-1}\Big]_+\cdot \tfrac{K-2}{K-1} + \Big[(K-1)s + \tfrac{1}{K-1}\Big]_+\nonumber\\
    & \stackrel{\text{(i)}}{=} \sum_{k=1}^{K-1} \Big[-s + \tfrac{1}{K-1}\Big]\cdot \tfrac{K-2}{K-1} +\Big[(K-1)s + \tfrac{1}{K-1}\Big] \nonumber\\&= \tfrac{K-2}{K-1} -s(K-2) + s(K-1) +\tfrac{1}{K-1} = s + 1\,,\label{eq:L_vs}
\end{align}
where step (i) holds because $s\in (0, \tfrac{1}{K-1}]$ by construction. \\
\textbf{Step 4. Compute the risks $\mc{R}(\varphi(\tpi))$ and $\mc{R}^\hinge_\casc(\tpi)$:} We are now ready to derive   the cascade hinge risk of the constructed  $\tpi$ and the 0-1-$c$ risk of its hard prediction $\varphi(\tpi)$. First,  
together with  the expression for $\mc{R}^\hinge_\casc(\tpi)$ in \eqref{eq:R_hinge_casc_hard_case}, we can write 
\begin{align}
\mc{R}^\hinge_\casc(\tpi)
&=\sum_{t=1}^T\E\left[\tr_t(H_t)  L\big(\tf_t(X,Z_{t})\big)\right] =
\sum_{t=1}^{T-1}\E\left[\tr_t(H_t)  L\big(v^*\big)\right] + \E\left[\tr_T(H_T) L(v^{(s)})\right] \nonumber\\
&\stackrel{\text{(i)}}{=} \Big(1-\E\big[\tr_T(H_T)\big]\Big)L^* + \E\big[\tr_T(H_T)\big] (s+1)\nonumber\\
&= L^*+ \E\big[\tr_T(H_T)\big](s+1-L^*)\stackrel{\text{(ii)}}{=}L^* + (\tfrac{1}{2}+\epsilon)^{T-1}(s+1-L^*)\,,\label{eq:R_casc_hinge_LB}
\end{align}
where step (i) uses $L(v^*)=L^*$ from \eqref{eq:v_star_def} and $L(v^{(s)}) =s+1$ from \eqref{eq:L_vs},  and step (ii) applies  \eqref{eq:tilde_rT_LB}. 

Second, under the soft-to-hard map $\varphi$, as defined in \eqref{eq:hard_ft_from_soft_ft} and \eqref{eq:hard_r_casc}, the hard classifiers are $f_t\equiv 1$ for $t\le T-1$ and $f_T\equiv K$ and, induced by the deferral rules,    the hard routing rule is $r(h)=T$ for all $h\in\mc{H}$.
%almost surely, 
This implies that 
\begin{align}\label{eq:R_hard_pi}
    \mc{R}(\varphi(\tpi)) =\Pr\Big(f_{r(H)}(X, Z_{r(H)})\ne Y\Big)= \Pr\Big(f_T(X, Z_T)\ne Y\Big) = \Pr\big(Y\ne K\big) = 1\,,
\end{align}
where the last step uses \eqref{eq:def_py}.\\
\textbf{Step 5. Compute  excess risks and prove lower bound:} The final step computes and links the excess risks. 
Combining  \eqref{eq:def_R_star_casc_LB} and \eqref{eq:R_hard_pi} gives the excess 0-1-$c$ risk:
\begin{align}\label{eq:excess_01c_LB}
    \mc{R}(\varphi(\tpi)) -\mc{R}^*_\casc = 1-\tfrac{K-2}{K-1} =\tfrac{1}{K-1}\,.
\end{align}
Combining \eqref{eq:def_R_star_hinge_casc_LB} and \eqref{eq:R_casc_hinge_LB} gives the excess cascade hinge risk:
\begin{align}\label{eq:excess_hinge_LB}
  \mc{R}^\hinge_\casc(\tpi) -\mc{R}^{\hinge*}_\casc
  &= L^* + (\tfrac{1}{2}+\epsilon)^{T-1}(s+1-L^*)- L^*
  =(\tfrac{1}{2}+\epsilon)^{T-1}(s+1-L^*)\nonumber\\
  & =(\tfrac{1}{2}+\epsilon)^{T-1}\Big[s+1-\tfrac{K(K-2)}{(K-1)^2}\Big] = (\tfrac{1}{2}+\epsilon)^{T-1}\Big[s+\tfrac{1}{(K-1)^2}\Big]\,,
\end{align}
where the last step applies \eqref{eq:L(v)_LB}. Finally, taking ratio of \eqref{eq:excess_01c_LB} and \eqref{eq:excess_hinge_LB} gives
\begin{align}
    \frac{ \mc{R}(\varphi(\tpi)) -\mc{R}^*_\casc}{\mc{R}^\hinge_\casc(\tpi) -\mc{R}^{\hinge*}_\casc} = \frac{\tfrac{1}{K-1}}{(\tfrac{1}{2}+\epsilon)^{T-1}\Big[s+\tfrac{1}{(K-1)^2}\Big]} = \frac{2^{T-1}(K-1)}{(1+2\epsilon)^{T-1} \Big[s(K-1)^2+ 1 \Big]} =: \textsf{R}\,.\label{eq:ratio_LB}
\end{align}
As $\epsilon, s\to 0$, the denominator tends to 1, and so $\textsf{R}\to 2^{T-1}(K-1)$, implying that $2^{T-1}(K-1)$ is the constant in the lower bound. More concretely,  for every $C<2^{T-1}(K-1)$, we can choose $\epsilon$ and $s$ small enough that $\textsf{R}>C$. For example, setting \begin{align*}
\epsilon=0\quad \text{and}\quad      s=\min\left\lbrace \tfrac{1}{K-1}\,, \; \frac{2^{T-1} (K-1)/C-1}{2(K-1)^2}\right\rbrace>0\,,
\end{align*} 
and using the shorthand $\textsf{r}:=2^{T-1} (K-1)/C>1$ gives
\begin{align*}
  s(K-1)^2+1 \le  \frac{\textsf{r}-1}{2} +1= \frac{\textsf{r}+1}{2}    < \frac{\textsf{r}+\textsf{r}}{2}   =\textsf{r}\,.
\end{align*}
This implies that 
\begin{align}
  \textsf{R}=  \frac{2^{T-1}(K-1)}{1\cdot   \Big[s(K-1)^2+ 1 \Big]} > \frac{2^{T-1}(K-1)}{  \textsf{r}}  =  C \,.
\end{align}
  This proves the proposition.
\end{proof}

\subsection{An Upper Bound in Favorable Instances} 

Proposition~\ref{prop:LB} shows that the constant in Theorem~\ref{thm:calibration_casc} cannot be improved in bad problem instances, where every soft deferral decision is maximally indecisive, that is, $\td_t\equiv 0.5$. We now  show that the   constant in Theorem~\ref{thm:calibration_casc} is much smaller in more favorable problem instances, where the soft deferral decisions stay away from 0.5. 
To begin, we prove a refined, margin-dependent 
 version of Lemma~\ref{lem:small_ineq} to connect soft and hard deferral decisions. 
\begin{lemma}[Margin-dependent   inequality linking soft and hard deferral decisions]\label{lem:refined}
   For $\td\in[0,1]$, let $d:=\Ind\{\td\ge 0.5\}$ and $m:=|\td-0.5|$. For all $\mc{R}, \bar{V}\in \mathbb{R}$, 
    \begin{align}\label{eq:relate_soft_hard_r}
        (1-d)\mc{R} + d\bar{V} -\min( \mc{R}, \bar{V})\;\le\; \tfrac{2}{1+2m} \Big[(1-\td) \mc{R} +\td\bar{V} -\min(\mc{R}, \bar{V})\Big]\,,
    \end{align}
    where $\tfrac{2}{1+2m}\in[1,2]$ is the smallest constant valid at the given $m$. 
\end{lemma}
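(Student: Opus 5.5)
I will mirror the two-case proof of Lemma~\ref{lem:small_ineq}, replacing the crude pointwise bounds $d\le 2\td$ and $1-d\le 2(1-\td)$ with margin-dependent versions. The first step is to show
\begin{align*}
d\le \tfrac{2}{1+2m}\,\td \qquad\text{and}\qquad 1-d\le \tfrac{2}{1+2m}\,(1-\td).
\end{align*}
Take $\td\ge 0.5$, so $d=1$ and $\td=0.5+m$. Then $\tfrac{2}{1+2m}\td=\tfrac{2(0.5+m)}{1+2m}=1=d$, so the first inequality holds with equality. For the second, $1-d=0$ and its right-hand side is nonnegative. Now take $\td<0.5$, so $d=0$ and $1-\td=0.5+m$. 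Then $1-d=1=\tfrac{2}{1+2m}(1-\td)$, and the first inequality is trivial because $d=0$.

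\textbf{Case analysis.} The decomposition is the same as in Lemma~\ref{lem:small_ineq}. If $\mc{R}\le\bar V$, the left-hand side of \eqref{eq:relate_soft_hard_r} equals $d(\bar V-\mc{R})$ and the bracket on the right equals $\td(\bar V-\mc{R})$. Since $\bar V-\mc{R}\ge 0$, the bound $d\le\tfrac{2}{1+2m}\td$ closes this case. If $\mc{R}>\bar V$, the left-hand side equals $(1-d)(\mc{R}-\bar V)$ and the bracket equals $(1-\td)(\mc{R}-\bar V)$. The bound on $1-d$ then finishes the argument. The range claim $\tfrac{2}{1+2m}\in[1,2]$ follows immediately from $m\in[0,0.5]$.

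\textbf{Minimality.} The only step needing care is showing that $\tfrac{2}{1+2m}$ is the smallest valid constant at the given $m$. I will exhibit an instance with equality and positive sides. Set $\td=0.5+m$, so $d=1$, and take $\mc{R}=0$, $\bar V=1$. The left-hand side is then $1$, and the bracket is $\td=\tfrac{1+2m}{2}>0$. Hence any constant $C<\tfrac{2}{1+2m}$ gives $C\cdot\tfrac{1+2m}{2}<1$ and fails. This is the same witness used in Lemma~\ref{lem:small_ineq}, and it recovers that lemma at $m=0$.
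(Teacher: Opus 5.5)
Your proof is correct and is essentially the paper's argument. The paper rewrites the two sides as $(1-d)(\mathcal{R}-\bar V)_+ + d(\bar V-\mathcal{R})_+$ and $(1-\tilde d)(\mathcal{R}-\bar V)_+ + \tilde d(\bar V-\mathcal{R})_+$, then splits on whether $\tilde d<0.5$ or $\tilde d\ge 0.5$ using $q=\max(\tilde d,1-\tilde d)=\tfrac12+m$. This is the same two-by-two case analysis you carry out via the bounds $d\le \tilde d/q$ and $1-d\le (1-\tilde d)/q$, and your tightness witness ($\tilde d=0.5+m$, $\mathcal{R}=0$, $\bar V=1$, with both sides strictly positive) is one of the paper's equality cases.
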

\begin{proof}
   Let $[x]_+:=\max(x,0)$. Using $m:=|\td-0.5|$, we can define and write
   \begin{align}
       q:=\max(\td, 1-\td)=\tfrac{1}{2}+m\,, \quad \tfrac{1}{q}=\tfrac{2}{1+2m}.
   \end{align}
For brevity, we use shorthand $H:=(1-d)\mc{R} + d\bar{V} -\min( \mc{R}, \bar{V})$ and  $S:=(1-\td) \mc{R} +\td\bar{V} -\min(\mc{R}, \bar{V})$, so the desired bound   in \eqref{eq:relate_soft_hard_r} can be equivalently written as 
\begin{align}\label{eq:H_S/q}
    H\le \tfrac{1}{q}S\,,
\end{align}
which we prove below. 
To establish  tightness of \eqref{eq:relate_soft_hard_r}, it therefore suffices to construct a case in which  \eqref{eq:H_S/q} holds with equality.

\textbf{Step 1:} We first rewrite $H$ and $S$ as follows, replacing the hard-to-handle  $\min(\mc{R}, \bar{V})$ terms  with terms involving $[\cdot ]_+$.
%We first prove that 
To rewrite $H$, we consider two cases. If $\mc{R}\le \bar{V}$, then $\min(\mc{R}, \bar{V})=\mc{R}$, which implies that  
\begin{align}
   (1-d)(\mc{R} -\bar{V})_++d(\bar{V} - \mc{R})_+&=(1-d)\cdot 0 + d (\bar{V}-\mc{R})\nonumber\\
   &=(1-d)\mc{R} + d\bar{V} -\min(\mc{R}, \bar{V})=H\,.
\end{align}
Similarly, if $\mc{R}>\bar{V}$, then $\min(\mc{R}, \bar{V})=\bar{V}$, which implies that 
\begin{align}
    (1-d)(\mc{R} -\bar{V})_++d(\bar{V} - \mc{R})_+&=(1-d)(\mc{R}-\bar{V}) + d\cdot 0\nonumber\\
    &=(1-d)\mc{R} + d\bar{V} -\min(\mc{R}, \bar{V})=H\,.
\end{align}
Together, we have shown
\begin{align}\label{eq:rewrite_H}
    (1-d)(\mc{R} -\bar{V})_++d(\bar{V} - \mc{R})_+=H\,.
\end{align}
Applying the same argument to $S$ obtains
\begin{align}\label{eq:rewrite_S}
    (1-\td) (\mc{R}-\bar{V})_++\td(\bar{V} - \mc{R})_+=S\,.
\end{align}

\textbf{Step 2:} Using \eqref{eq:rewrite_H} and \eqref{eq:rewrite_S}, we can now prove \eqref{eq:H_S/q}. Again, consider two cases: If $\td< 0.5$, then $d=0$ and $q=\max(\td, 1-\td)=1-\td$. Hence, substituting this into \eqref{eq:rewrite_S} and using the fact that $[\cdot]_+\ge0$ gives 
\begin{align}
    S=(1-\td) (\mc{R}-\bar{V})_++\td(\bar{V} - \mc{R})_+&=q(\mc{R}-\bar{V})_++\td(\bar{V} - \mc{R})_+\nonumber\\
    &\ge q(\mc{R}-\bar{V})_+\nonumber\\
    &=q\Big( (1-d)(\mc{R}-\bar{V})_+ + d(\bar{V}-\mc{R})_+\Big)=qH\,,\label{eq:S_qH_1}
\end{align}
where the last step applies \eqref{eq:rewrite_H}. When  $\mc{R}>\bar{V}$, \eqref{eq:S_qH_1} holds with equality. 

Analogously, if $\td\ge 0.5$,  then $d=1$ and $q=\td$, so substituting this into \eqref{eq:rewrite_S} and using  $[\cdot]_+\ge0$ gives 
\begin{align}
    S=(1-\td) (\mc{R}-\bar{V})_++\td(\bar{V} - \mc{R})_+&=(1-\td) (\mc{R}-\bar{V})_++q(\bar{V} - \mc{R})_+\nonumber\\
    &\ge q(\bar{V} - \mc{R})_+\nonumber\\
    & = q\Big(d(\bar{V} - \mc{R})_++ (1-d)(\mc{R}-\bar{V})_+\Big) = qH\,,\label{eq:S_qH_2}
\end{align}
where the last step applies \eqref{eq:rewrite_H}. When $\mc{R}<\bar{V}$, \eqref{eq:S_qH_2} holds with equality. 

Taken together, \eqref{eq:S_qH_1} and \eqref{eq:S_qH_2} establish  \eqref{eq:H_S/q} and show that the inequality is tight. This concludes the proof.

\end{proof}
Note that setting $m=0$ in \eqref{eq:relate_soft_hard_r} recovers \eqref{eq:small_ineq_new} in Lemma~\ref{lem:small_ineq}. 

\begin{proposition}[Calibration for decisive soft deferral rules]\label{prop:tighter_UB}
    Assume  $Y\perp Z_{1:t-1}\mid (X, Z_t)$ as in \eqref{eq:conditional_indep}, and let   $
\mc{R}^*_\casc$
%:=\min_{ \pi \in \Pi_{\casc}} \mc{R}(\pi)$ 
and  $\mc{R}^{\hinge*}_\casc$ be as in Theorem~\ref{thm:calibration_casc}. 
% :=
% \min_{\tpi \in \wt{\Pi}_{\casc}} \mc{R}^\hinge_\casc(\tpi)$.
Let $m\in[0, 0.5]$. Then, for any $\tpi \in \wt{\Pi}_\casc$ whose deferral rules satisfy $|\td_t(H_t) -0.5|\ge m$ almost surely for all $t\in[T-1]$,  
\begin{align}
\mc{R}\big(\varphi(\tpi)\big) \!- \!\mc{R}^*_\casc
\le
 \big(\tfrac{2}{1+2m}\big)^{T-1}(K\!-\!1) 
\Big(
\mc{R}^{\hinge}_\casc(\tpi)\! -\! \mc{R}^{\hinge*}_\casc
\Big).  \label{eq:tighter_UB}
\end{align}
\end{proposition}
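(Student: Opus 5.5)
We will rerun the backward induction in the proof of Theorem~\ref{thm:calibration_casc} essentially verbatim. The only change is that Lemma~\ref{lem:small_ineq} is replaced by its margin-dependent refinement, Lemma~\ref{lem:refined}.

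Write $\pi=\varphi(\tpi)$ and $q_m:=\tfrac{2}{1+2m}\in[1,2]$. By Lemma~\ref{lem:global_risk_to_V1} and Corollary~\ref{cor:bayes_to_value}, it suffices to prove the pointwise inequality
\begin{align*}
\Delta_t(h_t)\le C_t^{(m)}\,\Delta_t^\hinge(h_t),\qquad C_t^{(m)}:=q_m^{T-t}(K-1),
\end{align*}
together with $\Delta_t,\Delta_t^\hinge\ge0$. We prove this for almost every $h_t$, by backward induction on $t$.

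\textbf{Base case $t=T$.} This step is unchanged. It is the single-stage bound of Lemma~\ref{lem:K_mistake}, which gives $C_T^{(m)}=K-1$.

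\textbf{Inductive step.} We keep the decomposition $\Delta_t=\Delta_{t,\select}+\Delta_{t,\internal}$ with the oracle value $\textsf{m}$ from \eqref{eq:def_m_oracle}. In Step~2.2 we apply Lemma~\ref{lem:refined} in place of Lemma~\ref{lem:small_ineq}, with
\begin{align*}
\mc{R}=\mc{R}_t(h_t;f_t),\qquad \bar V=\E[V_{t+1}^\pi(H_{t+1})\mid H_t=h_t],\qquad \td=\td_t(h_t).
\end{align*}
Because $|\td_t(h_t)-0.5|=:m_t(h_t)\ge m$ almost surely and $\tfrac{2}{1+2m'}$ is decreasing in $m'$, the constant $\tfrac{2}{1+2m_t}$ from Lemma~\ref{lem:refined} is at most $q_m$. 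Therefore $\Delta_{t,\select}\le q_m(S-\textsf{m})$, where $S=(1-\td)\mc{R}+\td\bar V$.

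The rearrangement in \eqref{eq:Delta_01c_decompose_new_1}--\eqref{eq:Delta_01c_decompose_new} then gives
\begin{align*}
\Delta_t\le q_m S-(q_m-1)\textsf{m}-V_t^{\pi^*_\casc}(h_t)=q_m\bigl(S-V_t^{\pi^*_\casc}(h_t)\bigr)-(q_m-1)\,\Delta_{t,\internal}.
\end{align*}
Since $q_m\ge1$ and $\Delta_{t,\internal}\ge0$, the last term can be dropped, leaving $\Delta_t\le q_m(\textsf{A}+\textsf{B}+\textsf{C})$. Here $\textsf{A},\textsf{B},\textsf{C}$ are exactly as in \eqref{eq:half_Delta_ht}; their definitions and nonnegativity do not depend on the factor~2.

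Step~2.3 carries over unchanged: $\textsf{A}\le(K-1)\tilde{\textsf{A}}$, $\textsf{B}\le C^{(m)}_{t+1}\tilde{\textsf{B}}$ by the induction hypothesis, and $\textsf{C}=\tfrac{K-1}{K}\tilde{\textsf{C}}$ by Lemma~\ref{lem:K/(K-1)_factor}. Since $C^{(m)}_{t+1}=q_m^{T-t-1}(K-1)\ge K-1\ge\tfrac{K-1}{K}$, all three factors are dominated by $C^{(m)}_{t+1}$. Combining with \eqref{eq:tilde_A_B_C},
\begin{align*}
\Delta_t\le q_m C^{(m)}_{t+1}\bigl(\tilde{\textsf{A}}+\tilde{\textsf{B}}+\tilde{\textsf{C}}\bigr)=C^{(m)}_t\,\Delta_t^\hinge.
\end{align*}
Nonnegativity of $\Delta_t$ follows exactly as before, and nonnegativity of $\Delta_t^\hinge$ then follows from this inequality. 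Taking expectations at $t=1$ yields \eqref{eq:tighter_UB}.

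\textbf{Main obstacle.} The delicate point is the replacement of the constant $2$ by a general $q_m\in[1,2]$ in Step~2.2. For $q_m=2$ the rearrangement produced exactly one copy of $-\Delta_{t,\internal}$. For general $q_m$ it produces $-(q_m-1)\Delta_{t,\internal}$, and we must check that $q_m\ge1$ so this term can still be discarded. A secondary point is that the margin condition holds only almost surely, so the pointwise inequalities should be stated for a.e.\ $h_t$. These null sets do not affect the conditional expectations used in term $\textsf{B}$ or the final expectation.
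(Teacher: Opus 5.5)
Your proposal is correct and matches the paper's proof essentially step for step. The paper also reruns the backward induction of Theorem~\ref{thm:calibration_casc} with $\rho=\tfrac{2}{1+2m}$ in place of $2$ and invokes Lemma~\ref{lem:refined} in Step~2.2. It then discards the resulting $-(\rho-1)\Delta_{t,\internal}(h_t)$ term because $\rho\ge 1$, and uses $\rho^{T-t-1}\ge 1$ for the domination in Step~2.4.

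One step you leave implicit, and which the paper states explicitly: replacing the pointwise factor $\tfrac{2}{1+2|\td_t(h_t)-0.5|}$ by the uniform bound $\rho$ requires $S-\textsf{m}\ge 0$. This holds because a minimum is at most any convex combination.
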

\begin{proof}
 Let $\rho \coloneqq \frac{2}{1+2m} \in [1, 2]$. 
The proof follows that of Theorem~\ref{thm:calibration_casc} in Appendix~\ref{app:proof_calibration_casc},
with  the constants  $C_{t+1}$ and $ C_t$  in \eqref{eq:induction_hypothesis} and \eqref{eq:induction_hypothesis_t} changed  into  $C_{t+1}=\rho^{T-t-1}(K-1)$  and $C_t=\rho C_{t+1}$, and   one subsequent  change in \textbf{Step 2.2}.  The rest of the proof stays unchanged. 

In \textbf{Step 2.2}, we apply Lemma~\ref{lem:refined} in place of
Lemma~\ref{lem:small_ineq}.
By Lemma~\ref{lem:refined}, the factor of 2 in \eqref{eq:Delta_01c_decompose_new_1} 
becomes $2/(1+2|\td_t(h_t)-0.5|)$, giving the  following counterpart of \eqref{eq:Delta_01c_decompose_new_1}: 
\begin{align}
    \Delta_t(h_t)
    &\le \frac{2}{1+2|\td_t(h_t)-0.5|} \underbrace{\Big((1-\td_t(h_t))\mc{R}_t(h_t; f_t) + \td_t(h_t) \E\big[V_{t+1}^\pi (H_{t+1}) \mid H_t=h_t\big] -\textsf{m}\Big)}_{=:S_t(h_t)} \nonumber\\
    & \quad +\Big(\textsf{m}    -V_t^{\pi^*_\casc}(h_t) \Big)\,,
\end{align}
where recall from \eqref{eq:def_m_oracle} that $\textsf{m}:=\min\Big\lbrace \mc{R}_t(h_t; f_t), \E\left[V^{\pi}_{t+1}(H_{t+1})\mid H_t=h_t\right]\Big\rbrace$ and from \eqref{eq:Delta_ht_decompose_cascade} and \eqref{eq:Delta_internal_ht} that $\Delta_{t, \internal}(h_t)=\textsf{m} -V_t^{\pi^*_\casc}(h_t)\ge 0$. Note that  $S_t(h_t)\ge 0$ since  $  \min\{a,b\}\le (1-\lambda)a + \lambda b $ for every $\lambda \in [0,1]$. Thus, together with $2/(1+2|\td_t(h_t)-0.5|) \le \rho$ 
for  almost every $h_t$ by
assumption,  we can   write 
\begin{align}
     \Delta_t(h_t) 
     & \le \rho \Big((1-\td_t(h_t))\mc{R}_t(h_t; f_t) + \td_t(h_t) \E\big[V_{t+1}^\pi (H_{t+1}) \mid H_t=h_t\big] - \textsf{m}\Big)+\Big(\textsf{m}    -V_t^{\pi^*_\casc}(h_t) \Big)\nonumber\\
     & = \rho \Big((1-\td_t(h_t))\mc{R}_t(h_t; f_t) + \td_t(h_t) \E\big[V_{t+1}^\pi (H_{t+1}) \mid H_t=h_t\big]\Big) - (\rho-1) \textsf{m}   -V_t^{\pi^*_\casc}(h_t) \nonumber\\
     & = \rho \Big((1-\td_t(h_t))\mc{R}_t(h_t; f_t) + \td_t(h_t) \E\big[V_{t+1}^\pi (H_{t+1}) \mid H_t=h_t\big]\Big)\nonumber\\
     & \quad -(\rho-1)\Big(\Delta_{t, \internal}(h_t) +V_t^{\pi^*_\casc} (h_t)\Big) - V_t^{\pi^*_\casc} (h_t)\nonumber\\
     & = \rho \Big((1-\td_t(h_t))\mc{R}_t(h_t; f_t) + \td_t(h_t) \E\big[V_{t+1}^\pi (H_{t+1}) \mid H_t=h_t\big]\Big)  -(\rho-1) \Delta_{t, \internal}(h_t)\nonumber\\
     & \quad  -\rho V_t^{\pi^*_\casc} (h_t)  \nonumber\\
     & \stackrel{\text{(i)}}{\le} \rho \Big((1-\td_t(h_t))\mc{R}_t(h_t; f_t) + \td_t(h_t) \E\big[V_{t+1}^\pi (H_{t+1}) \mid H_t=h_t\big]\Big)   -\rho V_t^{\pi^*_\casc} (h_t) \nonumber\\
     & = \rho \Big((1-\td_t(h_t))\mc{R}_t(h_t; f_t) + \td_t(h_t) \E\big[V_{t+1}^\pi (H_{t+1}) \mid H_t=h_t\big] -V_t^{\pi^*_\casc} (h_t) \Big)\,, \label{eq:Delta_01c_decompose_new_decisive}
\end{align}
%, which does not depend on $h_t$.
where the inequality (i) follows from  $\rho - 1 \ge 0$ and $\Delta_{t, \internal}(h_t)\ge 0$. The RHS of \eqref{eq:Delta_01c_decompose_new_decisive} matches the RHS of \eqref{eq:Delta_01c_decompose_new} with  $\rho$ in place of 2.
  \textbf{Step~2.3} and~\textbf{Step~2.4} then go through
with $\rho$ in place of $2$. The  domination  argument in
\textbf{Step~2.4}   still holds:   since $\rho \ge 1$ implies $\rho^{T-t-1}\ge 1$  for $t\le T-1$, the constants $K-1, C_{t+1}=\rho^{T-t-1}(K-1)$ and $\frac{K-1}{K} $ are all dominated by $C_{t+1}$. This completes the backward induction in \textbf{Step 2} with   
stage-wise calibration constants   
$C_{t+1} = \rho^{T-t-1}(K-1)$ and $C_t=\rho C_{t+1}$.

The rest of the proof follows, yielding $C_1 = \rho^{T-1}(K-1)$ which proves \eqref{eq:tighter_UB}.
\end{proof}

\clearpage

\section{Synthetic Experiments } 
\subsection{Details for the Synthetic Experiments } 
\label{app:synthetic_details}
The class-probability functions $\boldsymbol{\theta}(x,z_t)$ are defined piecewise over the input space $\mc{X}$. The goal is to induce regions where deferring to later classifiers improves accuracy enough to offset the routing cost. The parameters of the posterior distribution $P_{Y|X,Z_2, Z_3} = \mathrm{Categorical}\big(\boldsymbol{\theta}(x,z_2,z_3)\big)$ are given by;
\begin{align}
\theta_y(x, z_2, z_3)
&= \frac{\exp\!\left(\ell_y(x, z_2, z_3)\right)}
{\sum_{k=1}^3 \exp\!\left(\ell_k(x, z_2, z_3)\right)}, \\[0.75em]
\ell_1(x, z_2, z_3)
&= 1.2 - 4|x|, \\
\ell_2(x, z_2, z_3)
&= -0.2 + 2.2x + 3 g(x) z_2, \\
\ell_3(x, z_2, z_3)
&= -0.2 - 2.2x + 3 g(x) z_3, \\
g(x)
&= \sigma\!\left(10(|x| - 0.15)\right), \\
\sigma(u)
&= \frac{1}{1 + e^{-u}} .
\end{align}
The posterior distributions $P_{Y|X}$, $P_{Y|X,Z_2}$ and $P_{Y|X,Z_3}$ needed to compute the Bayes-optimal risks are estimated using $5000$ Monte Carlo samples.

\paragraph{Model architecture.}

All models ($f_1(x), f_2(x,z_2), f_3(x,z_3), r(x), d_1(x), d_2(x,z_2)$) are multilayer perceptrons with ReLU activations, $3$ hidden layers of width $128$, and batch normalization. We observe that including batch normalization speeds up training but does not change performance, so its use is optional. Each classifier outputs $K-1$ scores, with the final class score determined by the constraint that the scores sum to zero, as required by the multiclass hinge formulation.
\paragraph{Training.}
We use $100,000$ training, $1000$ validation, and $2000$ test samples. Models are trained using the AdamW  optimizer with a learning rate of $10^{-3}$, a batch size of $512$, and a maximum of $50$ epochs, with early stopping based on validation performance with a patience of two epochs. 
\clearpage
\subsection{Empirical Risk Minimization in Synthetic Dataset for Various Costs}\label{app:empirical_risk_vs_bayes}

We report the empirical risk of the learned models alongside the optimal empirical risk throughout training in Fig.~\ref{fig:emp_risk}, for the synthetic experiment presented in Section~\ref{sec:synthetic}. We present both the cascade and the flat setting, for costs  $\kappa_1=0, \kappa_2 = \kappa_3 = \kappa = 0.03$ and $\kappa_1=0, \kappa_2=\kappa_3 = \kappa=0.06$. 
We observe that training with the surrogate losses in both cases attains the optimal risk on the test set. 

\begin{figure}[h!]
    \centering
    \includegraphics[width=0.24\textwidth]{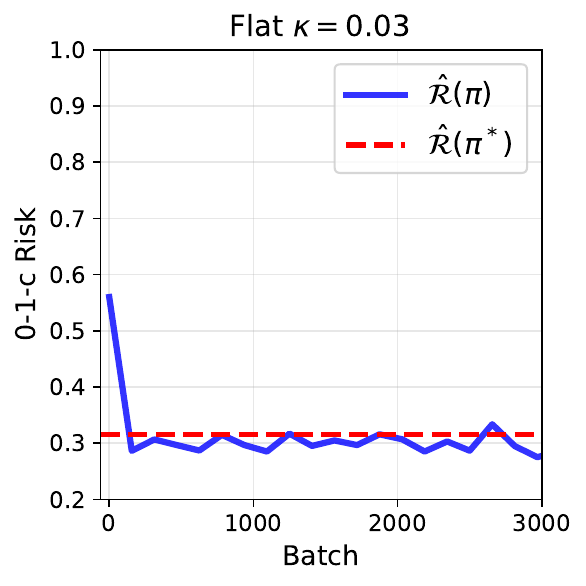}
    \includegraphics[width=0.24\textwidth]{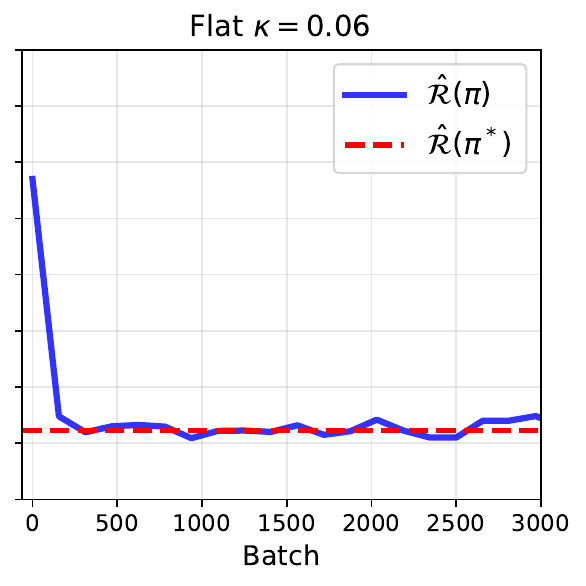}
    \includegraphics[width=0.24\textwidth]{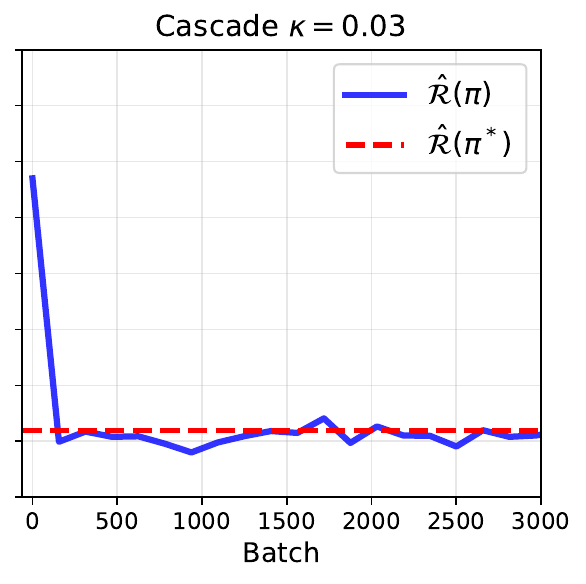}
    \includegraphics[width=0.24\textwidth]{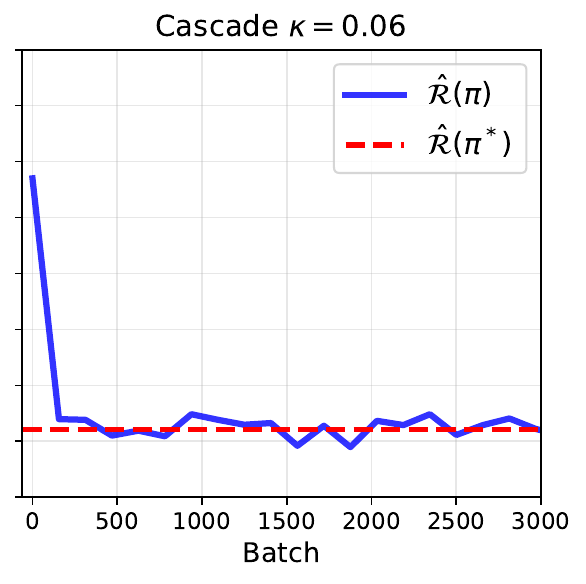}
    \caption{Empirical risk of the learned model trained with surrogate loss in the flat case (two left figures) and the cascade case (two right figures), with varying costs per classifier. In all cases, the empirical risk evaluated on the test set of the learned model converges to the empirical risk of the Bayes-optimal solution. 
    }
    \label{fig:emp_risk}
\end{figure}

\subsection{Accuracy of Classifiers and Router in Synthetic Dataset}\label{app:training_trajectories}
In this experiment, we examine how the accuracy of the router and classifiers evolves during training in the synthetic experiment with flat routers presented in Section~\ref{sec:synthetic}. 
As shown in Fig.~\ref{fig:route_vs_class}, the accuracy of the classifiers quickly rises and plateaus, while the routing accuracy takes longer to converge. 
\begin{figure}[h]
    \centering
    \includegraphics[width=0.6\linewidth]{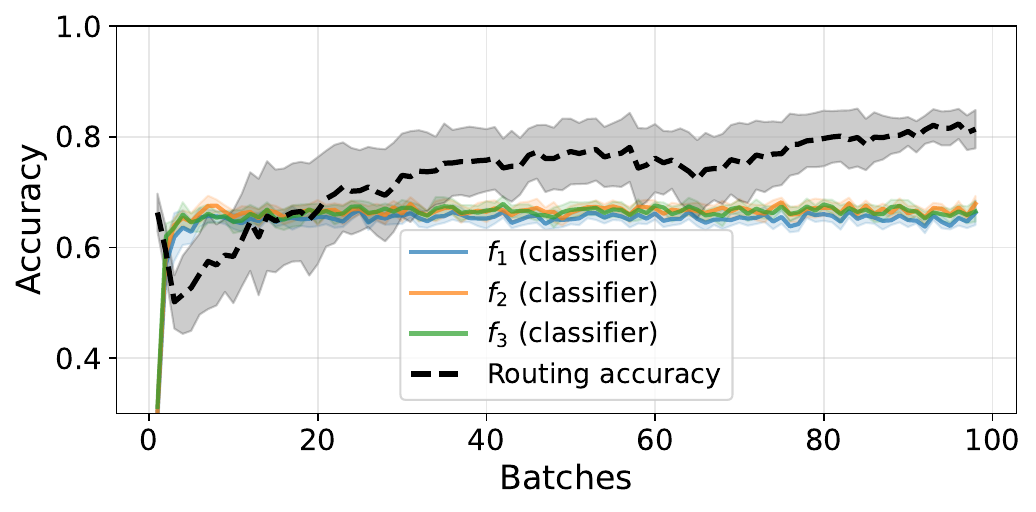}
    \caption{Classifier and routing-function accuracy during training in the flat router setting with $\kappa = 0.005$. The router is slower to learn compared to the classifiers.}
    \label{fig:route_vs_class}
\end{figure}

\clearpage
\section{Post-Training PPI Correction}
\subsection{Constructing Confidence Intervals Using Unlabeled Data with PPI}
\label{app:PPI_CI_construction}

In this section, we detail  the CI construction in  \eqref{eq:PPI_CI_result} and discuss the result in \eqref{eq:PP_equals_oracle}. For a fixed $x\in \mc{X}$, the CI for $\delta_t(x)$ in \eqref{eq:PPI_CI_result} is centered around  $\delta_t^\PP(x)$ with half-width $w_{t,N_b, l_b}(x)$, where $N_b$ and $l_b$ refer to the number of unlabeled and labeled samples w.r.t. the considered bin, respectively.  We construct $\delta_t^\PP(x)$ and $w_{t,N_b, l_b}(x)$ as follows. Let
\begin{align}
&\Delta_t(x, z_t,z_{r_\Flat(x)}, y)= 
\Ind\big\{f_{r_\Flat(x)}^\dagger(x, z_{r_\Flat(x)})\ne y\big\} 
%\nonumber\\
%&\quad \qquad \qquad \quad
-
\Ind\big\{f_t^\dagger(x, z_t)\ne y\big\}+c_{r_\Flat(x)}-c_t ,
\end{align}
then  we define 
\begin{align}
\delta_t^\PP(x)\coloneqq & \frac{1}{N_b}\sum_{i=1}^{N_b} 
\underbrace{\Delta_t\big(x,   z^{(i)}_t, z_{r_{\Flat}(x)}^{(i)}, f^\dagger_T(x, z_{T}^{(i)})\big)}_{\text{Term 1}} \nonumber\\
        & - \frac{1}{l_b}\sum_{j=1}^{l_b}  \underbrace{ 
    \Big[ 
        \Delta_t\big(x,z_t^{(j)}, z_{r_\Flat(x)}^{(j)}, f^\dagger_T(x,z_{T}^{(j)})\big)   -   \Delta_t\big(x,z^{(j)}_t,z^{(j)
        }_{r_\Flat(x)},y^{(j)}\big) \Big]
}_{\text{Term 2}},
\end{align}
where the  sum over Term 1 is evaluated using  unlabeled data, while the  sum over Term 2, called the \emph{rectifier} \citep{Angelopoulos2023PPI},  is evaluated using    labeled data. We use the most expensive classifier $f^\dagger_T$  to generate predictions on the unlabeled samples, since by design it is generally the most powerful classifier among $f_{1:T}^\dagger$.

The resulting CI has half-width
$w_{t,N_b,l_b}(x) \coloneqq 1.96\sqrt{\hat{\sigma}^2_{f}/N_b +  \hat{\sigma}^2_{f-Y}/l_b}$, 
where $\hat{\sigma}_f^2$ and $\hat{\sigma}^2_{f-Y}$ are the empirical variances of the summands of Term 1 and Term 2, respectively. When the imputing classifier $f^{\dagger}_T$ is accurate, the ${\hat{\sigma}_{f-Y}^2}/{l_b}$ term is small. The unlabeled-data term ${\hat{\sigma}_f^2}/{N_b}$ decreases  as the number of unlabeled samples  $N_b$ grows. In this case, taken together, the CIs are typically narrow and do not  straddle zero, yielding the guarantee in \eqref{eq:no_worse_than} which says $r^\PP_\Flat(x)$ yields a lower 0-1-$c$ risk than the baseline $r_\Flat(x)$ does.  When, in addition,   the CI for $\delta_{r^\dagger_\Flat(x)}(x)$ is sufficiently small to separate it from the next runner-up, we obtain the   guarantee in \eqref{eq:PP_equals_oracle} which says $r_\Flat^\PP(x)= r^\dagger_\Flat(x)$.

\subsection{Additional PPI Experiments: Impact of Labeled Data Budget on Performance}\label{sec:app_ppi}

We provide additional figures illustrating the behavior of the PPI algorithm for the setting presented in Section~\ref{sec:PPI_experiments} as the amount of training and PPI labeled data varies, ranging from $n,l=100$ to $n,l=3000$ samples (Fig.~\ref{fig:add_ppi} from (a) to (d)), while the amount of unlabeled data remains fixed.
As expected, with fewer labeled samples (100 and 200 shown in Fig.~\ref{fig:add_ppi} (a) and (b), respectively), neither the trained routing model nor the classifiers can reach Bayes-optimality. The trained routing model $ r_{\Flat} $ exhibits large errors w.r.t $r^{\dagger}_{\Flat}$, as indicated by the larger gray shaded regions. The error of the trained classifiers is reflected in the associated oracle router $r^{\dagger}_{\Flat}$, which is also far from the Bayes-optimal router $r^*_{\Flat}$, as indicated by the blue shaded regions ($20.6\%$ and $12.5\%$ disagreement). In this regime, although the PPI correction yields larger relative improvements (w.r.t. $r^{\dagger}_{\Flat}$), 
the overall routing remains farther from being Bayes-optimal due to the poor quality of the initial classifiers.

As the number of labeled samples increases (1000 and 3000 shown in Fig.~\ref{fig:add_ppi} (c) and (d), respectively), the oracle router more closely approximates the Bayes-optimal router, and the trained routing model also more closely approximates the oracle. At $n=1000$, the trained router still fails to capture the region around $ x = -0.60$, where the oracle routes samples to the first classifier. With the PPI correction applied, part of this region is now correctly routed to $ f_1$. At $n=3000$, there is sufficient data to learn both the classifiers and the router; the oracle router is very close to the Bayes-optimal router, the trained router is also very close to the oracle, and the PPI correction leaves the routing unchanged.

We also repeat the experiment but reuse the training data as labeled PPI data, instead of separating the labeled set into two independent partitions (Fig.~\ref{fig:add_ppi_reused}). In that setting, the theoretical guarantee of Theorem~\ref{thm:PPI} no longer holds as the classifiers are trained on the same data used for the correction, and are therefore not independent anymore. In practice, however, this violation has little impact, and performance is only slightly lower than with separate labeled sets (Fig.~\ref{fig:add_ppi}).

In summary, the PPI correction improves routing when the trained router is suboptimal, and its ability to recover the Bayes-optimal routing model depends on the quality of the underlying trained classifiers. The same trend holds when the training data is reused for the PPI correction.  When sufficient labeled data is available to train near-optimal classifiers, the correction more closely approaches the Bayes-optimal routing behavior. 

\begin{figure}[h!]
    \centering
    \begin{minipage}[t]{0.48\textwidth}
        \centering
        \includegraphics[width=\linewidth]{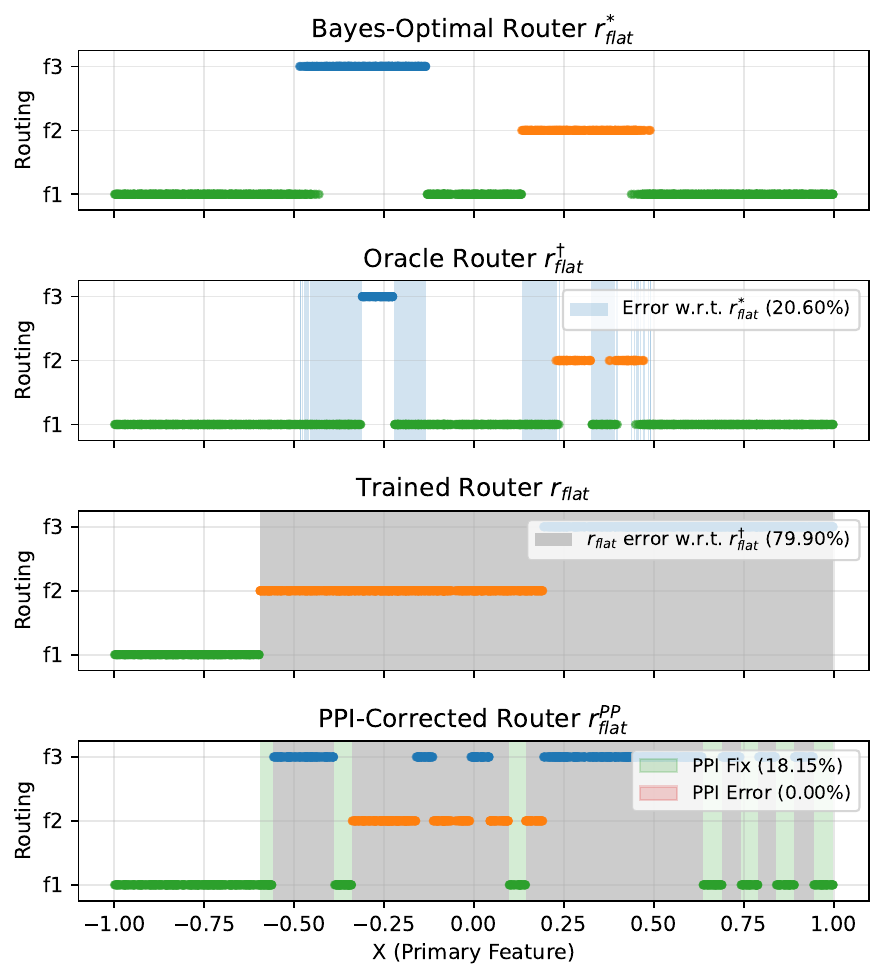}
        \vspace{0.3em}
        {\small (a) 100 training samples}
    \end{minipage}
    \hfill
    \begin{minipage}[t]{0.48\textwidth}
        \centering
        \includegraphics[width=\linewidth]{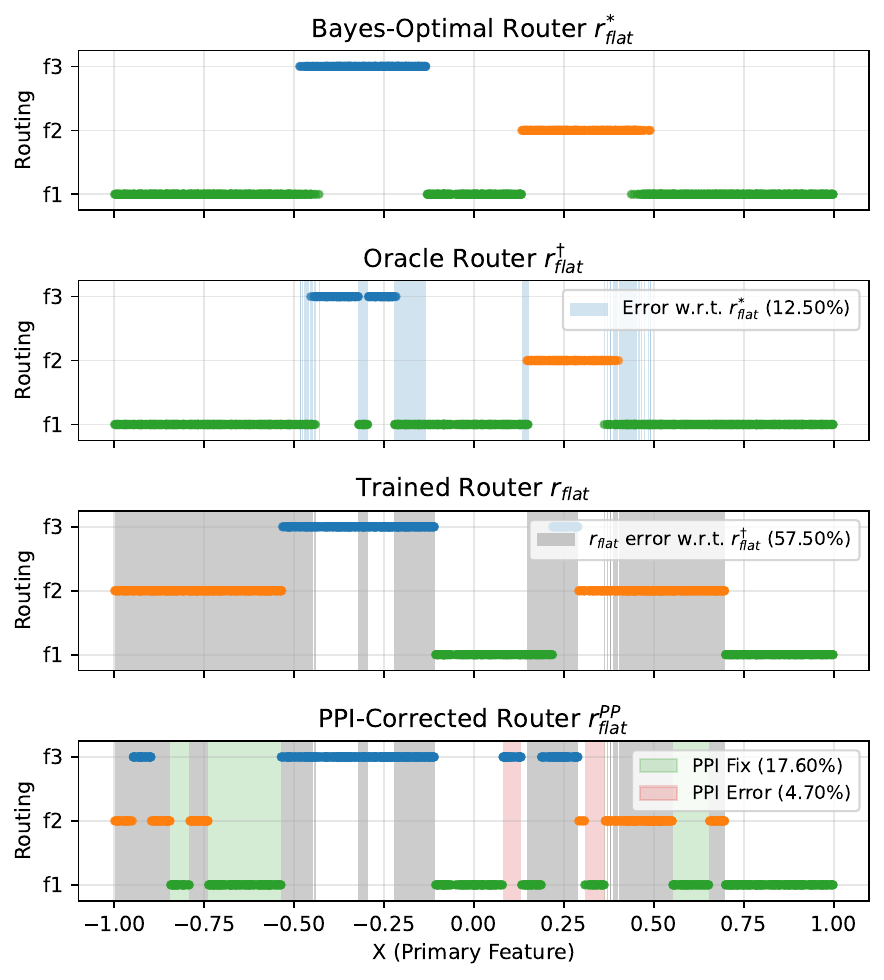}
        \vspace{0.3em}
        {\small (b) 200 training samples}
    \end{minipage}

    \vspace{0.8em}

    \begin{minipage}[t]{0.48\textwidth}
        \centering
        \includegraphics[width=\linewidth]{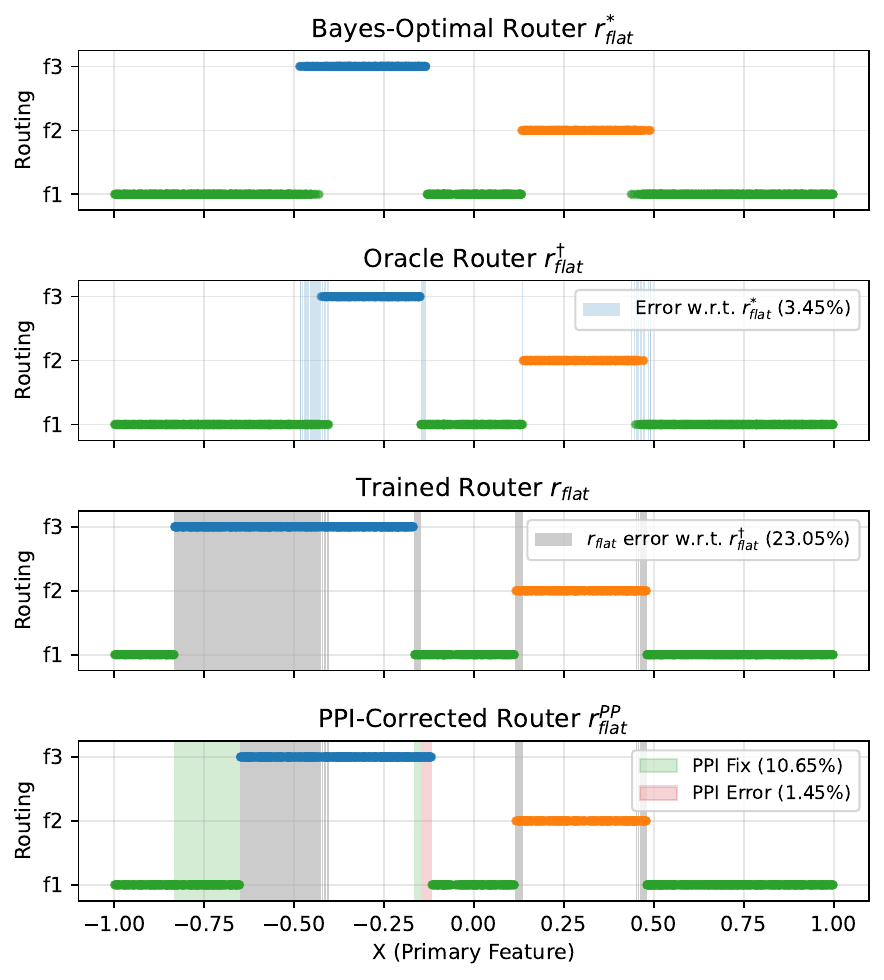}
        \vspace{0.3em}
        {\small (c) 1000 training samples}
    \end{minipage}
    \hfill
    \begin{minipage}[t]{0.48\textwidth}
        \centering
        \includegraphics[width=\linewidth]{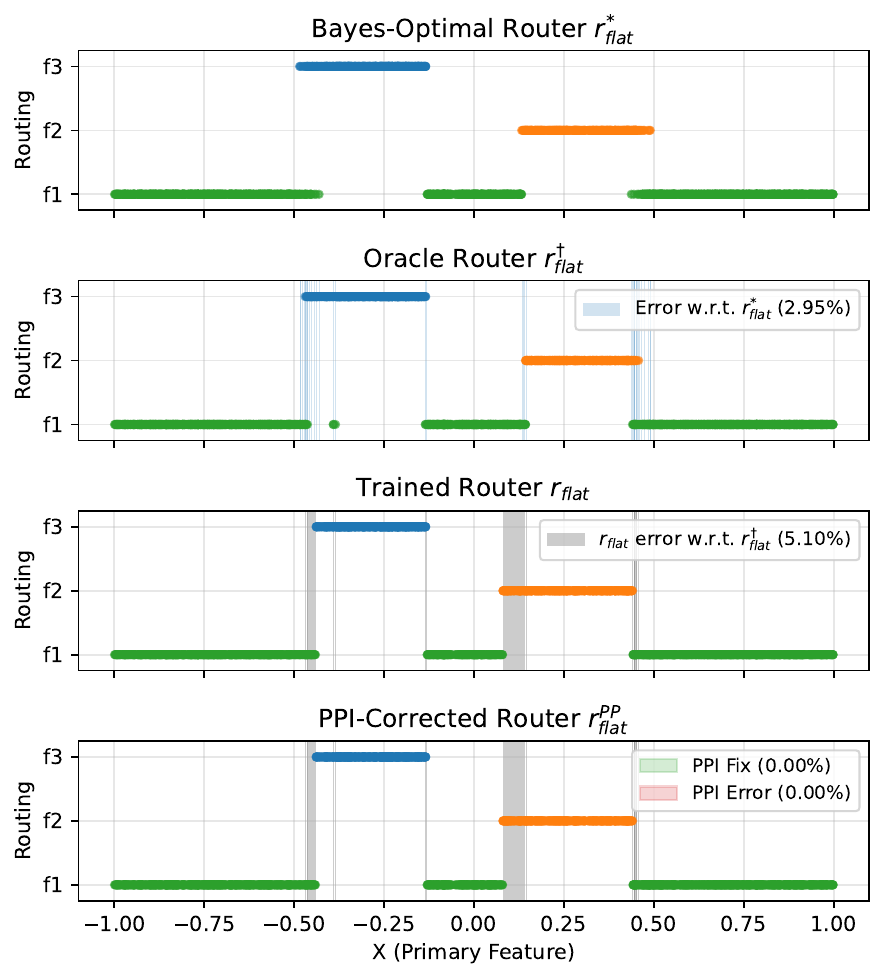}
        \vspace{0.3em}
        {\small (d) 3000 training samples}
    \end{minipage}
    \caption{Comparison of the Bayes-optimal routing function $r_\Flat^*$, the  router learned on labeled data $r_{\Flat}$, and the PPI-corrected router $r^{\PP}_{\Flat}$ , for increasing numbers $n=l$ of labeled training and PPI samples. The error of the oracle w.r.t. the Bayes-optimal $r^*_\Flat$ is shown in blue shaded regions, the error of the trained router w.r.t. the oracle router is shown in gray shaded regions, and the  fixes and errors introduced by PPI are computed w.r.t. the oracle router and shown by the green and red shaded regions. Note that these are computed pointwise for every input $x$, whereas the PPI correction is applied at the bin level, so the errors and fixes also reflect the effect of binning.}
    \label{fig:add_ppi}
\end{figure}

\begin{figure}[h!]
    \centering
    \begin{minipage}[t]{0.48\textwidth}
        \centering
        \includegraphics[width=\linewidth]{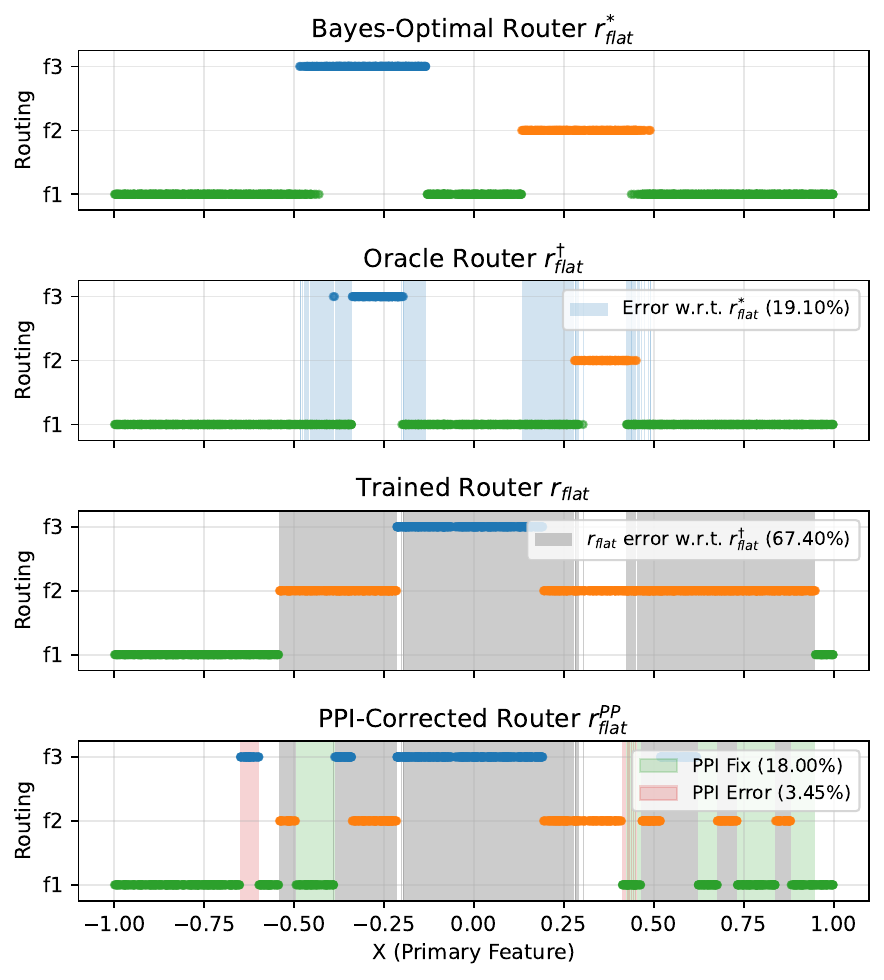}
        \vspace{0.3em}
        {\small (a) 100 training samples}
    \end{minipage}
    \hfill
    \begin{minipage}[t]{0.48\textwidth}
        \centering
        \includegraphics[width=\linewidth]{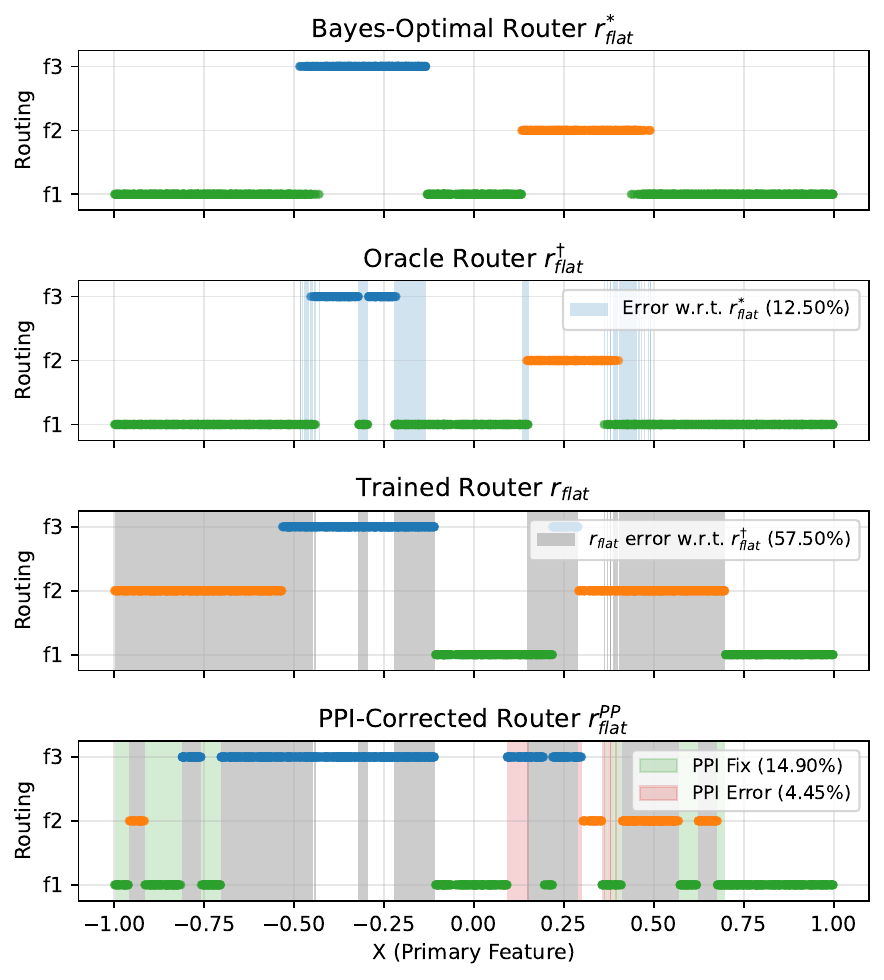}
        \vspace{0.3em}
        {\small (b) 200 training samples}
    \end{minipage}

    \vspace{0.8em}

    \begin{minipage}[t]{0.48\textwidth}
        \centering
        \includegraphics[width=\linewidth]{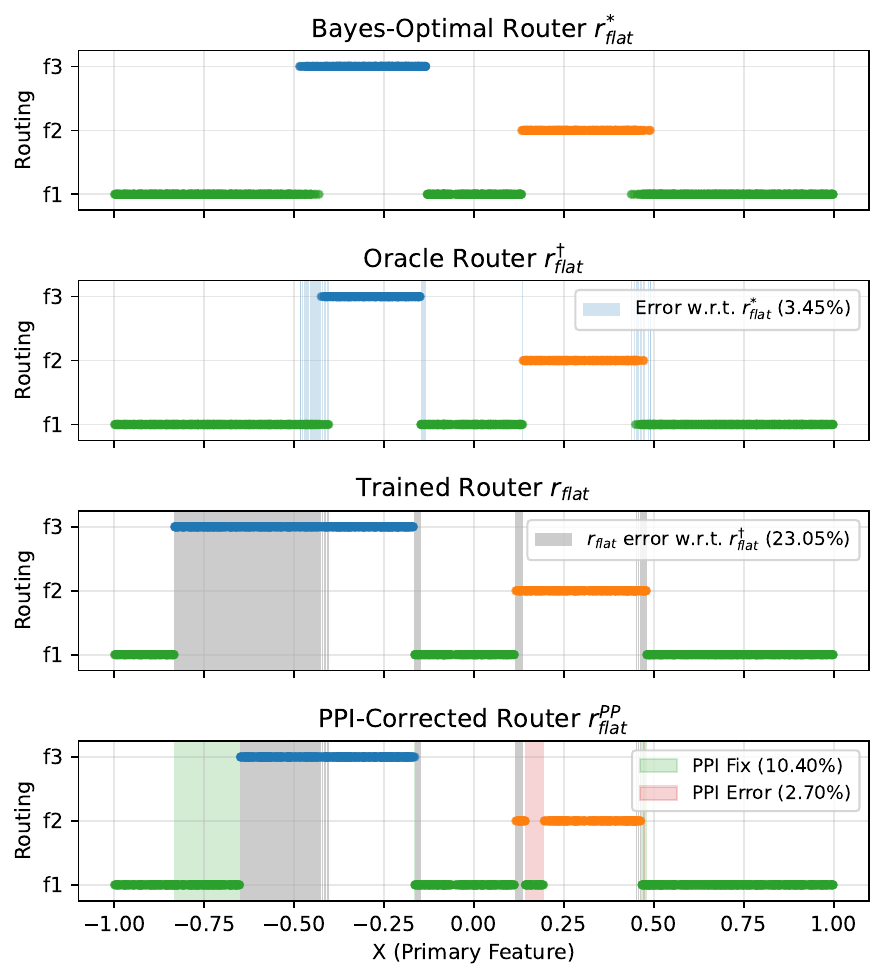}
        \vspace{0.3em}
        {\small (c) 1000 training samples}
    \end{minipage}
    \hfill
    \begin{minipage}[t]{0.48\textwidth}
        \centering
        \includegraphics[width=\linewidth]{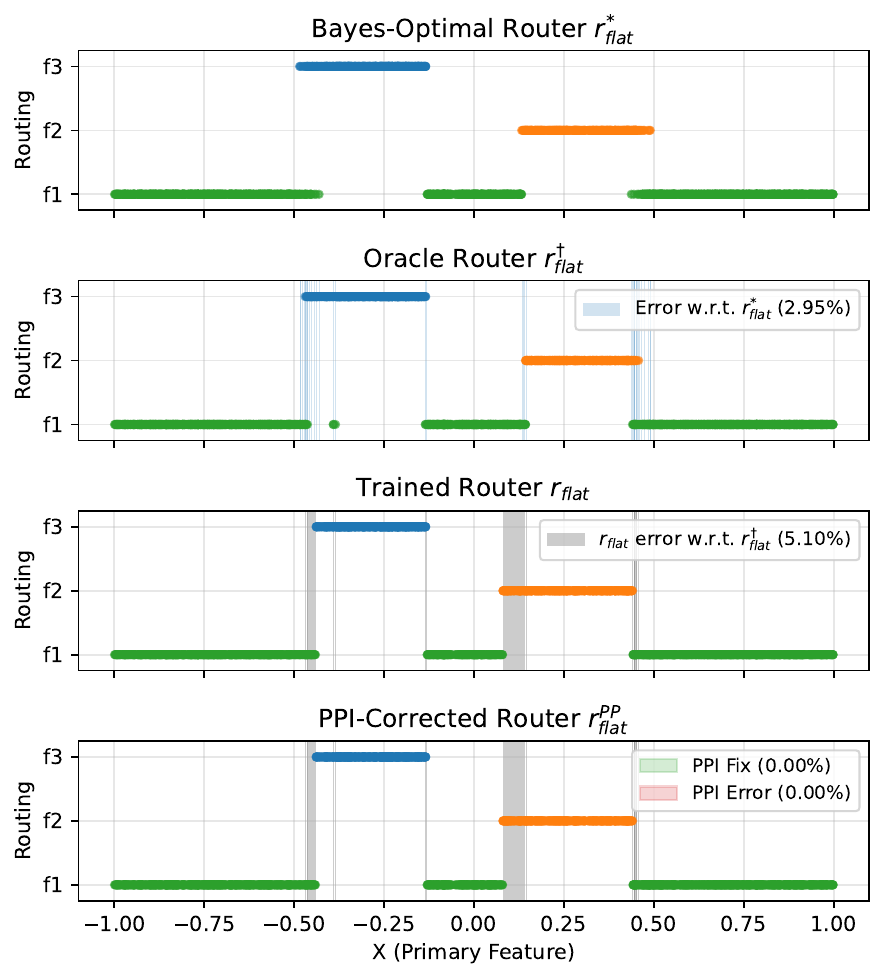}
        \vspace{0.3em}
        {\small (d) 3000 training samples}
    \end{minipage}

    \caption{Comparison of the Bayes-optimal routing function $r_\Flat^*$, the  router learned on labeled data $r_{\Flat}$, and the PPI-corrected router $r^{\PP}_{\Flat}$, for increasing numbers $n$ of labeled samples, when the same labeled data is used both for training and for the PPI correction.}
    \label{fig:add_ppi_reused}
\end{figure}

\clearpage
\section{Details for LLM Experiments}\label{app:training_llms}

\subsection{Datasets}\label{app:dataset}
We evaluate the utility of our proposed training losses on two multiple-choice question-answering benchmarks that span  commonsense reasoning and mathematical problem solving. 

\paragraph{CommonsenseQA.} CommonsenseQA~\citep{talmor2018commonsenseqa} is a dataset designed to test common sense knowledge and reasoning. Each question is a 5-way multiple choice problem ($K=5$), where the goal is to identify semantically related concepts. The dataset contains 12,102 questions covering diverse common sense relations such as causes, effects, and properties of everyday concepts. Questions require reasoning beyond surface-level pattern matching, as distractors are carefully designed to be semantically similar to the correct answer.

\paragraph{GSM8K.} GSM8K~\citep{cobbe2021gsm8k} is a dataset of 8,792 grade school math word problems designed to evaluate multi-step mathematical reasoning with final answers being numerical values.

Since GSM8K is originally a free-form numerical answer dataset, we convert it to a 4-way multiple choice format ($K=4$) to match our classification framework. For each question, we generate distractors using the following procedure:
\begin{enumerate}
    \item \textbf{Distractor generation from weak model:} We sample 20 chain-of-thought responses from a weak model (Qwen-1.5B) for each question. We collect all incorrect numerical answers produced by the model and normalize them (handling commas, decimal formatting, currency symbols).
    \item \textbf{Frequency-based distractor selection:} We select the 3 most frequently occurring wrong answers as distractors, ensuring each distractor differs from the correct answer and other distractors to avoid duplicates.
    \item \textbf{Synthetic fallback:} If the weak model does not produce enough distinct wrong answers, we supplement the answer pool with synthetic distractors based on common error patterns (e.g., off-by-one errors, order-of-magnitude mistakes, and  arithmetic sign errors).
    \item \textbf{Random ordering:} The correct answer position is randomized across the four labels A to D with a deterministic seed per question to ensure reproducibility.
\end{enumerate}
This conversion procedure yields realistic distractors that reflect actual model errors rather than arbitrary numerical alternatives, making the resulting multiple-choice problems more representative of model failure modes.

\paragraph{Data splitting.} For both datasets, we partition the data into training, validation, and test sets using an 80-10-10 split with a fixed random seed for reproducibility. The validation set is used for hyperparameter selection and early stopping, while the test set is held out for final evaluation.

\subsection{Model and Training Details}\label{app:model_training_llm}

\paragraph{Setup and feature extraction.}
We mine LLM input-output pairs from natural language prompts and their natural language responses from a community repository~\citep{rumiantsev2025fevalttc} covering our multiple choice datasets (this repository also contains ground truth answers).
To convert the natural language prompts and responses to fixed size vectors we use a sentence embedding language model.\footnote{We use the \texttt{all-mpnet-base-v2} model from Sentence-Transformers from Hugging Face.} 
Specifically, each sample is represented by a prompt embedding $x \in \mathbb{R}^{768}$, obtained from the sentence embedding of the prompt text  and answer embeddings $z_t \in \mathbb{R}^{768}$ for $t \in [T]$.  Note that this means that the LLMs as well as the sentence embedding model are used to generate features $(x, z_t)$ for our classifiers $f_t$ which we describe in the next paragraph (the LLMs are not the classifiers themselves). For $t=1$, $z_1$ is the embedding of the answer produced by the first LLM, so $z_1$ carries information in these experiments. The prompt embedding $x$ is shared by all classifiers and each $z_t$ is specific to the $t$-th LLM, so $(x,z_t)$ for $t>1$ does not contain $z_1$. The flat router observes only $x$.

\paragraph{Architecture of classifier $\tf_t$.}
We describe the architecture of the classifiers used in the LLM experiments in Eq.~\eqref{eq:ft_network}. The classifier is a feedforward neural network that takes as inputs $x, z_t$ and outputs $K$ scores that sum to zero:
\begin{align}
 \tf_t(x,z_t)
&=\Theta_3^{(t)}\mathrm{ReLU}\left(\mathrm{BN}\left(\Theta_2^{(t)}\mathrm{ReLU}\left(
\mathrm{BN}\left(
\Theta_1^{(t)}[x;z_t]\right)\right)\right)\right).\label{eq:ft_network}
\end{align}
Here, $\mathrm{BN}$ denotes batch normalization, 
and $[\cdot;\cdot]$ denotes concatenation. The superscript $(t)$ denotes that parameter matrices $\Theta^{(t)}_k$ belong to the classifier network $\tf_t$. The classifier parameters have dimensions  $\Theta_1^{(t)} \in \mathbb{R}^{128 \times 1536}$, $\Theta_2^{(t)} \in \mathbb{R}^{128 \times 128}$, and $\Theta_3^{(t)} \in \mathbb{R}^{(K-1) \times 128}$.
The classifier outputs $K-1$ scores, with the final class score determined by the constraint that the scores sum to zero.
During training, batch normalization normalizes each hidden layer with the mean and variance of the current mini-batch, so the output of a network for one sample also depends on the other samples in its mini-batch (during training only, not inference). While this slightly deviates from $\wt\Pi_\Flat$ and $\wt\Pi_\casc$ definition as additional components such as normalization or regularization were not included in the optimization that our theory considers, it is important to verify that our proposed approach works in a standard neural network setting with the usual components such as batch normalization.

\paragraph{Architecture of flat router.}
The soft router $\tilde r(x)$ outputs a probability distribution over the classifiers. It is implemented as a feedforward neural network:
\begin{align}
\tilde r(x)
&=\mathrm{Softmax}\left(\mathrm{BN}\left(\Theta_2^{(r)}\mathrm{ReLU}\left(\mathrm{BN}\left(\Theta_1^{(r)} x\right)\right)\right)\right).\label{eq:router}
\end{align}
The superscript $(r)$ in $\Theta^{(r)}_k$ denotes that we refer to the parameters of the router. The router parameters have dimensions $\Theta_1^{(r)} \in \mathbb{R}^{128 \times 768}$ and $\Theta_2^{(r)} \in \mathbb{R}^{T \times 128}$. 

\paragraph{Architecture of cascade router.} 
The cascade consists of similar stage-specific classifiers but with deferral routing modules implemented as feedforward neural networks. 

The soft deferral rule implemented in the deferral routing modules $\tilde d_t(x,z_{1:t})$ takes the full history available up to stage $t$, namely the question embedding and all answer embeddings observed so far, and outputs a deferral probability. A deferral module is defined as
\begin{align}
\tilde d_t(x,z_{1:t})
&=\sigma\left(\mathrm{BN}\left(\Phi_2^{(t)}\mathrm{ReLU}\left(\mathrm{BN}\left(\Phi_1^{(t)}[x;z_{1:t}]\right)\right)\right)\right),\label{eq:cascade_router}
\end{align}
where $\sigma$ denotes the sigmoid function. 
The deferral module at stage $t$ has input dimension $768+768t=768(1+t)$, so it is parameterized by $\Phi_1^{(t)} \in \mathbb{R}^{128 \times 768(1+t)}$ and $\Phi_2^{(t)} \in \mathbb{R}^{1 \times 128}$. There are only $T-1$ deferral modules, since the final stage always exits if reached.

\paragraph{Batch Normalization.} Batch normalization is optional in our setup. As in the synthetic experiments, we observe that it only speeds up training and does not change performance significantly.

\paragraph{Training.}
Both the flat and the cascade architectures are trained using the AdamW optimizer with learning rate $10^{-3}$ and mini-batch size $32$. 
Training runs for a maximum of 1000 epochs for the cascade model and 100 epochs for the router model, with early stopping using a patience of 20 epochs based on validation loss. We employ the proposed hinge-based surrogate loss that balances classification accuracy with routing costs. Training takes approximately 3 hours per dataset with the majority of the time spent on generating embeddings from model answers using an Nvidia GeForce RTX 3060 12GB card.

\paragraph{Evaluation protocol.}
We report performance using 10 bootstrap trials on the held-out test set. For each trial, we resample the test set with replacement and compute the accuracy of each classifier on all test samples and on the samples routed to it, which for the cascade are the samples that exit at it. We also compute the fraction of samples routed to each classifier, or deferred past each stage of the cascade. This bootstrap procedure provides confidence intervals that account for finite sample variability.

\paragraph{Algorithm implementation.} We implement our method in PyTorch and make our code available:\\
\url{https://github.com/networkslab/Consistent-Classifier-Routing}.

\clearpage

\section{Additional LLM Experiments}\label{app:additional_experiments}

\subsection{LLM Experiments with Additional Models and Datasets}
\label{app:add_llm_cascade}
In this appendix we extend the experiments of Sec.~\ref{sec:LLM_experiments} in two directions. We first train a cascade router with the LLaMA classifiers on GSM8K, and we then repeat the flat-router experiment of Sec.~\ref{sec:LLM_experiments} with Qwen classifiers on CommonsenseQA, which changes both the model family and the dataset.

\paragraph{Cascade router with LLaMA classifiers.} We use the three cost configurations of Sec.~\ref{sec:LLM_experiments}, so the per-classifier costs range from $[0, 0.01, 0.02]$ in Configuration 1 to $[0, 0.1, 0.2]$ in Configuration 3. A prompt pays for every classifier it passes through, so an answer from LLaMA 405B costs $\kappa_1+\kappa_2+\kappa_3$, from 0.03 in Configuration 1 to 0.3 in Configuration 3. Fig.~\ref{fig:llama_cascade} shows how the trained cascade spends this budget. In its bottom row, the fraction of prompts deferred past LLaMA 1B falls from about 66\% in Configuration 1 to about 36\% in Configuration 3, so the share answered by LLaMA 1B grows from about one third to about two thirds. The fraction deferred past LLaMA 70B, which is the share answered by LLaMA 405B, falls from about 60\% to about 10\%. The top row shows that LLaMA 70B and LLaMA 405B answer at least 90\% of the prompts they receive correctly in every configuration, close to their accuracy on all prompts. For LLaMA 1B, the median accuracy on the prompts it answers stays within 4 percentage points of its accuracy on all prompts, so the first deferral module does not single out prompts that LLaMA 1B answers better than average. In contrast, the flat router of Fig.~\ref{fig:cost_comp} sends only about 9\% of the prompts to LLaMA 1B in Configuration 1, and LLaMA 1B answers about 66\% of them correctly against about 49\% over all prompts.

\begin{figure}[h!]
    \centering
    \includegraphics[width=0.6\linewidth]{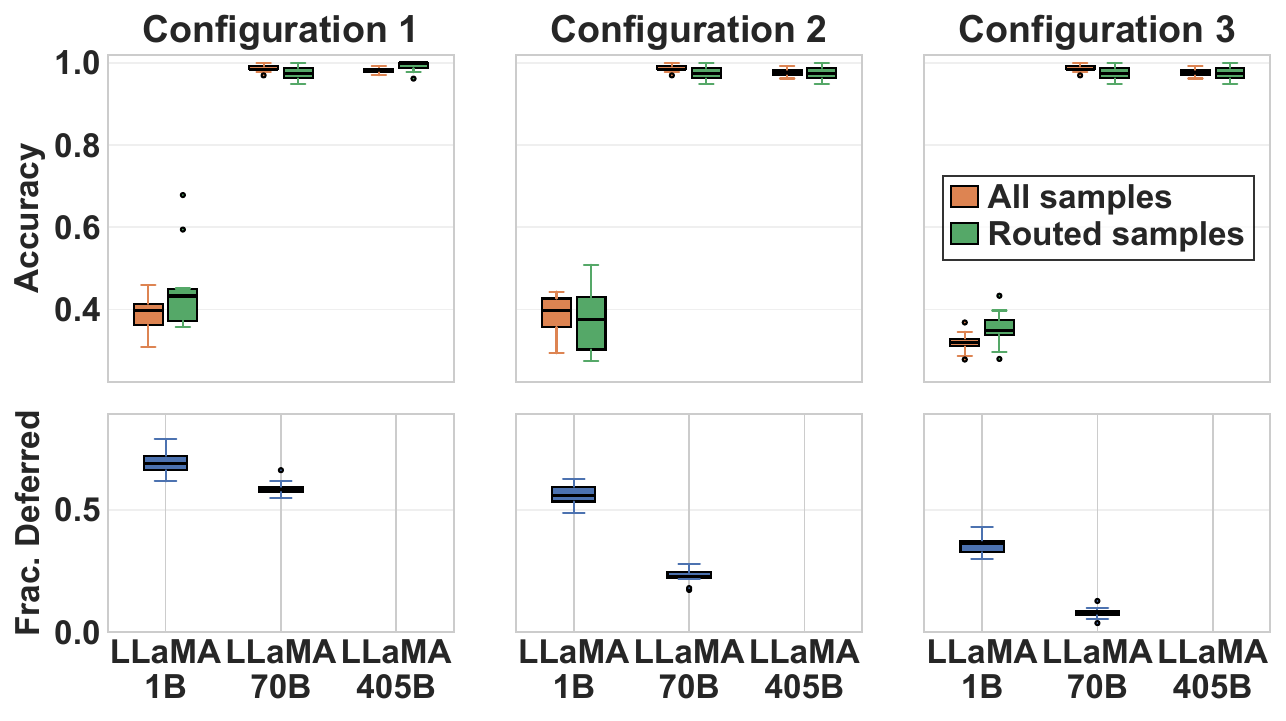}
    \caption{
    Classifier accuracy and deferral fractions of the cascade router with LLaMA classifiers on GSM8K across three cost configurations (costs increasing from left panel to right panel). The top row shows the accuracy of each classifier on all test prompts and on the prompts it answers, and the bottom row shows the fraction of prompts deferred at each classifier. 
    }
    \label{fig:llama_cascade}
\end{figure}

\newpage

\paragraph{Flat router with Qwen classifiers.} We repeat the flat-router experiment of Sec.~\ref{sec:LLM_experiments} with Qwen classifiers on CommonsenseQA and the same three cost configurations, which changes both the model family and the dataset. Fig.~\ref{fig:qwen_router} shows the results. In the bottom row, the share of prompts routed to Qwen 1.5B stays near 30\% in Configurations 1 and 2 and rises to 74\% in Configuration 3, while the share routed to Qwen 32B falls from 38\% to 9\%. In the top row, Qwen 32B is the most accurate classifier on all prompts at 91\%, while Qwen 72B reaches 83\%. In Configuration 1, the prompts routed to Qwen 1.5B are answered correctly 83\% of the time against 76\% over all prompts, so the router keeps easier prompts for the cheapest classifier, as the flat router of Fig.~\ref{fig:cost_comp} does for LLaMA 1B. In Configuration 3, Qwen 1.5B answers most prompts, and its accuracy on them falls to about its accuracy on all prompts. These results suggest that the behavior of our training loss is not specific to one model family.

\begin{figure}[h!]
    \centering
    \includegraphics[width=0.6\linewidth]{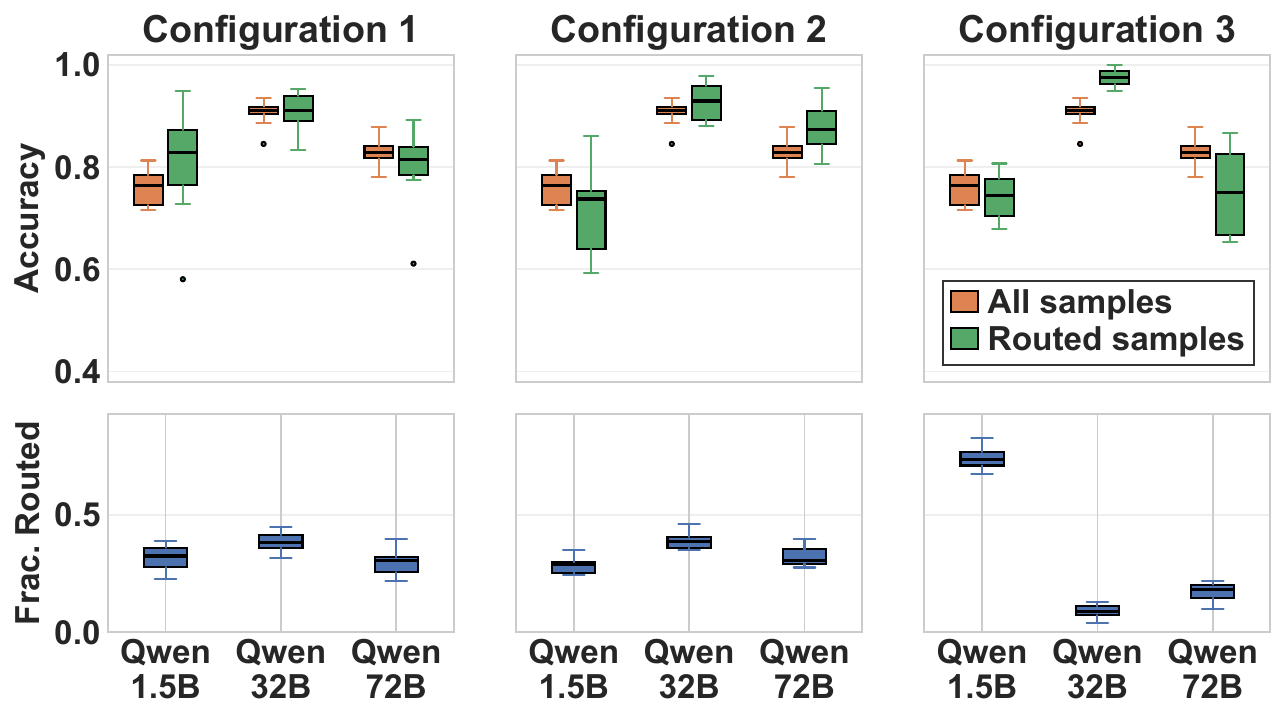}
    \caption{Classifier accuracy and routing fractions of the flat router with Qwen classifiers on CommonsenseQA across three cost configurations (costs increasing from left panel to right panel). The top row shows the accuracy of each classifier on all test prompts and on the prompts routed to it, and the bottom row shows the fraction of prompts routed to each classifier. 
    }
    \label{fig:qwen_router}
\end{figure}

\clearpage

\subsection{Comparison to Supervised LLM Cascade Baselines}\label{app:casc_baselines}

Our work is primarily theoretical, but in LLM multiple-choice question answering it can be compared directly with applied cascade methods. In the cascade router setting, we compare it with FrugalGPT~\citep{chen2024frugalgpt} and TREACLE~\citep{zhang2024treacle} in one configuration, with Qwen classifiers on CommonsenseQA and per-classifier costs $\kappa=[0, 0.05, 0.06]$ in order of model size. We choose these two methods because both learn when to escalate along a cascade of fixed LLMs under a cost budget, which makes them the separate-training counterpart of our setting. FrugalGPT uses a cascade of LLM APIs within a budget constraint~\citep{chen2024frugalgpt}, and TREACLE casts the choice of LLM and prompt as budget-constrained policy optimization~\citep{zhang2024treacle}. Other LLM cascade routing methods do not fit this setup at all for structural reasons. C3PO~\citep{valkanas2025cpo} does not admit direct comparison as it is an unsupervised technique, whereas the methods we compare to all require direct supervision. 
RouteLLM~\citep{ong2025routellm} learns from human preference data to route between exactly two LLMs, so it is a two-model router rather than a cascade of classifiers. RouterDC~\citep{chen2024routerdc} trains a query-only flat router to select one LLM per query (so it does not support a cascade router setup). It keeps the LLMs fixed and does not train classifiers on their outputs or optimize a $0$-$1$-$c$ objective, so it addresses a different problem. 
Our method and Joint CE use the Qwen cascade router, which is replicated for FrugalGPT and TREACLE cascades that query the Qwen models from the cheapest to the most expensive using the same costs.
Both FrugalGPT and TREACLE keep the classifiers fixed and train only the routing rule, while our method trains the classifiers and the router jointly. Note that the asymmetry in optimization objectives is deliberate, as this indicates the value added by joint training versus frozen classifiers. Joint classifier - router training methods are expected to perform better and our results confirm this.

Table~\ref{tab:cost_decomposition} reports three quantities on the test set for each method. The error rate is the fraction of test prompts answered incorrectly, and the average cost is the mean over the test set of the cost term $c_{r(h)}$ in~\eqref{eq:multi_01c}. The test empirical risk is the sum of the error rate and average cost, which is the empirical $0$-$1$-$c$ risk of Section~\ref{sec:loss_func}, the risk our guarantees concern. Joint Cross Entropy (CE) trains the classifiers and the router jointly with the ordinary cross-entropy loss, as described in Appendix~\ref{app:adap_baselines}. All methods use the same train-test split. Our method attains the lowest test empirical risk.

\begin{table}[h]
\centering
\caption{Error rate, average cost and test empirical risk of each method with Qwen classifiers on the test set of CommonsenseQA. FrugalGPT and TREACLE are run with a target cost budget of 0.03, which acts as a soft constraint, so their average cost can exceed it slightly. Our method attains the lowest test empirical risk. Bold indicates statistically significant difference between the two best models via pairwise Wilcoxon test with 10 trials (our method achieves statistical significance at $p<0.05$ level).
}
\vspace{1em}
\begin{tabular}{lccc}
\toprule
\textbf{Method} & \textbf{Error Rate} & \textbf{Avg Cost} & \textbf{Test Empirical Risk} \\
\midrule
Hinge (ours)          & 0.1626 & 0.0344 & \textbf{0.1970} \\
Joint CE              & 0.1870 & 0.0334 & 0.2204 \\
% Regol et al. 2025     & 0.2276 & 0.0317 & 0.2593 \\
FrugalGPT~\citep{chen2024frugalgpt}             & 0.2195 & 0.0351 & 0.2546 \\
TREACLE~\citep{zhang2024treacle}  &    0.1951   &   0.0328   &   0.2279\\
\bottomrule
\end{tabular}

\label{tab:cost_decomposition}
\end{table}
The results show that joint model-router training under our proposed hinge-based loss provides meaningful improvements in empirical risk, while incurring similar cost to the other methods. 

\newpage

\subsection{Comparison to Adaptive Inference Baselines}\label{app:adap_baselines}

Besides the empirical works which we compared to in Appendix~\ref{app:casc_baselines}, one might wonder how some naive baselines (e.g. joint training with cross entropy) or theoretical techniques from the broader literature would perform in the cascade router setting. In this appendix, we set up a simple experiment and show that our proposed loss leads to better convergence and lower overall~empirical~risk.

We compare our proposed loss against adaptive-inference baselines on CommonsenseQA using the Qwen family. Similarly to the other experiments, the LLM weights are frozen and we learn a classifier based on prompt embeddings $ x_i$ and answer embeddings $ z_i$. The learned classifier associated with stage $t$ is $f_t(x,z_t)$. For baselines that use probabilistic predictions, we write
\begin{align}
\hat p_t(k \mid x,z_t),
\qquad k\in[K],
\end{align}
for the class probabilities produced by the corresponding cross-entropy-trained classifier at stage $t$. The baselines are grouped into two categories: separate-training and joint-training baselines.

\paragraph{Separate-training baselines.}
In these baselines, the classifiers $ f_1,f_2,f_3$ are trained independently from the routing or deferral rule.
\begin{itemize}[leftmargin=*]
    \item \textbf{Confidence Threshold (CT):} For each stage $t$, a threshold $\tau_t$ is tuned on a validation set to minimize empirical $0$-$1$-$c$ risk. The method exits at stage $t$ when
    \begin{align}
    \max_{k\in[K]}\hat p_t(k \mid x,z_t) \ge \tau_t,
    \end{align}
    and otherwise defers to the next stage.

    \item \textbf{Confidence Threshold with cost adjustment (CT-c):} This method uses the same thresholding rule as CT, but sets a cost-adjusted threshold using the estimated validation accuracy of each classifier and its cost. We follow the construction of Regol et al. (2025)~\citep{regol2025acquisition}.

    \item \textbf{Soft Deferral:} A probabilistic routing baseline that defers at stage $t$ with probability
    \begin{align}
    1-\max_{k\in[K]}\hat p_t(k \mid x,z_t),
    \end{align}
    so lower-confidence predictions are more likely to be deferred.

    \item \textbf{~\citep{Mozannar2020}:} The standard learning to defer baseline, which learns a classifier and a deferral rule that decides whether to predict or defer to a fixed expert. Its classifier and deferral rule are trained together, while the expert, here the classifier of the most expensive LLM, is trained separately. 
\end{itemize}

\paragraph{Joint-training baselines.}
In these baselines, classifiers and routing decisions are trained jointly.
\begin{itemize}[leftmargin=*]
    \item \textbf{Joint cross-entropy training (Joint CE):} Joint training of the classifiers and the deferral modules with the loss $\sum_{t=1}^T \tr_t(x,z_{1:t})\big(\mathrm{CE}(\tf_t(x,z_t),y)+\sum_{t'\leq t}\kappa_t'\big)$, where $\tr_t(x,z_{1:t})$ is the probability of exiting at stage $t$ and $\sum_{t'\leq t}\kappa_t'$ is the cumulative cost of reaching stage $t$, as in~\eqref{eq:surrogate_hinge_casc}. Here $\mathrm{CE}$ is the cross-entropy of the softmax of the scores $\tf_t(x,z_t)$. This loss replaces each hinge term of~\eqref{eq:surrogate_hinge_casc} by the cross-entropy and keeps the cost term, so the deferral modules are penalized for deferring to expensive classifiers.
    \item \textbf{~\citep{regol2025acquisition}:} Joint training using the two-classifier hinge surrogate proposed by Regol et al. (2025), which corresponds to the two-model version of our proposed hinge loss.
\end{itemize}

Since ~\citet{Mozannar2020} and ~\citet{regol2025acquisition} are formulated for two classifiers, we instantiate them using the cheapest and most expensive LLMs. This difference in the number of classifiers reflects the setting our contribution targets as neither of these two methods extends to more than two classifiers, which is the gap our proposed loss fills. Running our method with $T=2$ would not give a more informative comparison, since for two classifiers the loss of Regol et al. (2025) is a special case of our hinge loss and the two would produce identical results. A difference can therefore only appear for $T>2$, and Appendix~\ref{app:ablation} shows that our loss continues to reduce the risk as $T$ grows to $5$. All other baselines use three LLMs, same as ours. The router is a cascade router, so a sample that exits at stage $t$ pays the cumulative cost $\kappa_1+\dots+\kappa_t$. The per-classifier costs are $\kappa=[0, 0.05, 0.06]$ in order of model size, and every method is evaluated by its test $0$-$1$-$c$ risk under these costs. Fig.~\ref{fig:risk} shows that training with our proposed loss achieves the lowest test-set risk among all baselines. A pairwise Wilcoxon test with 10 trials of our method with the best baseline (at convergence) shows that the difference is statistically significant at the $p<0.05$ level. Additionally, the joint training methods (second column of legend of Fig.~\ref{fig:risk}) outperform the separate-training methods, showing the value of training the classifiers together with the router.

\begin{figure}[h]
    \centering
    \includegraphics[width=0.6\linewidth]{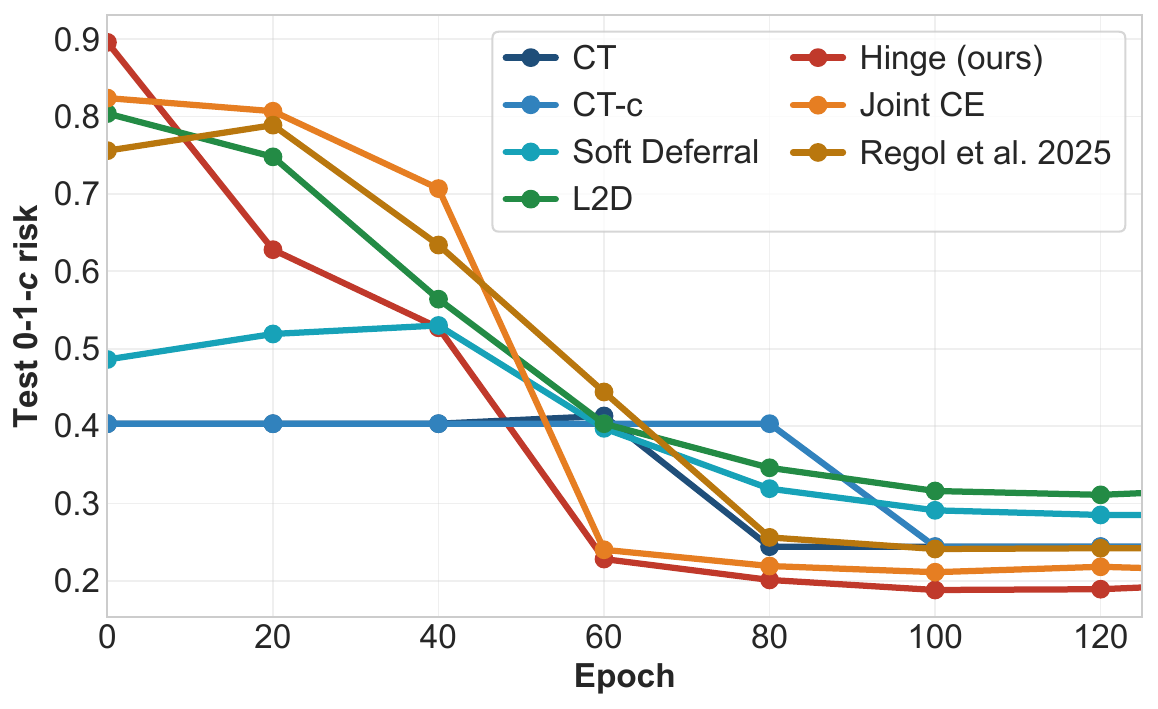}
    \caption{Test $0$-$1$-$c$ risk during training. Our method is compared with the separate-training baselines (CT, CT-c, Soft Deferral and L2D from Mozannar and Sontag 2020) and the joint-training baselines (Joint CE and Regol et al. 2025), and it reaches the lowest risk at the end of training.}
    \label{fig:risk}
\end{figure}

\clearpage

\subsection{Scaling with the Number of Classifiers}\label{app:ablation}

In our experiments, the number of classifiers has so far been fixed at $T=3$. To evaluate whether our approach can scale beyond this setting, we consider up to  $T=5$ classifiers in the CommonsenseQA experiment (see Fig.~\ref{fig:ablation}). The experiment starts from a single-model setting ($T=1$), where no routing rule is required, and progressively increases the number of models to five. We use a cascade router, and the per-classifier costs increase with model size, with $\kappa$ equal to 0 for Qwen-1.5B, 0.02 for LLaMA-8B, 0.05 for Qwen-32B, 0.06 for Qwen-72B and 0.15 for LLaMA-405B. 
The results show a consistent reduction in risk as additional classifiers are added, with diminishing returns beyond three classifiers. This aligns with the intuition that adding more classifiers provides additional operating points along the cost-accuracy trade-off curve.

\begin{figure}[h!]
    \centering
    \includegraphics[width=0.6\linewidth]{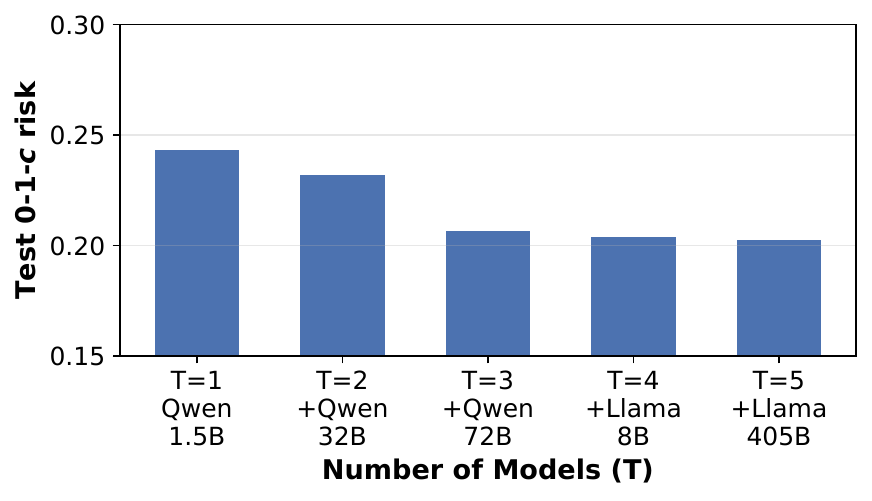}
    \caption{Test $0$-$1$-$c$ empirical risk on CommonsenseQA as a function of the number of  classifiers $T$, jointly-trained using our surrogate hinge loss. Starting from a naive single-model setting with the Qwen-1.5B model ($T=1$), where no routing rule needs to be learned,  we incrementally add more models: Qwen-32B ($T=2$), Qwen-72B ($T=3$), LLaMA-8B ($T=4$), and LLaMA-405B ($T=5$) alongside a trainable routing rule. 
    Risk decreases monotonically from $T=1$ to $T=5$ (with diminishing returns beyond $T=3$), indicating that the proposed framework scales well with the number of models and consistently benefits from additional routing options.
    }
    \label{fig:ablation}
\end{figure}
\clearpage
\newpage
\section{Additional Feature Variables to Represent Varying Model Capacity}\label{app:discussion_Z}
In the main text, we note that the additional feature variables $Z_t$ can represent hidden representations extracted by different architectures with varying modeling capacity. In general, hidden representations produced by larger models are more computationally expensive to obtain, but they are also often more informative. We therefore view these representations as  extra ``predictive signal'' available to the classifier $f_t$ and to the subsequent cascade routing rules. 

This framing can be related to the recently introduced concepts of computationally constrained information~\citep{xu2020theory} and epiplexity~\citep{finzi2026entropyEpiplexity}. These formulations provide a statistical abstraction of computational constraints, in which increased model capacity enables access to additional structured information in the data. 

The standard approach to model classifiers of varying performance is to consider  different hypothesis classes $\mathcal{F}_t, \mathcal{F}_{t+1}$, where a smaller model is constrained to $\mathcal{F}_t$ and a larger model to $\mathcal{F}_{t+1}$. This formulation typically relies on the implicit assumption that the optimal function in $\mathcal{F}_{t}$  is worse than that in  $\mathcal{F}_{t+1}$, w.r.t. some expected loss $\mathcal{L}$: 

\begin{align}
    & \mathcal{L}(f^*_t) \geq  \mathcal{L}(f^*_{t+1}) ,\\
    & \text{where}\quad  f^*_t \in \argmin_{f \in \mathcal{F}_t}  \mathcal{L}(f) \quad \text{and} \quad f^*_{t+1} \in \argmin_{f \in \mathcal{F}_{t+1}}  \mathcal{L}(f).
\end{align}
Instead, we adopt a computationally constrained information-based perspective: we introduce random variables $X$ and $Z_{1:T}$, where $X$ represents the base information accessible to every model, and $(X,Z_{t})$ represents the richer information available to more $t$-th model, which is richer for more powerful ones. As such, we model the random variables $Z_{1:T}$ to be dependent. This modeling choice allows us to abstract away from the particular implementation of the architectures generating these hidden representations. However, for $Z_{1:T}$ to be interpreted as the hidden representations, the generating architectures must remain fixed during training, since the distribution of $Z_{1:T}$ is assumed to remain unchanged.

\newpage
\section{Limitations and Future Work}\label{app:limitation}

In this section we discuss the limitations of our work in more depth. First, while our empirical results suggest that the proposed losses are  effective on real datasets, the main contribution of this work remains theoretical. Our primary goal  is to provide a theoretical formulation of  joint  training and to propose a practical loss that is  Bayes-consistent with respect to the targeted cost-sensitive zero-one loss. In practice, computational constraints can make full joint training infeasible (for instance, because it requires fitting all classifiers into memory), and simpler training schemes could still be preferable. Nevertheless even in these settings, our analysis can serve as a principled starting point for developing more scalable  methods.

A second limitation of our approach is its reliance on the hinge loss. While this loss can be optimized using standard gradient-based techniques, it is less commonly used than smooth surrogate losses such as the logistic loss or the exponential loss. The hinge loss exhibits different optimization properties, such as non-differentiability at zero and flat regions, which can make optimization more challenging in practice. 
We use the hinge loss because it yields a sharp linear transfer rate from the excess surrogate risk to the excess target risk, as our calibration bounds establish with Theorems~\ref{thm:calibration} and~\ref{thm:calibration_casc}. A linear transfer rate is the strongest transfer rate one can hope for~\citep{Frongillo2021}. The most natural family of functions that would preserve this behavior is the class of polyhedral losses, which are piecewise-linear convex losses, of which the multiclass hinge loss is one instance. These losses are known to admit linear transfer rates in standard classification settings~\citep{Frongillo2021, Finocchiaro2024}. We conjecture that our results extend to this broader family.  

Smooth surrogates could also be used and might make optimization easier in practice. The tradeoff is that they generally yield weaker guarantees. It is known that for sufficiently non-polyhedral surrogates  the square-root transfer rate is unavoidable~\citep{Frongillo2021}, so replacing our hinge version with a smooth loss would in general forfeit the linear rate our analysis provides. 
Difficulties in developing a Bayes-consistent cross-entropy surrogate loss for our setting have been discussed by~\citet{regol2025acquisition}, but further work in this direction could improve the practical applicability of the proposed surrogate losses.

Finally, our formulation relies on several assumptions that can be restrictive in some settings. First, we assume that the costs can be known before evaluating the classifiers. This is incompatible with autoregressive applications where the output length can vary, which would generally result in a varying cost that we cannot know in advance. Second, our analysis relies on defining a sequence of random variables $Z_t$ that are progressively made accessible to the classifiers, which explains why costlier classifiers can achieve higher predictive performance.  In the main text, we note that the $Z_t$ can represent hidden representations computed by progressively larger-capacity models. 
This abstraction enables our statistical analysis, but it assumes that the distribution of those $Z_t$ remains fixed during training. As a result, our framework directly applies to adaptive inference settings in which backbone networks generating the hidden representations are fixed during optimization, and only the intermediate classifiers are updated. It does not directly extend to settings where the backbone networks are also fine-tuned, since in that case the $Z_t$ could no longer correspond to the hidden representations, as their distribution would evolve during training. This is a limitation of our current setup, as it does not cover the general adaptive inference setting. Extending the framework to this more general setting is an interesting direction for future work.

\end{document}